\documentclass[10pt,twocolumn,letterpaper]{article}

\newif\ifappendix
\appendixtrue

\usepackage[letterpaper,textwidth=506.295pt,textheight=630pt,centering]{geometry}
\usepackage[T1]{fontenc}
\usepackage[utf8]{inputenc}
\usepackage{libertine}
\usepackage{amsmath}

\usepackage{amssymb}
\usepackage[libertine]{newtxmath}

\usepackage{amsthm}
\usepackage{balance}
\usepackage{graphicx}
\usepackage{algorithm}
\usepackage{algpseudocode}
\usepackage{mathtools}
\providecommand{\Description}[1]{}

\newcommand{\R}{\mathbb{R}}
\newcommand{\E}{\mathbb{E}}
\newcommand{\norm}[1]{\left\lVert #1 \right\rVert}
\newcommand{\inner}[2]{\left\langle #1, #2 \right\rangle}
\newcommand{\gmax}{G}
\newcommand{\Rmax}{R_{\max}}
\newcommand{\GJ}{G_J}
\newcommand{\LJ}{L_J}
\newcommand{\rJ}{\rho_J}
\newcommand{\GF}{G_F}
\newcommand{\LF}{L_F}
\newcommand{\Hs}{\mathsf H}
\newcommand{\bhat}{\hat b}
\newcommand{\shat}{\hat\sigma}

\ifappendix
  \newcommand{\supp}{Appendix~\ref{app:proofs}}
  
  \newcommand{\suppexp}{Appendix~\ref{app:exp}}
  \newcommand{\suppalg}{Appendix~\ref{app:alg}}
  \newcommand{\supplim}{Appendix~\ref{app:limits}}
\else
  \newcommand{\supp}{the supplementary material}
  
  \newcommand{\suppexp}{the supplementary material}
  \newcommand{\suppalg}{the supplementary material}
  \newcommand{\supplim}{the supplementary material} 
\fi

\usepackage{tikz}
\usetikzlibrary{arrows.meta,positioning,calc,shapes.geometric}
\usepackage{booktabs}
\definecolor{srv}{HTML}{0072B2}
\definecolor{agt}{HTML}{E69F00}
\definecolor{ggreen}{HTML}{009E73}
\definecolor{ggray}{HTML}{7F7F7F}
\usepackage{url}
\usepackage[colorlinks=true,allcolors=blue,breaklinks=true]{hyperref}
\theoremstyle{plain}
\newtheorem{theorem}{Theorem}
\newtheorem{lemma}{Lemma}
\newtheorem{proposition}{Proposition}
\newtheorem{corollary}{Corollary}
\theoremstyle{definition}
\newtheorem{definition}{Definition}
\newtheorem{assumption}{Assumption}
\newtheorem{remark}{Remark}
\newtheorem{example}{Example}

\title{\bfseries Personalized Federated Reinforcement Learning via Model-Agnostic Meta-Learning: Convergence of Exact and Hessian-Free Meta-Policy Gradients}
\author{%
Ali Beikmohammadi\thanks{Department of Computer and Systems Sciences, Stockholm University, Stockholm, Sweden. \texttt{beikmohammadi@dsv.su.se}}
\and Sarit Khirirat\thanks{Faculty of Information and Communication Technology, Mahidol University, Nakhon Pathom, Thailand. \texttt{sarit.khi@mahidol.ac.th}}
\and Sindri Magn\'usson\thanks{Department of Computer and Systems Sciences, Stockholm University, Stockholm, Sweden. \texttt{sindri.magnusson@dsv.su.se}}}

\begin{document}
\maketitle

\begin{abstract}
We study personalized federated reinforcement learning, in which $n$ agents, each acting in its own Markov decision process, collaborate through a server to learn a shared MAML-style policy initialization that becomes effective for an individual agent once that agent adapts it with a single local policy-gradient step. We propose \textsc{Per-FedAvg-PG}, in which agents take $\tau$ local stochastic meta-policy-gradient steps between communication rounds, and prove that it reaches an $\varepsilon$-approximate first-order stationary point of the personalized objective in $K=\mathcal O(\varepsilon^{-3/2})$ rounds with $\tau=\Theta(\varepsilon^{-1/2})$ local steps. The analysis rests on a structural feature of the reinforcement learning setting: under standard policy-class regularity, the per-agent objectives have uniformly bounded gradients and Hessians with explicit constants, so the bounded-gradient and bounded-heterogeneity conditions imposed by the supervised theory hold automatically and no separate heterogeneity assumption is needed. The exact meta-gradient requires the inner-loop policy Hessian, which our experiments identify as the practical bottleneck. We therefore analyze the Hessian-free variant, bound its bias, and exhibit fixed points at which the meta-gradient is nonzero and of order $\alpha$, showing that the resulting stationarity floor is a property of the method rather than of the bound. Experiments on tabular and neural navigation confirm the predicted behavior and show transfer to unseen agents at an order of magnitude lower sample cost than independent training. Together these results identify the adaptation step size as a tunable personalization knob and the curvature estimate as the quantity that governs whether exact meta-gradients are affordable.
\end{abstract}

\medskip\noindent\textbf{Keywords:} Federated reinforcement learning; personalization; meta-learning; MAML; policy gradient; convergence analysis\medskip

\section{Introduction}
Many multiagent systems consist of autonomous agents solving \emph{the same kind of task under different circumstances}. A fleet of service robots navigates different buildings; traffic controllers regulate different intersections; edge controllers operate under different network conditions; assistive agents serve different users. Each agent faces its own Markov decision process (MDP); the MDPs share structure but differ in dynamics, reward, or initial conditions. Learning in isolation wastes the population's experience and is sample-inefficient. This matters because reinforcement learning (RL) is data-hungry and each agent's data is generated by real interaction. Federated reinforcement learning (FRL) offers the alternative: agents periodically exchange \emph{parameters} rather than trajectories with a central server, which preserves the locality of raw experience and amortizes sample cost across the population~\cite{lfrl,khodadadian,woo,compfedrl}, in the spirit of the parallel data collection that made distributed deep RL practical~\cite{asyndeepreinforce}.

Under environment heterogeneity, however, FRL inherits a well-documented failure mode. If the population optimizes one shared policy, the result is an average compromise that can be poor for \emph{every} individual agent, and existing guarantees for a single shared policy carry a non-vanishing heterogeneity bias~\cite{jin22,fedsarsa,labbi25,fastfedpg,fedtd}. Because each agent is autonomous and is evaluated on \emph{its own} environment rather than on the population average, it should deploy a policy tailored to itself. This motivates \emph{personalized} FRL, in which the product of collaboration is not a policy but a \emph{starting point} from which each agent quickly obtains such a policy.

In supervised federated learning (FL), the canonical realization of this idea is Per-FedAvg~\cite{perfedavg}, which adopts the Model-Agnostic Meta-Learning (MAML) viewpoint~\cite{maml}: rather than a model that is good on average, clients seek a common \emph{initialization} that becomes good for each client after one or a few local gradient steps. The initialization is trained federatively, but the model each client deploys is its own. Our goal is to carry this construction into RL, with guarantees.

\paragraph{The gap.} The two ingredients have so far been developed separately. Convergence theory for MAML in RL exists only in the \emph{centralized} setting: SG-MRL~\cite{sgmrl} gives the first guarantees when adaptation is a stochastic policy-gradient step, and MAML-LQR~\cite{mamllqr} analyzes the policy-gradient MAML objective for the linear--quadratic regulator. Convergence theory for \emph{federated} RL, meanwhile, has been established for a single non-personalized shared policy~\cite{fednpg,fedsarsa,labbi25,fastfedpg,afedpg}. Personalized FRL guarantees do exist, but for \emph{other} personalization mechanisms: global--local mixing~\cite{zoapfpg,pfopg}, shared representations with local heads~\cite{xiong25}, and per-task policies constrained to a meta-policy~\cite{pmetarl}, the last centralized rather than federated. None of these learns a MAML-style common initialization adapted by a local policy-gradient step, the mechanism that inherits both the Per-FedAvg deployment model and the meta-RL analysis toolbox. We close this gap for full participation.

\paragraph{Three departures from the supervised theory.} 
Replacing a supervised loss $f_i$ by a return $J_i$ is not a renaming exercise. \emph{(i) Distribution shift is intrinsic.} $\nabla J_i(\theta)$ is an expectation over trajectories generated \emph{by the parameter being differentiated}, so the outer batch must be resampled under the \emph{adapted} policy $\tilde\theta$, which is itself random. \emph{(ii) The meta-gradient needs a policy Hessian.} In RL, $\nabla^2 J_i$ has a score-function form whose Monte Carlo estimate is expensive and high-variance in a way with no supervised analogue; Section~\ref{sec:experiments} shows that this, not the outer gradient, is the binding practical constraint. \emph{(iii) RL gives structure back.} Bounded rewards plus policy regularity yield \emph{per-trajectory} bounds on gradients and Hessians, so the bounded-gradient and bounded-dissimilarity conditions assumed in~\cite{perfedavg} become theorems. Item (iii) is what makes the federated analysis clean; items (i)--(ii) are what make it non-trivial.

\paragraph{Contributions.} We formulate personalized FRL as $\max_\theta F(\theta)=\frac1n\sum_i J_i(\theta+\alpha\nabla J_i(\theta))$, the direct RL analogue of the Per-FedAvg objective, and propose \textsc{Per-FedAvg-PG}, in which each agent takes $\tau$ local stochastic meta-policy-gradient steps per communication round (Sections~\ref{sec:setup}--\ref{sec:algo}). On the theory side we prove uniform bounds, with explicit constants, on the per-agent gradients and Hessians (Lemma~\ref{lem:pg-bounds}), so that no bounded-gradient or bounded-hetero\-geneity assumption is needed (Remark~\ref{rem:free}); we prove that \textsc{Per-FedAvg-PG} reaches an $\varepsilon$-approximate first-order stationary point in $K=\mathcal O(\varepsilon^{-3/2})$ rounds with $\tau=\Theta(\varepsilon^{-1/2})$ local steps (Theorem~\ref{thm:main}, Corollary~\ref{cor:rates}), matching the round complexity known for Per-FedAvg; we analyze the first-order (Hessian-free) variant, bounding its bias and showing that the resulting neighborhood is real rather than an analysis artifact, because fixed points of the first-order dynamics carry a residual meta-gradient of order $\alpha$ (Corollary~\ref{cor:fo}, Proposition~\ref{prop:fofix}); and we sharpen the inner-loop bias by a full $\sqrt{m_{\mathrm{in}}}$ factor, improving per-agent trajectory complexity to $\mathcal O(\varepsilon^{-5/2})$ (Proposition~\ref{prop:sharpbias}), with $\alpha\to0$ recovering non-personalized federated policy optimization (Proposition~\ref{prop:alpha0}). Empirically (Section~\ref{sec:experiments}) we validate the theory on tabular navigation, where $F$ and $\nabla F$ are computed exactly by dynamic programming, and on neural navigation, which exposes the curvature bottleneck and demonstrates zero-shot transfer plus few-shot personalization to agents that never participated in training.

\begin{table*}[t]
\centering\footnotesize
\setlength{\tabcolsep}{4pt}
\caption{Position of this paper among the most closely related works. ``$\varepsilon$-FOSP'' abbreviates convergence to an $\varepsilon$-approximate first-order stationary point. 
Unlike Per-FedAvg, our guarantees need no bounded-gradient or bounded-heterogeneity assumption: both are theorems in the policy-gradient setting (Lemma~\ref{lem:pg-bounds}, Remark~\ref{rem:free}).}
\label{tab:related}
\begin{tabular}{@{}lccll@{}}
\toprule
 & federated & RL & personalization mechanism & convergence guarantee \\
\midrule
Per-FedAvg~\cite{perfedavg} & \checkmark & --- & MAML init.\ (1 step) & $\varepsilon$-FOSP, $K=\mathcal O(\varepsilon^{-3/2})$ \\
SG-MRL~\cite{sgmrl} & --- & \checkmark & MAML init.\ ($\ge1$ steps) & $\varepsilon$-FOSP, finite-sample \\
MAML-LQR~\cite{mamllqr} & --- & LQR & MAML init.\ (PG) & stabilizing, up to heterogeneity bias \\
FedNPG~\cite{fednpg} & \checkmark & \checkmark & none (shared policy) & global opt.\ (tabular, entropy-reg.) \\
FedSARSA / FedPG~\cite{fedsarsa,labbi25,fastfedpg} & \checkmark & \checkmark & none (shared policy) & finite-time, up to heterogeneity gap \\
Shared representations~\cite{xiong25} & \checkmark & \checkmark & shared repr.\ + local heads & finite-time, linear speedup \\
ZO-APFPG~\cite{zoapfpg}, PfoPG~\cite{pfopg} & \checkmark & \checkmark & global--local mixing & stationarity (zeroth-/first-order) \\
pMeta-RL~\cite{pmetarl} & --- & \checkmark & meta-constrained per-task policies & tabular convergence \\
\textbf{This paper} & \checkmark & \checkmark & MAML init.\ (1 PG step) & $\varepsilon$-FOSP, $K=\mathcal O(\varepsilon^{-3/2})$; + Hessian-free variant \\
\bottomrule
\end{tabular}
\end{table*}

\paragraph{Related work.}
\textbf{Personalized FL.} Per-FedAvg~\cite{perfedavg} introduced the MAML formulation and proved $\mathcal O(\varepsilon^{-3/2})$-round convergence for nonconvex losses under uniformly bounded gradients and bounded gradient dissimilarity. Follow-up work relaxes these conditions: centralized MAML is analyzed without bounded gradients via an unbounded smoothness parameter and adaptive step sizes~\cite{aistatsmaml}; Moreau-envelope formulations are studied in~\cite{moreau,pfedme}; asynchronous personalized FL is treated in~\cite{persafl}; Per-FedAvg is generalized to $\nu$ adaptation steps with exact, first-order and Hessian-free analyses~\cite{jamali}, complementing the multi-step centralized theory of~\cite{jimulti}; and personalization through a shared representation is studied in~\cite{collins}. Client drift under local updates and its control by server-side corrections are studied in~\cite{fedavg,scaffold}.
\textbf{Meta-RL.} SG-MRL~\cite{sgmrl} gives the first convergence guarantees for stochastic-gradient MAML-RL in the centralized setting, and MAML-LQR~\cite{mamllqr} analyzes the LQR case.
\textbf{FRL.} For a single shared policy, linear speedups are known for value-based methods~\cite{khodadadian,woo}, and finite-time guarantees under environment heterogeneity for on-policy temporal-difference learning~\cite{fedsarsa}, softmax policy gradient~\cite{labbi25}, tabular policy averaging~\cite{jin22}, asynchronous policy gradient~\cite{afedpg}, and bias-corrected federated policy gradient~\cite{fastfedpg}; federated natural policy gradient is analyzed in~\cite{fednpg}, which lists personalization as an open direction.
\textbf{Personalized FRL.} Closest to us are analyses built on different personalization mechanisms: global--local mixing with zeroth- or first-order gradients~\cite{zoapfpg,pfopg}, shared representations with personalized heads~\cite{xiong25}, and per-task policies constrained to a meta-policy~\cite{pmetarl}. The last aggregates parameters without sharing trajectories, but is posed as centralized multi-task meta-RL. None covers the MAML-initialization mechanism we study (Table~\ref{tab:related}). Methodologically, we combine the local-update template of~\cite{perfedavg} with the policy-gradient regularity toolbox of~\cite{sgmrl,shen,papini}; the resulting proof is self-contained and, in places, simpler than either ingredient, precisely because the boundedness the supervised theory must assume holds automatically here.

\section{Problem setup}\label{sec:setup}

\emph{All proofs are given in \supp.}

\subsection{Per-agent MDPs and policy gradients}

Consider a network of $n$ agents coordinated by a central server. Agent $i\in[n]:=\{1,\dots,n\}$ interacts with a finite\footnote{Finiteness of $\mathcal S_i,\mathcal A_i$ makes every interchange of differentiation and expectation elementary, all expectations being finite sums; the results extend to general spaces under standard domination conditions.} MDP $\mathcal M_i=(\mathcal S_i,\mathcal A_i,\mu_i,P_i,r_i)$ with initial distribution $\mu_i$, transition kernel $P_i$, reward $r_i:\mathcal S_i\times\mathcal A_i\to\R$, fixed horizon $H\in\mathbb N_0=\{0,1,2,\dots\}$, and discount $\gamma\in[0,1)$. Agents may differ in \emph{any} of $\mu_i,P_i,r_i$; nothing below requires their MDPs to be close. All agents share a differentiable policy class $\pi(\cdot\,|\,\cdot\,;\theta)$, $\theta\in\R^d$; this shared interface is what makes a common initialization meaningful. A trajectory of $\mathcal M_i$ is $\xi=(s_0,a_0,\dots,s_H,a_H)$ with law
\begin{equation}\label{eq:trajlaw}
q_i(\xi;\theta):=\mu_i(s_0)\prod_{h=0}^{H}\pi(a_h|s_h;\theta)\!\!\prod_{h=0}^{H-1}\!\!P_i(s_{h+1}|s_h,a_h).
\end{equation}
Agent $i$'s expected return is $J_i(\theta):=\E_{\xi\sim q_i(\cdot;\theta)}[R_i(\xi)]$, with $R_i(\xi):=R_i^0(\xi)$ and $R_i^h(\xi):=\sum_{t=h}^{H}\gamma^t r_i(s_t,a_t)$. By the policy gradient theorem (finite-horizon, reward-to-go form; see~\cite{sutton,sgmrl}),
\begin{equation}\label{eq:pg}
\begin{gathered}
\nabla J_i(\theta)=\E_{\xi\sim q_i(\cdot;\theta)}\big[g_i(\xi;\theta)\big],\\
g_i(\xi;\theta):=\sum_{h=0}^{H}\nabla_\theta\log\pi(a_h|s_h;\theta)\,R_i^h(\xi),
\end{gathered}
\end{equation}
and, with $\nu_i(\xi;\theta):=\sum_{h=0}^{H}\log\pi(a_h|s_h;\theta)\,R_i^h(\xi)$ (so that $g_i=\nabla\nu_i$), the Hessian admits the representation~\cite[Eq.~(8)]{sgmrl},~\cite{shen}
\begin{equation}\label{eq:hess}
\begin{gathered}
\nabla^2 J_i(\theta)=\E_{\xi\sim q_i(\cdot;\theta)}\big[u_i(\xi;\theta)\big],\\
u_i(\xi;\theta):=\nabla\nu_i(\xi;\theta)\,\nabla_\theta\log q_i(\xi;\theta)^{\!\top}+\nabla^2\nu_i(\xi;\theta).
\end{gathered}
\end{equation}
Note from~\eqref{eq:trajlaw} that $\nabla_\theta\log q_i(\xi;\theta)=\sum_{h=0}^H\nabla_\theta\log\pi(a_h|s_h;\theta)$, since $\mu_i,P_i$ do not depend on $\theta$. Equation~\eqref{eq:hess} is the source of the curvature cost we analyze: it is a \emph{score-function} Hessian, so its Monte Carlo estimate involves an outer product of sums of $H+1$ score terms.

\subsection{Assumptions}

Throughout, $\norm{\cdot}$ is the Euclidean norm for vectors and the spectral norm for matrices; $\norm{\cdot}_F$ is the Frobenius norm (recall $\norm{A}\le\norm{A}_F$).

\begin{assumption}[Bounded rewards]\label{as:reward}
$|r_i(s,a)|\le \Rmax$ for all $i\in[n]$, $s\in\mathcal S_i$, $a\in\mathcal A_i$.
\end{assumption}

\begin{assumption}[Policy regularity]\label{as:policy}
$\pi(a|s;\theta)>0$ for all $(s,a,\theta)$, $\theta\mapsto\log\pi(a|s;\theta)$ is twice continuously differentiable, and for all $(s,a)$ and all $\theta,\theta'\in\R^d$:
\begin{enumerate}\itemsep0pt\parsep0pt
\item[(a)] $\norm{\nabla_\theta\log\pi(a|s;\theta)}\le \gmax$;
\item[(b)] $\norm{\nabla^2_\theta\log\pi(a|s;\theta)}_F\le M$;
\item[(c)] $\norm{\nabla^2_\theta\log\pi(a|s;\theta)-\nabla^2_\theta\log\pi(a|s;\theta')}_F\le \Lambda\norm{\theta-\theta'}$.
\end{enumerate}
\end{assumption}

Assumption~\ref{as:policy} is the standard regularity condition in the policy-gradient literature~\cite{papini,shen,sgmrl}. It is a condition on the \emph{policy class alone}: it constrains neither the number of agents nor how different their MDPs are, thus implying that the resulting constants are uniform over the population. Parts (b)--(c) are stated in Frobenius norm to obtain dimension-free constants; replacing $\norm{\cdot}_F$ by $\norm{\cdot}$ costs at most a $\sqrt d$ factor in the variance constants. Assumption~\ref{as:policy}(c) is used only for the third-order constant $\rJ$ (Lemma~\ref{lem:pg-bounds}(iv)), hence only for meta-smoothness and the sharpened bias bound.

\begin{example}[Log-linear policies]\label{ex:softmax}
For $\pi(a|s;\theta)\propto\exp(\theta^\top\phi(s,a))$ with $\norm{\phi(s,a)}\le B$, we have $\nabla_\theta\log\pi=\phi(s,a)-\E_{a'\sim\pi}\phi(s,a')$ and $\nabla^2_\theta\log\pi=-\mathrm{Cov}_{a'\sim\pi}(\phi(s,a'))$, so Assumption~\ref{as:policy} holds \emph{globally} with $\gmax\le2B$, $M\le4B^2$, $\Lambda\le8B^3$. The tabular softmax policy used in Section~\ref{sec:experiments} is the one-hot case $B=1$, so the constants there are $\gmax\le2$, $M\le4$, $\Lambda\le8$. Gaussian policies with bounded action set and bounded mean parametrization satisfy the assumption similarly. For deep policies the bounds hold on bounded parameter sets rather than all of $\R^d$; this is the standard caveat in this literature, and it is why we treat our neural experiments as tests of the qualitative predictions rather than of the constants.
\end{example}

Throughout this paper, we denote
\begin{equation}\label{eq:constants}
\begin{gathered}
\bar R:=\tfrac{\Rmax}{1-\gamma},\quad
\GJ:=\tfrac{\gmax \Rmax}{(1-\gamma)^2},\quad
\LJ:=\tfrac{\Rmax((H+1)\gmax^2+M)}{(1-\gamma)^2},\\
\bar M:=\tfrac{M\Rmax}{(1-\gamma)^2},\qquad
\bar\Lambda:=\tfrac{\Lambda\Rmax}{(1-\gamma)^2},\\
\rJ:=(H+1)\gmax(\LJ+\bar M)+(H+1)M\GJ+\bar\Lambda .
\end{gathered}
\end{equation}

\subsection{Regularity of the per-agent objectives}
Under Assumptions~\ref{as:reward} and~\ref{as:policy}, Lemma~\ref{lem:pg-bounds} establishes uniform bounds on the rewards, on each agent's expected return, and on its gradient and Hessian. Crucially these bounds hold \emph{per trajectory} rather than merely in expectation, which is what supplies the regularity needed for the Monte Carlo estimators downstream without any additional heterogeneity assumption.

\begin{lemma}[Uniform policy-gradient regularity]\label{lem:pg-bounds}
Let Assumptions~\ref{as:reward} and~\ref{as:policy}(a)--(b) hold. Then for every $i\in[n]$, $\theta\in\R^d$ and every trajectory $\xi$:
(i) $|J_i(\theta)|\le\bar R$ and $|R_i^h(\xi)|\le \Rmax\gamma^h/(1-\gamma)$ for $0\le h\le H$;
(ii) $\norm{g_i(\xi;\theta)}\le\GJ$, hence $\norm{\nabla J_i(\theta)}\le\GJ$;
(iii) $\norm{u_i(\xi;\theta)}_F\le\LJ$, hence $\norm{\nabla^2 J_i(\theta)}\le\LJ$ and $\nabla J_i$ is $\LJ$-Lipschitz.
If additionally Assumption~\ref{as:policy}(c) holds, then
(iv) $\nabla^2 J_i$ is $\rJ$-Lipschitz: $\norm{\nabla^2J_i(\theta)-\nabla^2J_i(\theta')}\le\rJ\norm{\theta-\theta'}$ for all $\theta,\theta'$.
\end{lemma}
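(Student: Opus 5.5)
The plan is to derive every bound pathwise from the reward-to-go bound in (i), and then pass to expectations. Since the MDPs are finite, every expectation is a finite sum, so moving norms and derivatives inside the expectation needs no justification. For (i), I will write $|R_i^h(\xi)|\le\sum_{t=h}^H\gamma^t\Rmax\le\Rmax\gamma^h/(1-\gamma)$. Taking $h=0$ and then expectations gives $|J_i|\le\bar R$.

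For (ii), I will apply the triangle inequality to \eqref{eq:pg} together with Assumption~\ref{as:policy}(a):
\[
\norm{g_i}\le\sum_h\gmax\Rmax\gamma^h/(1-\gamma)\le\GJ .
\]
Jensen's inequality then gives $\norm{\nabla J_i}\le\GJ$.

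For (iii), I will set $s(\xi;\theta):=\nabla_\theta\log q_i(\xi;\theta)=\sum_h\nabla\log\pi(a_h|s_h;\theta)$, so that $\norm{s}\le(H+1)\gmax$. Using $\norm{xy^\top}_F=\norm{x}\norm{y}$, the outer-product term of $u_i$ is bounded by $\GJ(H+1)\gmax$. The term $\nabla^2\nu_i=\sum_h\nabla^2\log\pi\,R_i^h$ is bounded in Frobenius norm by $M\Rmax/(1-\gamma)^2=\bar M$, using (b) and (i). The sum of these two bounds is exactly $\LJ$. The spectral norm is at most the Frobenius norm, and Jensen's inequality then bounds $\norm{\nabla^2J_i}\le\LJ$. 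The Lipschitz property of $\nabla J_i$ follows from the mean value theorem in integral form, $\nabla J_i(\theta)-\nabla J_i(\theta')=\int_0^1\nabla^2J_i(\theta'+t(\theta-\theta'))(\theta-\theta')\,dt$.

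For (iv), I will split the difference $\nabla^2J_i(\theta)-\nabla^2J_i(\theta')=\sum_\xi q_i(\xi;\theta)u_i(\xi;\theta)-\sum_\xi q_i(\xi;\theta')u_i(\xi;\theta')$ into two pieces:
\[
\textstyle\sum_\xi\big(q_i(\xi;\theta)-q_i(\xi;\theta')\big)u_i(\xi;\theta)\;+\;\E_{\xi\sim q_i(\cdot;\theta')}\big[u_i(\xi;\theta)-u_i(\xi;\theta')\big].
\]
\emph{Integrand-variation term (the second piece).} First, $g_i$ is $\bar M$-Lipschitz in $\theta$ because $\nabla_\theta g_i=\nabla^2\nu_i$ is bounded by $\bar M$, and $s$ is $(H+1)M$-Lipschitz by (b). Next, I write $gs^\top-g's'^\top=(g-g')s^\top+g'(s-s')^\top$. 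This gives the bound $\big(\bar M(H+1)\gmax+\GJ(H+1)M\big)\norm{\theta-\theta'}$ for the outer-product part. For the remaining part, (c) together with (i) gives $\norm{\nabla^2\nu_i(\theta)-\nabla^2\nu_i(\theta')}_F\le\bar\Lambda\norm{\theta-\theta'}$.

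\emph{Measure-change term (the first piece).} I will freeze $\theta_0:=\theta$ in the integrand and study $\phi(t):=\sum_\xi q_i(\xi;\theta_t)u_i(\xi;\theta_0)$ along the segment $\theta_t=\theta'+t(\theta-\theta')$. Its derivative is
\[
\phi'(t)=\E_{\xi\sim q_i(\cdot;\theta_t)}\big[u_i(\xi;\theta_0)\,\inner{s(\xi;\theta_t)}{\theta-\theta'}\big],
\]
so its Frobenius norm is at most $\LJ(H+1)\gmax\norm{\theta-\theta'}$. Adding the three contributions reproduces $\rJ=(H+1)\gmax(\LJ+\bar M)+(H+1)M\GJ+\bar\Lambda$ exactly.

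I expect the measure-change term in (iv) to be the main obstacle. It must be handled via the score-function derivative of the finite sum rather than pathwise, and it is the only place where the per-trajectory bound on $u_i$ from (iii) has to be combined with the score bound.
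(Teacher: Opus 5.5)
Your proposal is correct and follows the paper's proof essentially step for step. It uses the same pathwise reward-to-go and score bounds for (i)--(iii). For (iv) it uses the same split at the mixed term $\sum_\xi q_i(\xi;\theta')u_i(\xi;\theta)$ into a distribution-shift piece and a curvature-change piece, with the same three-term Lipschitz bound on $u_i$ that never requires a third derivative of $\log\pi$.

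The only difference is cosmetic. You bound the distribution-shift piece by differentiating the frozen-integrand sum $\phi(t)$. The paper first bounds $\sum_\xi|q_i(\xi;\theta)-q_i(\xi;\theta')|\le(H+1)\gmax\norm{\theta-\theta'}$ and then multiplies by $\sup\norm{u_i}_F\le\LJ$. Both rest on the same log-derivative identity and give the same constant. One small fix: your frozen point $\theta_0:=\theta$ clashes with $\theta_t|_{t=0}=\theta'$, so rename it.
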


\begin{proof}[Proof sketch]
(i) is a geometric-series bound on the discounted reward-to-go. (ii) follows by bounding each of the $H+1$ score terms in~\eqref{eq:pg} by $\gmax$ and summing $\gamma^h$. (iii) splits $u_i$ in~\eqref{eq:hess} into a rank-one term, whose Frobenius norm is $\norm{g_i}\norm{\nabla\log q_i}\le\GJ(H+1)\gmax$, and $\nabla^2\nu_i$, bounded by $\bar M$. (iv) bounds the \emph{difference} $\nabla^2J_i(\theta_1)-\nabla^2J_i(\theta_0)$ of the finite sums $\sum_\xi q_i(\xi;\theta)u_i(\xi;\theta)$ directly, rather than differentiating in $\theta$, which would require a third derivative of $\log\pi$ that Assumption~\ref{as:policy} does not supply; inserting a mixed term splits it into a \emph{distribution-shift} part (the weights $q_i$ move) and a \emph{curvature-change} part (the summands $u_i$ move), bounded using (iii) and Assumption~\ref{as:policy}(b)--(c) respectively. Full proof in \supp.
\end{proof}

\begin{remark}[Conditions that hold automatically in the policy-gradient setting]\label{rem:free}
Uniform boundedness of the per-agent gradients, and of any gradient-dissimilarity measure, is part of Lemma~\ref{lem:pg-bounds} in the personalized federated policy-gradient setting. This is in stark contrast to personalized FL, where both are additional assumptions. By Lemma~\ref{lem:pg-bounds}(ii) we have the explicit uniform bound$\sup_\theta\norm{\nabla J_i(\theta)}\le \GJ$. In supervised personalized FL this uniform boundedness is an \emph{assumption} (Assumption~2 of~\cite{perfedavg}) that has attracted criticism and follow-up work~\cite{aistatsmaml,persafl,jamali}; here it is a theorem. Likewise any gradient-dissimilarity quantity, such as
\[
\tfrac1n\textstyle\sum_i\big\lVert\nabla J_i(\theta)-\tfrac1n\sum_j\nabla J_j(\theta)\big\rVert^2\le \GJ^2,
\]
is automatically bounded, so no heterogeneity assumption is needed; the constant is $\GJ^2$ rather than the $4\GJ^2$ given by the triangle inequality, by the same variance identity that gives Lemma~\ref{lem:drift}(ii). Two caveats apply. First, these constants are worst case: they scale as $(1-\gamma)^{-2}$ and, for $\LJ$ and $\rJ$, linearly in $H$, so they are informative about \emph{structure} rather than tight. Second, the bound $\GJ^2$ does not \emph{exploit} task similarity; a genuinely small dissimilarity would sharpen constants but is not needed for the rate.
\end{remark}

\section{Objective and algorithm}\label{sec:algo}

\subsection{The meta-objective}
We formalize personalized FRL through a MAML-style meta-objective: agents collaborate to choose a shared initialization, which each then adapts with one local policy-gradient step. Fix an adaptation step size $\alpha>0$. Agent $i$'s \emph{adapted} (personalized) policy from initialization $\theta$ is $\theta_i^+(\theta):=\theta+\alpha\nabla J_i(\theta)$, i.e.\ one exact policy-gradient ascent step on its own MDP. The personalized FRL problem is
\begin{equation}\label{eq:meta}
\max_{\theta\in\R^d}\; F(\theta):=\frac1n\sum_{i=1}^n F_i(\theta),\qquad F_i(\theta):=J_i\big(\theta_i^{+}(\theta)\big).
\end{equation}
This is the exact RL analogue of the Per-FedAvg objective~\cite{perfedavg}: the ``deterministic-adaptation'' MAML objective, as opposed to the ``stochastic-adaptation'' objective of SG-MRL~\cite{sgmrl}, in which the expectation over the adaptation batch is part of the objective (Remark~\ref{rem:sgmrl}). 
We restrict attention to a single adaptation step throughout. This is the Per-FedAvg contract and the regime in which MAML is typically deployed, since adaptation happens on device under a small local budget, and it is what keeps the meta-gradient a single curvature-times-gradient product rather than a product of $\nu$ Jacobians, which in turn is what makes explicit constants and an exact characterization of the first-order bias possible. The extension to $\nu$ steps is mechanical, but the regularity constants compound as $(1+\alpha\LJ)^{\Theta(\nu)}$~\cite{jimulti,jamali}, so the bounds grow heavier without anything qualitative changing, and they remain $\mathcal O(1)$ in $\nu$ only when $\alpha=\Theta(1/(\nu\LJ))$.
Since $F$ is in general nonconcave, we target approximate first-order stationarity.

\begin{definition}[$\varepsilon$-FOSP]\label{def:fosp}
A random vector $\hat\theta$ is an $\varepsilon$-approximate first-order stationary point of~\eqref{eq:meta} if $\E\norm{\nabla F(\hat\theta)}^2\le\varepsilon$.
\end{definition}

By the chain rule (all derivatives exist by Lemma~\ref{lem:pg-bounds}),
\begin{equation}\label{eq:metagrad}
\nabla F_i(\theta)=\big(I+\alpha\nabla^2J_i(\theta)\big)\,\nabla J_i\big(\theta_i^+(\theta)\big).
\end{equation}

\begin{lemma}[Regularity of the meta-objective]\label{lem:meta-reg}
Let Assumptions~\ref{as:reward}--\ref{as:policy} hold and $\alpha>0$. Then for all $i$ and all $\theta$:
(i) $|F_i(\theta)|\le\bar R$; in particular $F^\star:=\sup_\theta F(\theta)\le\bar R<\infty$ and $\Delta:=F^\star-F(\theta^0)\le2\bar R$;
(ii) $\norm{\nabla F_i(\theta)}\le \GF:=(1+\alpha\LJ)\GJ$;
(iii) $\nabla F_i$ is $\LF$-Lipschitz with $\LF:=(1+\alpha\LJ)^2\LJ+\alpha\rJ\GJ$; hence so is $\nabla F$.
\end{lemma}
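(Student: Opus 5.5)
The proof rests entirely on Lemma~\ref{lem:pg-bounds} together with the chain-rule formula~\eqref{eq:metagrad}, and takes the three items in order.

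For (i), $F_i(\theta)=J_i(\theta_i^+(\theta))$ is a value of $J_i$, so Lemma~\ref{lem:pg-bounds}(i) gives $|F_i(\theta)|\le\bar R$. Averaging over $i$ gives $|F|\le\bar R$. Hence $F^\star\le\bar R$, and $\Delta=F^\star-F(\theta^0)\le\bar R+\bar R=2\bar R$.

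For (ii), I would bound the two factors in~\eqref{eq:metagrad} separately. Lemma~\ref{lem:pg-bounds}(iii) gives $\norm{\nabla^2J_i}\le\LJ$, so $\norm{I+\alpha\nabla^2J_i(\theta)}\le1+\alpha\LJ$. Lemma~\ref{lem:pg-bounds}(ii) gives $\norm{\nabla J_i(\theta_i^+)}\le\GJ$. Submultiplicativity of the spectral norm then yields $\GF=(1+\alpha\LJ)\GJ$.

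For (iii), fix $\theta,\theta'$. I write $A=I+\alpha\nabla^2J_i(\theta)$, $b=\nabla J_i(\theta_i^+(\theta))$, and $A',b'$ for the same quantities at $\theta'$. I then use the standard split
\[
Ab-A'b'=A(b-b')+(A-A')b'.
\]
\begin{itemize}\itemsep0pt
\item The map $\theta\mapsto\theta_i^+(\theta)$ has Jacobian $I+\alpha\nabla^2J_i$, so it is $(1+\alpha\LJ)$-Lipschitz. Together with $\LJ$-Lipschitzness of $\nabla J_i$ (Lemma~\ref{lem:pg-bounds}(iii)), this gives $\norm{b-b'}\le\LJ(1+\alpha\LJ)\norm{\theta-\theta'}$.
\item Multiplying by $\norm{A}\le1+\alpha\LJ$, the first term is at most $(1+\alpha\LJ)^2\LJ\norm{\theta-\theta'}$.
\item For the second term, Lemma~\ref{lem:pg-bounds}(iv) gives $\norm{A-A'}=\alpha\norm{\nabla^2J_i(\theta)-\nabla^2J_i(\theta')}\le\alpha\rJ\norm{\theta-\theta'}$. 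Together with $\norm{b'}\le\GJ$, this term is at most $\alpha\rJ\GJ\norm{\theta-\theta'}$.
\end{itemize}
Summing the two terms gives exactly $\LF$. Since $\nabla F$ is an average of $\LF$-Lipschitz maps, it is $\LF$-Lipschitz by the triangle inequality.

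The only mildly delicate point is the Lipschitz bound on $\theta_i^+$. Applying the mean-value inequality to a vector-valued map requires its integral form, $\theta_i^+(\theta)-\theta_i^+(\theta')=\int_0^1(I+\alpha\nabla^2J_i(\theta'+t(\theta-\theta')))(\theta-\theta')\,dt$. This is valid because $J_i$ is $C^2$: it is a finite sum of products of $C^2$ functions, by the finiteness footnote and Assumption~\ref{as:policy}. Everything else is a direct invocation of Lemma~\ref{lem:pg-bounds}.
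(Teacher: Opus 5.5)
Your proof is correct and follows the paper's argument essentially step for step. You get (i) from Lemma~\ref{lem:pg-bounds}(i) at the adapted point, (ii) by submultiplicativity with $\norm{I+\alpha\nabla^2J_i}\le1+\alpha\LJ$, and (iii) from the same split $A(b-b')+(A-A')b'$ together with a $(1+\alpha\LJ)$-Lipschitz adaptation map; the one difference is that the paper gets that Lipschitz bound directly from the triangle inequality, $\norm{\theta_i^+(\theta)-\theta_i^+(\theta')}\le\norm{\theta-\theta'}+\alpha\norm{\nabla J_i(\theta)-\nabla J_i(\theta')}$, so the integral-form mean-value argument you flag as delicate is not needed (the paper mentions it only as an equivalent alternative).
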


The constant $\LF$ has the same form as the Per-FedAvg smoothness constant, with the loss-function constants there replaced by their policy-gradient counterparts $\LJ,\GJ,\rJ$; for $\alpha\le1/\LJ$ it reduces to $\mathcal O(\LJ+\alpha\rJ\GJ)$. Item (i) has a further consequence: in RL the meta-objective is automatically bounded above, so the optimality gap in Theorem~\ref{thm:main} is finite without further assumption, and bounded by an explicit problem constant.

\begin{remark}[Stationarity certifies per-agent adapted policies]\label{rem:interp}
By~\eqref{eq:metagrad} and $\norm{\nabla^2J_i(\theta)}\le\LJ$, the symmetric matrix $I+\alpha\nabla^2J_i(\theta)$ has eigenvalues in $[1-\alpha\LJ,1+\alpha\LJ]$; for $\alpha\le 1/(2\LJ)$ these lie in $[\tfrac12,\tfrac32]$, so it is invertible with $\norm{(I+\alpha\nabla^2J_i(\theta))^{-1}}\le(1-\alpha\LJ)^{-1}\le 2$, whence $\norm{\nabla J_i(\theta_i^+(\theta))}\le 2\norm{\nabla F_i(\theta)}$. A small $\norm{\nabla F_i(\theta)}$ therefore certifies that agent $i$'s \emph{adapted} policy is nearly stationary for its own MDP: a per-agent statement, not merely a population one. Our guarantee controls the average $\nabla F=\frac1n\sum_i\nabla F_i$, exactly as in~\cite{perfedavg,sgmrl}.
\end{remark}

\begin{remark}[Relation to the SG-MRL objective]\label{rem:sgmrl}
We adapt with the \emph{deterministic} gradient step $\theta+\alpha\nabla J_i(\theta)$, following Per-FedAvg~\cite{perfedavg}. The debiased stochastic-adaptation objective of SG-MRL~\cite{sgmrl}, which makes the sampled inner gradient part of the objective, is compatible with our analysis and would replace the inner-loop bias term studied here by zero; we keep the deterministic-adaptation objective for directness and because it is the one whose bias we can characterize sharply (Proposition~\ref{prop:sharpbias}).
\end{remark}

\subsection{Stochastic meta-gradient estimator}
Agents cannot evaluate $\nabla F_i$ exactly and instead estimate it from sampled trajectories, following the structure of \eqref{eq:metagrad}. Given the current iterate $\theta$, agent $i$ draws three independent batches. An inner batch $\mathcal D_{\mathrm{in}}$ of $m_{\mathrm{in}}$ trajectories from $q_i(\cdot;\theta)$ yields the policy-gradient estimate $\hat g_{\mathrm{in}}:=\frac{1}{m_{\mathrm{in}}}\sum_{\xi\in\mathcal D_{\mathrm{in}}}g_i(\xi;\theta)$ and the stochastic adapted parameter $\tilde\theta:=\theta+\alpha\,\hat g_{\mathrm{in}}$. An independent curvature batch $\mathcal D_{\mathrm h}$ of $m_{\mathrm h}$ trajectories from $q_i(\cdot;\theta)$ yields the Hessian estimate $\hat H:=\frac{1}{m_{\mathrm h}}\sum_{\xi\in\mathcal D_{\mathrm h}}u_i(\xi;\theta)$. Finally, an outer batch $\mathcal D_{\mathrm{out}}$ of $m_{\mathrm{out}}$ trajectories drawn \emph{under the adapted policy} $q_i(\cdot;\tilde\theta)$ yields $\hat g_{\mathrm{out}}:=\frac{1}{m_{\mathrm{out}}}\sum_{\xi\in\mathcal D_{\mathrm{out}}}g_i(\xi;\tilde\theta)$. The estimator used by agent $i$ is
\begin{equation}\label{eq:estimator}
\widehat{\nabla F}_i(\theta):=\big(I+\alpha\hat H\big)\,\hat g_{\mathrm{out}} ,
\end{equation}
computed by \textsc{MetaGrad} in Algorithm~\ref{alg:main}. All batches are drawn fresh at every local step and are independent across agents and steps. Two design points deserve emphasis. First, $\mathcal D_{\mathrm h}$ must be independent of $\mathcal D_{\mathrm{in}}$ and $\mathcal D_{\mathrm{out}}$: this is what makes $\E[(I+\alpha\hat H)\hat g_{\mathrm{out}}]$ factor, leaving inner-loop sampling as the \emph{only} source of bias. Second, $\mathcal D_{\mathrm{out}}$ is sampled under $\tilde\theta$, not $\theta$; this on-policy resampling is the concrete form taken by the distribution shift noted in Section~\ref{sec:setup}, and it is why no importance weights appear. The \emph{first-order (FO) variant} skips the curvature batch ($\hat H\gets0$) and is analyzed in Section~\ref{sec:fo}.

\begin{lemma}[Estimator properties]\label{lem:estimator}
Let Assumptions~\ref{as:reward}--\ref{as:policy} hold and condition on the current iterate $\theta$ (measurable with respect to the past). Then~\eqref{eq:estimator} satisfies:
\begin{enumerate}\itemsep0pt
    \item[(i)] \emph{(a.s.\ boundedness)} $\norm{\widehat{\nabla F}_i(\theta)}\le(1+\alpha\LJ)\GJ=\GF$ almost surely;
    \item[(ii)] \emph{(bias)} $\big\lVert\E[\widehat{\nabla F}_i(\theta)]-\nabla F_i(\theta)\big\rVert\le \bhat:=\dfrac{(1+\alpha\LJ)\,\alpha\,\LJ\,\GJ}{\sqrt{m_{\mathrm{in}}}}$;
    \item[(iii)] \emph{(second moment)} $\E\lVert\widehat{\nabla F}_i(\theta)-\nabla F_i(\theta)\rVert^2\le\shat^2$, where
    \[
    \shat^2:=3\GJ^2\Big[\tfrac{\alpha^2\LJ^2}{m_{\mathrm h}}+\tfrac{(1+\alpha\LJ)^2}{m_{\mathrm{out}}}+\tfrac{(1+\alpha\LJ)^2\alpha^2\LJ^2}{m_{\mathrm{in}}}\Big].
    \]
\end{enumerate}
Moreover $\E\norm{\widehat{\nabla F}_i(\theta)-\E\widehat{\nabla F}_i(\theta)}^2\le\shat^2$.
\end{lemma}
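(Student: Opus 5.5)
The plan is to exploit the per-trajectory bounds of Lemma~\ref{lem:pg-bounds} together with the independence of the three batches. For (i), every summand satisfies $\norm{g_i(\xi;\tilde\theta)}\le\GJ$ and $\norm{u_i(\xi;\theta)}\le\norm{u_i(\xi;\theta)}_F\le\LJ$ almost surely. Lemma~\ref{lem:pg-bounds}(ii)--(iii) holds for every parameter, in particular for the random $\tilde\theta$, so the averages obey $\norm{\hat g_{\mathrm{out}}}\le\GJ$ and $\norm{\hat H}\le\LJ$. Hence $\norm{I+\alpha\hat H}\le1+\alpha\LJ$, and (i) follows by submultiplicativity.

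For (ii), I condition first on $\mathcal D_{\mathrm{in}}$. Because $\mathcal D_{\mathrm h}$ is independent of $(\mathcal D_{\mathrm{in}},\mathcal D_{\mathrm{out}})$ and $\E[\hat H]=\nabla^2J_i(\theta)$, the expectation factors:
\[
\E[\widehat{\nabla F}_i(\theta)]=(I+\alpha\nabla^2J_i(\theta))\,\E[\hat g_{\mathrm{out}}].
\]
By the tower property, $\E[\hat g_{\mathrm{out}}]=\E[\nabla J_i(\tilde\theta)]$. The bias is therefore bounded by
\[
(1+\alpha\LJ)\,\E\norm{\nabla J_i(\tilde\theta)-\nabla J_i(\theta_i^+(\theta))}\le(1+\alpha\LJ)\,\LJ\,\alpha\,\E\norm{\hat g_{\mathrm{in}}-\nabla J_i(\theta)},
\]
using the $\LJ$-Lipschitz property from Lemma~\ref{lem:pg-bounds}(iii). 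Jensen and the i.i.d.\ variance identity then give $\E\norm{\hat g_{\mathrm{in}}-\nabla J_i(\theta)}\le\sqrt{\E\norm{g_i-\nabla J_i}^2/m_{\mathrm{in}}}\le\GJ/\sqrt{m_{\mathrm{in}}}$. Here I use $\E\norm{X-\E X}^2\le\E\norm{X}^2\le\GJ^2$, which avoids the factor $4$. This yields exactly $\bhat$.

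For (iii), I telescope
\[
\widehat{\nabla F}_i-\nabla F_i=\alpha(\hat H-\nabla^2J_i)\hat g_{\mathrm{out}}+(I+\alpha\nabla^2J_i)\bigl(\hat g_{\mathrm{out}}-\nabla J_i(\tilde\theta)\bigr)+(I+\alpha\nabla^2J_i)\bigl(\nabla J_i(\tilde\theta)-\nabla J_i(\theta_i^+)\bigr),
\]
with all Hessians and $\nabla^2 J_i$ evaluated at $\theta$, and apply $\norm{a+b+c}^2\le3(\norm a^2+\norm b^2+\norm c^2)$. Each term is handled separately.
\begin{itemize}
\item The first term is at most $\alpha^2\norm{\hat H-\nabla^2J_i}_F^2\GJ^2$. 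The Frobenius variance of a mean of $m_{\mathrm h}$ i.i.d.\ matrices with $\norm{u_i}_F\le\LJ$ is at most $\LJ^2/m_{\mathrm h}$.
\item For the second term, conditionally on $\tilde\theta$ the outer batch is i.i.d.\ from $q_i(\cdot;\tilde\theta)$ with mean $\nabla J_i(\tilde\theta)$. Its conditional variance is therefore $\le\GJ^2/m_{\mathrm{out}}$, and the prefactor is $(1+\alpha\LJ)^2$.
\item The third term is at most $(1+\alpha\LJ)^2\LJ^2\alpha^2\GJ^2/m_{\mathrm{in}}$, by the same Lipschitz and variance argument as in (ii).
\end{itemize}
Summing and factoring $3\GJ^2$ gives $\shat^2$. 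The final claim follows because the mean minimizes mean squared deviation: $\E\norm{X-\E X}^2\le\E\norm{X-c}^2$ with $c=\nabla F_i(\theta)$.

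The step needing most care is the conditioning in the second term and in the factorization for (ii). The outer batch's law depends on the random $\tilde\theta$, so I will write all expectations as iterated conditional expectations given $\mathcal D_{\mathrm{in}}$. The bounds from Lemma~\ref{lem:pg-bounds} are uniform in the parameter, which is what makes this conditioning harmless.
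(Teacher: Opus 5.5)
Your proposal is correct and follows the paper's proof essentially step for step: the pathwise bounds for (i), factoring the mean through independence of the curvature batch together with the tower rule over $\mathcal D_{\mathrm{in}}$, the Lipschitz-plus-Cauchy--Schwarz bound for (ii), the identical three-term decomposition with $\norm{a+b+c}^2\le3(\norm{a}^2+\norm{b}^2+\norm{c}^2)$ for (iii), and the bias--variance identity for the centered bound. The only cosmetic difference is that you factor the unconditional expectation using joint independence of $\mathcal D_{\mathrm h}$ from $(\mathcal D_{\mathrm{in}},\mathcal D_{\mathrm{out}})$, whereas the paper uses conditional independence of $\hat H$ and $\hat g_{\mathrm{out}}$ given $\mathcal D_{\mathrm{in}}$; both are valid under the stated sampling protocol.
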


Part (i) is the RL-specific ingredient on which the drift analysis rests: because the meta-gradient \emph{estimate} (not just its mean) is bounded almost surely, client drift can be controlled deterministically (Lemma~\ref{lem:drift}) with no recourse to a heterogeneity bound. The three terms of $\shat^2$ are, in order, the curvature, outer-gradient and inner-gradient contributions; note the first is the only one an agent cannot reduce by reusing on-policy data it already collects.

The bias bound (ii) is the crude one obtained from Lipschitz continuity of $\nabla J_i$. A second-order argument improves it by an entire $\sqrt{m_{\mathrm{in}}}$ factor, which in turn improves the trajectory complexity.

\begin{proposition}[Sharp inner-loop bias; improved trajectory complexity]\label{prop:sharpbias}
Let $\varsigma_i^2(\theta):=\E\norm{g_i(\xi;\theta)-\nabla J_i(\theta)}^2\le\GJ^2$ denote the per-trajectory policy-gradient variance. Then the bias of~\eqref{eq:estimator} obeys
\[
\begin{aligned}
\big\lVert\E\big[\widehat{\nabla F}_i(\theta)\big]-\nabla F_i(\theta)\big\rVert
&\ \le\ \bhat_\star:=\frac{(1+\alpha\LJ)\,\alpha^2\,\rJ\,\varsigma_i^2(\theta)}{2\,m_{\mathrm{in}}}\\
&\ \le\ \frac{(1+\alpha\LJ)\,\alpha^2\,\rJ\,\GJ^2}{2\,m_{\mathrm{in}}}
\ =\ \mathcal O\!\big(m_{\mathrm{in}}^{-1}\big).
\end{aligned}
\]
The improvement is from $m_{\mathrm{in}}^{-1/2}$ to $m_{\mathrm{in}}^{-1}$, a full $\sqrt{m_{\mathrm{in}}}$ factor in the worst-case bias; its consequence for the bias budget of Corollary~\ref{cor:rates}, and hence for sample complexity, is recorded in Remark~\ref{rem:sharpcx}
\end{proposition}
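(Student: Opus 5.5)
The plan is to use the factorization of the expectation that the independence of $\mathcal D_{\mathrm h}$ provides. Condition on $\theta$. Because $\mathcal D_{\mathrm h}$ is independent of $(\mathcal D_{\mathrm{in}},\mathcal D_{\mathrm{out}})$ and $\E\hat H=\nabla^2J_i(\theta)$ by~\eqref{eq:hess}, we get $\E[\widehat{\nabla F}_i(\theta)]=(I+\alpha\nabla^2J_i(\theta))\,\E[\hat g_{\mathrm{out}}]$. By the tower property over $\mathcal D_{\mathrm{out}}$ given $\tilde\theta$, and by~\eqref{eq:pg} applied at $\tilde\theta$, we have $\E[\hat g_{\mathrm{out}}]=\E[\nabla J_i(\tilde\theta)]$. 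Combining this with~\eqref{eq:metagrad}, the bias equals
\[
\big(I+\alpha\nabla^2J_i(\theta)\big)\Big(\E\big[\nabla J_i(\tilde\theta)\big]-\nabla J_i(\theta_i^+)\Big).
\]
Its prefactor has spectral norm at most $1+\alpha\LJ$ by Lemma~\ref{lem:pg-bounds}(iii). It therefore remains to show $\norm{\E[\nabla J_i(\tilde\theta)]-\nabla J_i(\theta_i^+)}\le \alpha^2\rJ\varsigma_i^2(\theta)/(2m_{\mathrm{in}})$.

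For this we Taylor expand $\nabla J_i$ around the deterministic point $\theta_i^+=\theta+\alpha\nabla J_i(\theta)$. Write $e:=\tilde\theta-\theta_i^+=\alpha(\hat g_{\mathrm{in}}-\nabla J_i(\theta))$. Then
\[
\nabla J_i(\tilde\theta)-\nabla J_i(\theta_i^+)=\nabla^2J_i(\theta_i^+)\,e+\int_0^1\big(\nabla^2J_i(\theta_i^++te)-\nabla^2J_i(\theta_i^+)\big)e\,dt .
\]
The key observation is that $\theta_i^+$ is nonrandom and $\E e=0$, since $\hat g_{\mathrm{in}}$ is unbiased by~\eqref{eq:pg}. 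Hence the linear term vanishes in expectation, which is precisely what the crude bound of Lemma~\ref{lem:estimator}(ii) fails to exploit. For the remainder we use the Hessian Lipschitz property, Lemma~\ref{lem:pg-bounds}(iv). It bounds the integrand by $\rJ t\norm{e}^2$, so the remainder has norm at most $\tfrac{\rJ}{2}\norm{e}^2$. Jensen's inequality (norm of expectation at most expectation of norm) then gives the bound $\tfrac{\rJ}{2}\E\norm{e}^2$.

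Finally, $\E\norm{e}^2=\alpha^2\E\norm{\hat g_{\mathrm{in}}-\nabla J_i(\theta)}^2=\alpha^2\varsigma_i^2(\theta)/m_{\mathrm{in}}$. This is the variance of an average of $m_{\mathrm{in}}$ i.i.d.\ centered vectors, whose cross terms vanish by independence. Multiplying by $1+\alpha\LJ$ yields $\bhat_\star$. For the second inequality we need $\varsigma_i^2\le \E\norm{g_i}^2\le\GJ^2$. This follows from the variance identity ($\E\norm{X-\E X}^2\le\E\norm{X}^2$) together with Lemma~\ref{lem:pg-bounds}(ii).

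I expect no step to be a genuine obstacle. The delicate point is the bookkeeping of conditioning. The outer batch is drawn under the random $\tilde\theta$, so the identity $\E[\hat g_{\mathrm{out}}\mid\tilde\theta]=\nabla J_i(\tilde\theta)$ must be justified via the tower property with $\mathcal D_{\mathrm{in}}$ fixed. The Hessian factor must be pulled out using independence of $\mathcal D_{\mathrm h}$ from both other batches jointly, not merely from each separately.
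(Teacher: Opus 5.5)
Your proposal is correct and follows the paper's proof essentially step for step. It uses the same factorization isolating $\E[\nabla J_i(\tilde\theta)]-\nabla J_i(\theta_i^+)$, and cancels the linear Taylor term because $\E[\tilde\theta-\theta_i^+]=0$ at the deterministic point $\theta_i^+$. It then bounds the integral-form remainder by $\tfrac{\rJ}{2}\norm{e}^2$ via Lemma~\ref{lem:pg-bounds}(iv) and Jensen, and finishes with the i.i.d.\ variance identity $\E\norm{e}^2=\alpha^2\varsigma_i^2(\theta)/m_{\mathrm{in}}$. The only cosmetic difference is that you factor the mean using joint independence of $\mathcal D_{\mathrm h}$ from $(\mathcal D_{\mathrm{in}},\mathcal D_{\mathrm{out}})$, whereas the paper conditions on $\mathcal D_{\mathrm{in}}$ and uses conditional independence; both are valid.
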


The mechanism is as follows. Since $\E[\tilde\theta-\theta_i^+(\theta)]=0$, the \emph{first}-order Taylor term of $\nabla J_i$ about $\theta_i^+(\theta)$ vanishes in expectation, so the bias is governed by the second-order remainder $\tfrac{\rJ}{2}\E\norm{\tilde\theta-\theta_i^+}^2=\tfrac{\rJ}{2}\alpha^2\varsigma_i^2/m_{\mathrm{in}}$. The crude bound instead passes through $\E\norm{\tilde\theta-\theta_i^+}$, which decays only as $m_{\mathrm{in}}^{-1/2}$. This requires Assumption~\ref{as:policy}(c).

\subsection{The algorithm}

The overall method is Algorithm~\ref{alg:main}; Figure~\ref{fig:method} in \suppalg\ depicts one communication round. In each round the server broadcasts $\theta^k$; every agent initializes $\theta_i^{k,0}:=\theta^k$, performs $\tau$ local ascent steps $\theta_i^{k,t+1}:=\theta_i^{k,t}+\beta\,\widehat{\nabla F}_i(\theta_i^{k,t})$ with fresh independent batches, and returns $\theta_i^{k,\tau}$; the server averages, $\theta^{k+1}=\frac1n\sum_i\theta_i^{k,\tau}$. We analyze full participation throughout.

\paragraph{Communication and sampling cost.} 

A round costs one downlink and one uplink of $d$ floats per agent, so total communication is $2dKn$ floats, independent of $\tau$, of the horizon $H$, and of the batch sizes. Local sampling, by contrast, costs $\tau(m_{\mathrm{in}}+m_{\mathrm h}+m_{\mathrm{out}})H$ environment steps per agent per round. Raising $\tau$ therefore trades local interaction for communication, and Theorem~\ref{thm:main} quantifies exactly how far that trade can be pushed before client drift dominates. Raw trajectories never leave the agent that collected them.

\begin{algorithm}[t]
\caption{\textsc{Per-FedAvg-PG}}
\label{alg:main}
\small
\begin{algorithmic}[1]
\Require initialization $\theta^0$; rounds $K$; local steps $\tau$; step sizes $\alpha,\beta$; batches $m_{\mathrm{in}},m_{\mathrm h},m_{\mathrm{out}}$
\For{$k=0,\dots,K-1$}
    \State server broadcasts $\theta^k$ to all $n$ agents
    \For{\textbf{each} agent $i\in[n]$ \textbf{in parallel}}
        \State $\theta_i^{k,0}\gets\theta^k$
        \For{$t=0,\dots,\tau-1$}
            \State $\theta_i^{k,t+1}\gets\theta_i^{k,t}+\beta\cdot\textsc{MetaGrad}_i(\theta_i^{k,t})$
        \EndFor
        \State agent sends $\theta_i^{k,\tau}$ to the server
    \EndFor
    \State server aggregates $\theta^{k+1}\gets\frac1n\sum_{i=1}^n\theta_i^{k,\tau}$
\EndFor
\State \Return $\theta^K$ \Comment{deployment: agent $i$ personalizes as $\theta^K+\alpha\hat g_{\mathrm{in}}$}
\Statex\hrulefill
\Function{MetaGrad$_i$}{$\theta$}\Comment{estimator Eq.~\eqref{eq:estimator}; fresh batches}
\State sample $\mathcal D_{\mathrm{in}}\!\sim\! q_i(\cdot;\theta)^{\otimes m_{\mathrm{in}}}$; $\ \hat g_{\mathrm{in}}\!\gets\!\frac{1}{m_{\mathrm{in}}}\!\sum_{\xi\in\mathcal D_{\mathrm{in}}}\! g_i(\xi;\theta)$ \Comment{policy gradient, Eq.~\eqref{eq:pg}}
\State $\tilde\theta\gets\theta+\alpha\,\hat g_{\mathrm{in}}$ \Comment{one-step adaptation}
\State sample $\mathcal D_{\mathrm h}\!\sim\! q_i(\cdot;\theta)^{\otimes m_{\mathrm h}}$ indep.\ of $\mathcal D_{\mathrm{in}}$; $\ \hat H\!\gets\!\frac{1}{m_{\mathrm h}}\!\sum_{\xi\in\mathcal D_{\mathrm h}}\! u_i(\xi;\theta)$
\Statex \hfill\Comment{Eq.~\eqref{eq:hess}; \emph{FO variant:} $\hat H\gets0$}
\State sample $\mathcal D_{\mathrm{out}}\!\sim\! q_i(\cdot;\tilde\theta)^{\otimes m_{\mathrm{out}}}$; $\ \hat g_{\mathrm{out}}\!\gets\!\frac{1}{m_{\mathrm{out}}}\!\sum_{\xi\in\mathcal D_{\mathrm{out}}}\! g_i(\xi;\tilde\theta)$
\Statex \hfill\Comment{under the \emph{adapted} policy}
\State \Return $(I+\alpha\hat H)\,\hat g_{\mathrm{out}}$
\EndFunction
\end{algorithmic}
\end{algorithm}

\section{Convergence analysis}\label{sec:analysis}

Define the \emph{virtual averaged sequence} $\bar\theta^{k,t}:=\frac1n\sum_{i=1}^n\theta_i^{k,t}$ for $0\le t\le\tau$. Since $\bar\theta^{k,0}=\theta^k$ and $\bar\theta^{k,\tau}=\theta^{k+1}=\bar\theta^{k+1,0}$, the sequence $\{\bar\theta^{k,t}\}$ is a single consistent trajectory, even though the server materializes only its round boundaries. Let $\mathcal F^{k,t}$ be the $\sigma$-algebra generated by all randomness up to and including the computation of $\{\theta_i^{k,t}\}_{i\in[n]}$, and $\E_{k,t}[\cdot]:=\E[\cdot\,|\,\mathcal F^{k,t}]$. The argument has three steps: bound client drift within a round, establish a one-step ascent inequality for $F$ along $\bar\theta^{k,t}$, and telescope.

\subsection{Client drift}

Because each meta-gradient estimate is bounded \emph{almost surely} (Lemma~\ref{lem:estimator}(i)), drift grows only linearly in the local-step index, and the bound is pathwise rather than in expectation.

\begin{lemma}[Deterministic drift bound]\label{lem:drift}
Surely (that is, for every realization of the sampled batches), for all $k$ and all
$0\le t\le\tau$:
\begin{enumerate}\itemsep0pt
\item[(i)] every agent stays within $\norm{\theta_i^{k,t}-\theta^k}\le\beta t\,\GF$ of the round's
      starting point, for each $i\in[n]$;
\item[(ii)] the averaged squared client drift obeys
      $\displaystyle\frac1n\sum_{i=1}^n\norm{\theta_i^{k,t}-\bar\theta^{k,t}}^2\le\beta^2t^2\GF^2$.
\end{enumerate}
\end{lemma}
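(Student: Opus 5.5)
Both parts follow from the almost-sure bound of Lemma~\ref{lem:estimator}(i), without any expectation argument. For part (i), I will unroll the local recursion of Algorithm~\ref{alg:main}. Since $\theta_i^{k,0}=\theta^k$,
\[
\theta_i^{k,t}-\theta^k=\beta\sum_{s=0}^{t-1}\widehat{\nabla F}_i(\theta_i^{k,s}),
\]
where each summand is the \textsc{MetaGrad} output evaluated at the current local iterate with fresh batches. Lemma~\ref{lem:estimator}(i) is stated conditionally on the iterate, but its bound $\norm{\widehat{\nabla F}_i(\theta)}\le\GF$ is really pathwise, because it comes from the per-trajectory bounds of Lemma~\ref{lem:pg-bounds}(ii)--(iii). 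Indeed, $\norm{\hat H}\le\LJ$ holds because $\hat H$ is an average of matrices $u_i$ with $\norm{u_i}\le\norm{u_i}_F\le\LJ$. Likewise $\norm{\hat g_{\mathrm{out}}}\le\GJ$ holds for every $\tilde\theta$ and every trajectory. So the bound holds for every realization and at every parameter, including the random $\theta_i^{k,s}$. The triangle inequality then gives $\norm{\theta_i^{k,t}-\theta^k}\le\beta t\GF$ surely.

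For part (ii), I will avoid the triangle inequality through $\theta^k$, which would only give $4\beta^2t^2\GF^2$. Instead I use the variance identity: for any vectors $x_i$ with mean $\bar x$ and any $c$,
\[
\frac1n\sum_i\norm{x_i-\bar x}^2=\frac1n\sum_i\norm{x_i-c}^2-\norm{\bar x-c}^2\le\frac1n\sum_i\norm{x_i-c}^2 .
\]
Applying it with $x_i=\theta_i^{k,t}$, $\bar x=\bar\theta^{k,t}$ and $c=\theta^k$, then using part (i) for each $i$, gives $\frac1n\sum_i\norm{\theta_i^{k,t}-\bar\theta^{k,t}}^2\le\beta^2t^2\GF^2$. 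This is the same identity the paper invokes in Remark~\ref{rem:free}.

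The only point needing care is the first step: making explicit that the bound on $\widehat{\nabla F}_i$ holds surely rather than conditionally in expectation. For that I will state the factorization
\[
\norm{(I+\alpha\hat H)\hat g_{\mathrm{out}}}\le(1+\alpha\norm{\hat H})\norm{\hat g_{\mathrm{out}}}
\]
together with the per-trajectory bounds. After that, everything is elementary.
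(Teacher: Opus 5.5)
Your proposal is correct and matches the paper's proof essentially step for step. Part (i) comes from unrolling the local recursion and applying the pathwise bound $\norm{\widehat{\nabla F}_i}\le\GF$ of Lemma~\ref{lem:estimator}(i) at the random iterates, and part (ii) comes from the empirical variance identity centered at $\theta^k$ (the paper's displacements $d_i^{\,t}=\theta_i^{k,t}-\theta^k$), which avoids the factor-four loss; your re-derivation of the sure bound from the per-trajectory bounds of Lemma~\ref{lem:pg-bounds} is exactly the argument the paper gives in Step~1 of the proof of Lemma~\ref{lem:estimator}.
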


This is the point at which the present analysis diverges from the supervised one. In Per-FedAvg~\cite{perfedavg}, the analogous step needs both a bounded-gradient and a bounded-dissimilarity assumption, because only the \emph{mean} local gradient is controlled; here Lemma~\ref{lem:estimator}(i) controls every realization.

\subsection{Per-step ascent and main theorem}
Using the drift bound, we lower-bound the expected value of the meta-objective after a single averaged local step. This is the central estimate of the analysis: it exposes an ascent term proportional to $\norm{\nabla F(\bar\theta^{k,t})}^2$ together with three error terms, arising from client drift, estimator bias, and estimator variance.

\begin{lemma}[One-step progress]\label{lem:onestep}
Let Assumptions~\ref{as:reward}--\ref{as:policy} hold and $\beta\le\frac{1}{6\LF}$. Then for every round $k$ and local step $0\le t\le\tau-1$,
\[
\begin{aligned}
\E_{k,t}\,F\big(\bar\theta^{k,t+1}\big)\;\ge\;&F\big(\bar\theta^{k,t}\big)
+\frac{\beta}{2}\norm{\nabla F(\bar\theta^{k,t})}^2\\
&-\frac{9\beta}{4}\Big(\LF^2\beta^2t^2\GF^2+\bhat^2\Big)
-\frac{\LF\beta^2\shat^2}{2n}.
\end{aligned}
\]
\end{lemma}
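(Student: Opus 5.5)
The plan is the standard smoothness-based ascent argument applied to the virtual averaged sequence. By Lemma~\ref{lem:meta-reg}(iii), $\nabla F$ is $\LF$-Lipschitz, so for any $x,y$ we have $F(y)\ge F(x)+\inner{\nabla F(x)}{y-x}-\frac{\LF}{2}\norm{y-x}^2$. Take $x=\bar\theta^{k,t}$ and $y=\bar\theta^{k,t+1}$. Write $\hat v_i:=\widehat{\nabla F}_i(\theta_i^{k,t})$, so that the update is $y-x=\frac{\beta}{n}\sum_i\hat v_i$. We then take $\E_{k,t}$. Conditioning on $\mathcal F^{k,t}$ fixes every $\theta_i^{k,t}$ and leaves only the fresh batches random. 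Those batches are independent across agents.

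\textbf{Step 1: decompose the averaged estimate.} Write $\hat v_i=\nabla F_i(\theta_i^{k,t})+b_i+\zeta_i$. Here $b_i:=\E_{k,t}\hat v_i-\nabla F_i(\theta_i^{k,t})$ satisfies $\norm{b_i}\le\bhat$ by Lemma~\ref{lem:estimator}(ii), and $\zeta_i$ has conditional mean zero with $\E_{k,t}\norm{\zeta_i}^2\le\shat^2$ by the last statement of Lemma~\ref{lem:estimator}. Set
\[
e:=\tfrac1n\textstyle\sum_i\big(\nabla F_i(\theta_i^{k,t})-\nabla F_i(\bar\theta^{k,t})\big)+\tfrac1n\sum_i b_i ,
\]
so that $\E_{k,t}[y-x]=\beta(\nabla F(\bar\theta^{k,t})+e)$. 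We use $(a+b)^2\le2a^2+2b^2$, then Jensen, then $\LF$-Lipschitzness of each $\nabla F_i$, then Lemma~\ref{lem:drift}(ii). This gives $\norm{e}^2\le 2\big(\LF^2\beta^2t^2\GF^2+\bhat^2\big)$. This is the only place the drift bound enters.

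\textbf{Step 2: the quadratic term.} The $\zeta_i$ are conditionally independent with mean zero, so the cross terms vanish. Hence
\[
\E_{k,t}\norm{y-x}^2=\beta^2\norm{\nabla F+e}^2+\tfrac{\beta^2}{n^2}\textstyle\sum_i\E_{k,t}\norm{\zeta_i}^2\le 2\beta^2\norm{\nabla F}^2+2\beta^2\norm{e}^2+\tfrac{\beta^2\shat^2}{n}.
\]
This step produces the $1/n$ variance reduction, and hence the term $-\frac{\LF\beta^2\shat^2}{2n}$ in the statement.

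\textbf{Step 3: balance the constants.} This is the step I expect to need care, since the stated constants $\tfrac12$ and $\tfrac94$ must come out exactly. For the linear term I would use Young's inequality with weight $c=\tfrac13$: $\beta\inner{\nabla F}{e}\ge-\frac{\beta}{3}\norm{\nabla F}^2-\frac{3\beta}{4}\norm{e}^2$. The coefficient of $\norm{\nabla F}^2$ is then $\beta(1-\tfrac13-\LF\beta)\ge\beta(1-\tfrac13-\tfrac16)=\tfrac{\beta}{2}$, using $\beta\le\frac{1}{6\LF}$. The coefficient of $-\norm{e}^2$ is $\beta(\tfrac34+\LF\beta)\le\tfrac{11\beta}{12}$. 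Multiplying by the factor $2$ from Step 1 gives $\tfrac{11\beta}{6}\le\tfrac{9\beta}{4}$, so the drift and bias terms are absorbed with the stated constant. The naive choice $c=\tfrac14$ gives $\tfrac{7\beta}{3}$, which exceeds the claimed $\tfrac{9\beta}{4}$. The Young weight therefore has to be chosen as above rather than by default. Collecting the terms yields the claimed inequality.
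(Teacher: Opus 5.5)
Your proof is correct and follows essentially the same route as the paper. Both use the $\LF$-smoothness ascent inequality along the virtual average, split the conditional mean of the averaged direction into $\nabla F(\bar\theta^{k,t})$ plus a drift error and a bias error, get the $\shat^2/n$ term from conditional independence across agents, and close with Young's inequality and $\beta\le\frac{1}{6\LF}$. The only difference is bookkeeping: you merge drift and bias into a single $e$ and use Young weight $\tfrac13$ with a two-term split of $\norm{\nabla F+e}^2$, while the paper keeps them separate and uses weight $\tfrac14$ with a three-term split, so you actually obtain the slightly sharper coefficient $\tfrac{11\beta}{6}\le\tfrac{9\beta}{4}$.
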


The three error terms are exactly the three sources of error in local training: drift ($t^2$, controlled by $\beta\tau$), estimator bias ($\bhat^2$, controlled only by $m_{\mathrm{in}}$), and estimator variance ($\shat^2/n$, the only term helped by population size).

\begin{theorem}[Convergence of \textsc{Per-FedAvg-PG}]\label{thm:main}
Let Assumptions~\ref{as:reward}--\ref{as:policy} hold, let $\alpha>0$, $\beta\le\frac1{6\LF}$, and let $\Delta:=F^\star-F(\theta^0)\ (\le 2\bar R)$. Then after $K$ rounds with $\tau$ local steps,
\begin{equation}\label{eq:mainbound}
\begin{aligned}
\frac{1}{K\tau}\sum_{k=0}^{K-1}\sum_{t=0}^{\tau-1}\E\norm{\nabla F(\bar\theta^{k,t})}^2
\;\le\;&\frac{2\Delta}{\beta K\tau}
+\tfrac32\,\LF^2\GF^2\,\beta^2\tau^2\\
&+\frac92\,\bhat^2
+\frac{\LF\,\beta\,\shat^2}{n}
,
\end{aligned}
\end{equation}
with $\GF,\LF$ as in Lemma~\ref{lem:meta-reg} and $\bhat,\shat^2$ as in Lemma~\ref{lem:estimator}. The same bound holds for $\E\norm{\nabla F(\hat\theta)}^2$ when $\hat\theta$ is drawn uniformly from $\{\bar\theta^{k,t}\}_{k<K,t<\tau}$.
\end{theorem}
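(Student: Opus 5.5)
The plan is to telescope Lemma~\ref{lem:onestep} along the virtual averaged sequence $\{\bar\theta^{k,t}\}$. That lemma already supplies a per-step ascent inequality with drift, bias and variance error terms. What remains is to take total expectations, sum over $t$ and $k$, and control the accumulated drift term.

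\textbf{Step 1: sum the one-step inequality.} I take full expectations in Lemma~\ref{lem:onestep} via the tower property over $\mathcal F^{k,t}$, and rearrange to isolate $\tfrac{\beta}{2}\E\norm{\nabla F(\bar\theta^{k,t})}^2$. Summing over $t=0,\dots,\tau-1$ within a round telescopes $\E F(\bar\theta^{k,t+1})-\E F(\bar\theta^{k,t})$ to $\E F(\bar\theta^{k,\tau})-\E F(\theta^k)$. Because $\bar\theta^{k,\tau}=\theta^{k+1}=\bar\theta^{k+1,0}$, summing over $k$ telescopes further to $\E F(\theta^K)-F(\theta^0)$. By Lemma~\ref{lem:meta-reg}(i), this is at most $F^\star-F(\theta^0)=\Delta$.

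\textbf{Step 2: control the drift sum.} The drift term contributes $\tfrac{9\beta}{4}\LF^2\beta^2\GF^2\sum_{t=0}^{\tau-1}t^2$ per round. I bound $\sum_{t<\tau}t^2\le\tau^3/3$. Multiplying by $2/(\beta K\tau)$ and summing over the $K$ rounds then gives
\[
\tfrac{9}{2}\cdot\tfrac13\LF^2\GF^2\beta^2\tau^2=\tfrac32\LF^2\GF^2\beta^2\tau^2,
\]
which is exactly the stated constant.

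\textbf{Step 3: the remaining terms.} The bias term $\tfrac{9\beta}{4}\bhat^2$ is constant in $t$ and $k$. After normalization by $2/\beta$ and averaging, it becomes $\tfrac92\bhat^2$. The variance term $\tfrac{\LF\beta^2\shat^2}{2n}$ likewise normalizes to $\LF\beta\shat^2/n$. The optimality gap term becomes $2\Delta/(\beta K\tau)$. Collecting the three pieces with the drift bound from Step~2 yields~\eqref{eq:mainbound}.

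\textbf{Step 4: the random iterate.} For $\hat\theta$ drawn uniformly from $\{\bar\theta^{k,t}\}_{k<K,\,t<\tau}$, independently of the algorithm's randomness, I note that $\E\norm{\nabla F(\hat\theta)}^2$ equals the left-hand average exactly.

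The only step needing care is checking that the telescoping is legitimate. This requires that $\bar\theta^{k,t}$ is $\mathcal F^{k,t}$-measurable, so that the conditional bound of Lemma~\ref{lem:onestep} composes under expectation, and that $F$ is bounded so that all expectations are finite. Both hold: measurability follows from the construction of $\mathcal F^{k,t}$, and boundedness from Lemma~\ref{lem:meta-reg}(i). I expect no real obstacle; the substantive work sits inside Lemma~\ref{lem:onestep}.
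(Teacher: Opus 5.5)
Your proposal is correct and follows the paper's proof essentially step for step. Both arguments take total expectations in Lemma~\ref{lem:onestep} via the tower property, telescope twice along the glued virtual sequence $\bar\theta^{k,\tau}=\bar\theta^{k+1,0}$ to $\E F(\theta^K)-F(\theta^0)\le\Delta$, bound $\sum_{t<\tau}t^2\le\tau^3/3$, normalize by $2/(\beta K\tau)$ to obtain the four stated constants, and identify $\E\norm{\nabla F(\hat\theta)}^2$ with the average through the independent uniform index.
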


\begin{proof}[Proof sketch]
$\LF$-smoothness of $F$ gives the ascent inequality $F(\bar\theta+\beta P)\ge F(\bar\theta)+\beta\inner{\nabla F(\bar\theta)}{P}-\frac{\LF\beta^2}{2}\norm{P}^2$ for the averaged direction $P=\frac1n\sum_i\widehat{\nabla F}_i(\theta_i^{k,t})$. Taking $\E_{k,t}$, the conditional mean of $P$ is $\nabla F(\bar\theta^{k,t})+e+\bar b$, where $e$ collects the drift error, bounded via Lemma~\ref{lem:meta-reg}(iii) and Lemma~\ref{lem:drift}, and $\norm{\bar b}\le\bhat$. Young's inequality on $\inner{\nabla F}{e+\bar b}$ and independence of the estimators across agents (giving $\shat^2/n$ for the conditional variance of $P$) yield Lemma~\ref{lem:onestep}. Summing over $t<\tau$ and $k<K$, the potential terms telescope to $\Delta$ because $\bar\theta^{k,\tau}=\bar\theta^{k+1,0}$, and $\sum_{k,t}t^2\le K\tau^3/3$; multiplying by $2/(\beta K\tau)$ gives~\eqref{eq:mainbound}. Full proof in \supp.
\end{proof}

\begin{corollary}[Rates]\label{cor:rates}
Fix $\varepsilon>0$ and choose parameters so that
\[
\begin{gathered}
\text{(i) ceiling: }\beta\le\tfrac1{6\LF},\qquad
\text{(ii) drift: }\beta\tau\le\tfrac{\sqrt\varepsilon}{\sqrt6\,\LF\GF}, \\
\text{(iii) bias: }m_{\mathrm{in}}\ge\tfrac{18(1+\alpha\LJ)^2\alpha^2\LJ^2\GJ^2}{\varepsilon},\qquad
\text{(iv) variance: }\beta\le\tfrac{n\varepsilon}{4\LF\shat^2},
\end{gathered}
\]
and run $K\ge 8\Delta/(\beta\tau\varepsilon)$ rounds. Then $\frac{1}{K\tau}\sum_{k,t}\E\norm{\nabla F(\bar\theta^{k,t})}^2\le\varepsilon$. In particular, taking $m_{\mathrm{out}},m_{\mathrm h}=\Theta(1)$, $m_{\mathrm{in}}=\Theta(\varepsilon^{-1})$, $\tau=\lceil\varepsilon^{-1/2}\rceil$ and $\beta=\frac{\sqrt\varepsilon}{\sqrt6\LF\GF\tau}=\Theta(\varepsilon/(\LF\GF))$ (which meets (ii) with equality and, provided $\varepsilon\le\min\{1,\GF/\sqrt6\}$ and $\shat^2\le\frac{\sqrt6}{4}n\GF$, also (i) and (iv)), the algorithm reaches an $\varepsilon$-FOSP with $K=\mathcal O\big(\Delta\LF\GF\varepsilon^{-3/2}\big)$ rounds, $\tau=\Theta(\varepsilon^{-1/2})$ local steps,
and per-agent trajectory complexity $K\tau(m_{\mathrm{in}}+m_{\mathrm h}+m_{\mathrm{out}})=\mathcal O(\varepsilon^{-3})$, improved to $\mathcal O(\varepsilon^{-5/2})$ by Proposition~\ref{prop:sharpbias}.
\end{corollary}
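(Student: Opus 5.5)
The plan is to plug conditions (i)--(iv) into the bound~\eqref{eq:mainbound} of Theorem~\ref{thm:main} and check that each of its four terms is at most $\varepsilon/4$. Condition (i) makes Theorem~\ref{thm:main} applicable.

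\emph{Drift term.} Condition (ii) gives $\beta^2\tau^2\le\varepsilon/(6\LF^2\GF^2)$, so $\tfrac32\LF^2\GF^2\beta^2\tau^2\le\varepsilon/4$.

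\emph{Bias term.} By Lemma~\ref{lem:estimator}(ii), $\bhat^2=(1+\alpha\LJ)^2\alpha^2\LJ^2\GJ^2/m_{\mathrm{in}}$. Condition (iii) then gives $\tfrac92\bhat^2\le\tfrac92\cdot\varepsilon/18=\varepsilon/4$.

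\emph{Variance term.} Condition (iv) gives $\LF\beta\shat^2/n\le\varepsilon/4$.

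\emph{Optimization term.} The choice $K\ge 8\Delta/(\beta\tau\varepsilon)$ gives $2\Delta/(\beta K\tau)\le\varepsilon/4$. Summing the four bounds yields the claimed $\varepsilon$.

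For the particular choice, set $\beta=\sqrt\varepsilon/(\sqrt6\LF\GF\tau)$, so (ii) holds with equality. Then check (i) and (iv).
\begin{itemize}
\item For (i): since $\tau\ge\varepsilon^{-1/2}$, we get $\beta\le\varepsilon/(\sqrt6\LF\GF)$. This is at most $1/(6\LF)$ whenever $\varepsilon\le\GF/\sqrt6$.
\item For (iv): it suffices that $\varepsilon/(\sqrt6\LF\GF)\le n\varepsilon/(4\LF\shat^2)$, which is exactly $\shat^2\le\frac{\sqrt6}{4}n\GF$.
\end{itemize}
Because $\varepsilon\le1$, we have $\tau\le 2\varepsilon^{-1/2}$, so $\beta=\Theta(\varepsilon/(\LF\GF))$. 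Hence $\beta\tau=\Theta(\sqrt\varepsilon/(\LF\GF))$ and $K=\lceil 8\Delta/(\beta\tau\varepsilon)\rceil=\mathcal O(\Delta\LF\GF\varepsilon^{-3/2})$. The constant-size $m_{\mathrm h},m_{\mathrm{out}}$ only enter through $\shat^2$, which stays bounded because $m_{\mathrm{in}}\ge1$; this is compatible with the stated proviso.

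The trajectory complexity per agent is $K\tau(m_{\mathrm{in}}+m_{\mathrm h}+m_{\mathrm{out}})=\mathcal O(\varepsilon^{-3/2})\cdot\Theta(\varepsilon^{-1/2})\cdot\Theta(\varepsilon^{-1})=\mathcal O(\varepsilon^{-3})$.

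For the improvement, repeat the argument with $\bhat$ replaced by $\bhat_\star$ from Proposition~\ref{prop:sharpbias}. This works because the proof of Theorem~\ref{thm:main} uses the bias only through $\norm{\bar b}\le\bhat$. The requirement $\tfrac92\bhat_\star^2\le\varepsilon/4$ reads $(1+\alpha\LJ)^2\alpha^4\rJ^2\GJ^4/(4m_{\mathrm{in}}^2)\le\varepsilon/18$, so $m_{\mathrm{in}}=\Theta(\varepsilon^{-1/2})$ suffices. The complexity becomes $\mathcal O(\varepsilon^{-3/2}\cdot\varepsilon^{-1/2}\cdot\varepsilon^{-1/2})=\mathcal O(\varepsilon^{-5/2})$.

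The only delicate step is the parameter bookkeeping. It has to confirm that the single choice of $\beta$ meets (i) and (iv) under the stated provisos. It also has to account for the ceiling in $\tau$ so that $\beta\tau$ and $K$ have the claimed orders. Everything else is direct substitution into Theorem~\ref{thm:main}.
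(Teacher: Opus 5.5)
Your proposal is correct and follows the paper's own proof essentially step for step. It uses the same equal four-way split of the bound in Theorem~\ref{thm:main}, and the same checks of (i) and (iv) via $\tau\ge\varepsilon^{-1/2}$; your route to (iv) through $\beta\le\varepsilon/(\sqrt6\LF\GF)$ is just the paper's observation $\tau\sqrt\varepsilon\ge1$ in another form. The same substitution of $\bhat_\star$ then yields $m_{\mathrm{in}}=\Theta(\varepsilon^{-1/2})$ and the $\mathcal O(\varepsilon^{-5/2})$ trajectory complexity.
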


\begin{remark}[Comparison with Per-FedAvg and SG-MRL]\label{rem:compare}
The round complexity $K=\mathcal O(\varepsilon^{-3/2})$ at $\tau=\Theta(\varepsilon^{-1/2})$ matches the guarantee known for Per-FedAvg in the supervised setting~\cite{perfedavg}, and the total step count $K\tau=\mathcal O(\varepsilon^{-2})$ matches the centralized meta-RL complexity of SG-MRL~\cite{sgmrl}: federation yields a $\tau$-fold reduction in communication at no cost in computation, up to the drift condition (ii). We do not claim a linear speedup in $n$, since only the variance term $\LF\beta\shat^2/n$ improves with population size; a $1/n$ gain is therefore available at $\tau=1$ (Section~\ref{sec:regimes}) but not at the $\tau=\Theta(\varepsilon^{-1/2})$ frontier.
\end{remark}

\begin{remark}[What binds the sample complexity, and what would improve it]\label{rem:sharpcx}
Proposition~\ref{prop:sharpbias} enters Corollary~\ref{cor:rates} through the bias condition (iii) alone. With $\bhat_\star=\Theta(m_{\mathrm{in}}^{-1})$ in place of $\Theta(m_{\mathrm{in}}^{-1/2})$ it is met at $m_{\mathrm{in}}=\Theta(\varepsilon^{-1/2})$ rather than $\Theta(\varepsilon^{-1})$; as (i), (ii) and the round count involve no batch size, $K$ and $\tau$ are unchanged and only the per-agent trajectory complexity improves, from $\mathcal O(\varepsilon^{-3})$ to $\mathcal O(\varepsilon^{-5/2})$. The variance condition (iv) contains $m_{\mathrm{in}}$ through $\shat^2$ but cannot bind, since $\shat^2$ is bounded by a problem constant for all batch sizes $\ge1$ (Lemma~\ref{lem:estimator}(iii)) and is absorbed into $\tau$ at fixed round complexity. Inner-loop \emph{bias}, not gradient variance, is therefore what binds, and this sets the direction for improvement: variance reduction such as SVRPG~\cite{papini} or Hessian-aided PG~\cite{shen} targets the \emph{outer} gradient and would leave $\mathcal O(\varepsilon^{-5/2})$ untouched, whereas a variance-reduced \emph{inner} gradient, or the debiased objective of~\cite{sgmrl}, acts on the $\varsigma_i^2$ feeding $\bhat_\star$. Reaching the single-agent-PG-optimal $\mathcal O(\varepsilon^{-2})$ this way is left to future work.
\end{remark}

\section{The Hessian-free variant}\label{sec:fo}

The exact meta-gradient \eqref{eq:metagrad} contains the inner-loop policy Hessian $\nabla^2 J_i(\theta)$, so the estimator \eqref{eq:estimator} must form a curvature estimate $\hat H$; its variance contributes the term $3\GJ^2\alpha^2\LJ^2/m_{\mathrm h}$ in Lemma~\ref{lem:estimator}(iii). Our experiments (Section~\ref{sec:experiments}) identify this term as the binding constraint: reliable exact training requires a Hessian batch orders of magnitude larger than the inner and outer batches. This motivates a variant that removes the curvature estimate entirely.

Define the first-order estimator $\widehat{\nabla F}^{\mathrm{FO}}_i(\theta):=\hat g_{\mathrm{out}}$, i.e.\ \eqref{eq:estimator} with the curvature factor $(I+\alpha\hat H)$ dropped; it needs no Hessian batch ($m_{\mathrm h}=0$) and lowers the per-step trajectory cost accordingly.

\begin{corollary}[Convergence of the Hessian-free variant]\label{cor:fo}
Under the assumptions of Theorem~\ref{thm:main}, \textsc{Per-FedAvg-PG} run with $\widehat{\nabla F}^{\mathrm{FO}}_i$ satisfies~\eqref{eq:mainbound} with $(\bhat,\shat^2)$ replaced by
\[
\bhat_{\mathrm{FO}}:=\alpha\LJ\GJ\Big(1+\tfrac1{\sqrt{m_{\mathrm{in}}}}\Big),\quad
\shat^2_{\mathrm{FO}}:=3\GJ^2\Big[\tfrac{1}{m_{\mathrm{out}}}+\tfrac{\alpha^2\LJ^2}{m_{\mathrm{in}}}+\alpha^2\LJ^2\Big].
\]
The bias term $\frac92\bhat_{\mathrm{FO}}^2=\Theta(\alpha^2\LJ^2\GJ^2)$ in~\eqref{eq:mainbound} does not vanish for any choice of $K,\tau,\beta$ or batch sizes, so the guarantee degrades from convergence to an $\varepsilon$-FOSP to convergence to a neighborhood of that radius; an $\varepsilon$-FOSP is guaranteed only in the small-adaptation regime $\alpha=\mathcal O(\sqrt{\varepsilon}/(\LJ\GJ))$.
\end{corollary}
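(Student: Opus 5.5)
The plan is to treat the first-order estimator as one more instance of the abstract estimator interface that the proof of Theorem~\ref{thm:main} actually uses, and then to verify the three estimator properties for $\widehat{\nabla F}^{\mathrm{FO}}_i$. Inspecting the sketch of Theorem~\ref{thm:main} and Lemma~\ref{lem:onestep}, the proof needs: (a) an almost-sure bound on each local estimate by $\GF$, which feeds Lemma~\ref{lem:drift}; (b) a bound $\bhat$ on the conditional bias; (c) a bound $\shat^2$ on the conditional variance $\E\norm{\widehat{\nabla F}_i-\E\widehat{\nabla F}_i}^2$, together with independence across agents. Nothing else about the form $(I+\alpha\hat H)\hat g_{\mathrm{out}}$ is used. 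So it suffices to establish (a)--(c) for $\hat g_{\mathrm{out}}$ with the stated constants, and the bound~\eqref{eq:mainbound} follows verbatim.

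For (a), Lemma~\ref{lem:pg-bounds}(ii) gives $\norm{g_i(\xi;\tilde\theta)}\le\GJ\le\GF$ per trajectory, so $\norm{\hat g_{\mathrm{out}}}\le\GJ$ surely; the drift lemma holds unchanged (even with the sharper constant $\GJ$, though keeping $\GF$ is harmless). For (b), I would write the bias as
\[
\E[\hat g_{\mathrm{out}}]-\nabla F_i(\theta)=\big(\E[\nabla J_i(\tilde\theta)]-\nabla J_i(\theta_i^+)\big)-\alpha\nabla^2J_i(\theta)\nabla J_i(\theta_i^+),
\]
using $\E[\hat g_{\mathrm{out}}\mid\tilde\theta]=\nabla J_i(\tilde\theta)$ by on-policy sampling. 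The first term is at most $\LJ\,\E\norm{\tilde\theta-\theta_i^+}\le\LJ\alpha\GJ/\sqrt{m_{\mathrm{in}}}$ by Lipschitzness (Lemma~\ref{lem:pg-bounds}(iii)), Jensen, and the per-trajectory variance bound $\varsigma_i^2\le\GJ^2$. The second is the dropped curvature term, at most $\alpha\LJ\GJ$ by Lemma~\ref{lem:pg-bounds}(ii)--(iii). Summing gives $\bhat_{\mathrm{FO}}$.

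For (c), I would decompose $\hat g_{\mathrm{out}}-\nabla F_i(\theta)$ into three pieces: $\hat g_{\mathrm{out}}-\nabla J_i(\tilde\theta)$, whose conditional second moment is at most $\GJ^2/m_{\mathrm{out}}$ (i.i.d.\ average with per-sample variance $\le\GJ^2$); $\nabla J_i(\tilde\theta)-\nabla J_i(\theta_i^+)$, bounded in second moment by $\LJ^2\alpha^2\GJ^2/m_{\mathrm{in}}$; and the deterministic residual $-\alpha\nabla^2J_i\nabla J_i(\theta_i^+)$ of norm at most $\alpha\LJ\GJ$. Applying $\norm{a+b+c}^2\le3(\norm a^2+\norm b^2+\norm c^2)$ yields exactly $\shat^2_{\mathrm{FO}}$ as a bound on $\E\norm{\widehat{\nabla F}^{\mathrm{FO}}_i-\nabla F_i}^2$. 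The centered second moment is then no larger, since the mean minimizes the mean squared deviation. This matches the structure of Lemma~\ref{lem:estimator}(iii), with the deterministic bias absorbed (loosely) into the variance proxy.

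The main obstacle I expect is only bookkeeping. I need to confirm that the proof of Lemma~\ref{lem:onestep} uses the centered second moment and the bias separately, so that a nonvanishing deterministic bias does not break the variance step. I also need to check that independence across agents still makes the cross terms in $\E_{k,t}\norm{P-\E_{k,t}P}^2$ vanish, giving $\shat_{\mathrm{FO}}^2/n$. Finally, for the closing claims: $\tfrac92\bhat_{\mathrm{FO}}^2\ge\tfrac92\alpha^2\LJ^2\GJ^2$ is independent of $K,\tau,\beta$ and of the batch sizes, so~\eqref{eq:mainbound} only certifies a neighborhood. Requiring $\tfrac92\bhat_{\mathrm{FO}}^2\le\varepsilon/4$ (with $m_{\mathrm{in}}\ge1$, so $1+m_{\mathrm{in}}^{-1/2}\le2$) gives the regime $\alpha=\mathcal O(\sqrt\varepsilon/(\LJ\GJ))$. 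In that regime the remaining conditions of Corollary~\ref{cor:rates} can be met as before, because $\shat^2_{\mathrm{FO}}$ is again bounded by a problem constant.
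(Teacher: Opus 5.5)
Your proposal is correct and follows essentially the same route as the paper's proof. You verify for $\hat g_{\mathrm{out}}$ three properties: the sure bound $\GJ\le\GF$; the bias split into the inner-sampling error plus the discarded curvature term $\alpha\nabla^2J_i(\theta)\nabla J_i(\theta_i^+)$; and the three-term second-moment bound, with the centered version following from the bias--variance identity. You then transfer Lemma~\ref{lem:drift}, Lemma~\ref{lem:onestep} and Theorem~\ref{thm:main} verbatim, since they use only these properties and cross-agent independence, and you read off the non-vanishing floor and the regime $\alpha=\mathcal O(\sqrt\varepsilon/(\LJ\GJ))$ exactly as the paper does.
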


A non-vanishing term in an upper bound need not correspond to a real obstruction, since it may reflect only the looseness of the analysis. The following proposition rules this out. The exact first-order dynamics has fixed points at which the meta-gradient is nonzero, and the residual there is of the order that Corollary~\ref{cor:fo} predicts, so the floor is a property of the method rather than of the bound.

\begin{proposition}[Residual meta-gradient at first-order fixed points]\label{prop:fofix}
Consider the exact ($m\to\infty$) first-order dynamics, in which the averaged ascent direction is $\frac1n\sum_i\nabla J_i(\theta_i^+(\theta))$. If $\theta^\star$ is a fixed point, i.e.\ $\frac1n\sum_i\nabla J_i(\theta_i^+(\theta^\star))=0$, then
\[
\nabla F(\theta^\star)=\frac{\alpha}{n}\sum_{i=1}^n\nabla^2J_i(\theta^\star)\,\nabla J_i\big(\theta_i^+(\theta^\star)\big),
\qquad
\norm{\nabla F(\theta^\star)}\le\alpha\LJ\GJ .
\]
The residual vanishes identically when all agents share the same curvature at $\theta^\star$, since the fixed-point condition then factors out of the sum; curvature heterogeneity across agents is therefore \emph{necessary} for the discrepancy, which makes it a genuinely multiagent effect: for $n=1$ the residual is always zero. Outside a measure-zero set of the admissible configurations it is nonzero, and \supp\ exhibits a concrete instance: two three-armed bandits sharing a scalar log-linear policy, for which $\norm{\nabla F(\theta^\star)}/\alpha\to0.157$ as $\alpha\to0$.
The first-order method therefore converges to points whose meta-gradient is $\Theta(\alpha)$ in general, matching the $\Theta(\alpha^2\LJ^2\GJ^2)$ order of the squared-norm bound in Corollary~\ref{cor:fo}.
\end{proposition}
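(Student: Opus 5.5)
The identity and the bound come straight out of the chain rule. By \eqref{eq:metagrad},
\[
\nabla F(\theta)=\tfrac1n\sum_i\nabla J_i(\theta_i^+(\theta))+\tfrac{\alpha}{n}\sum_i\nabla^2J_i(\theta)\,\nabla J_i(\theta_i^+(\theta)).
\]
At a fixed point $\theta^\star$ the first sum vanishes by definition, which leaves exactly the displayed expression. For the norm bound I would apply the triangle inequality. Each summand is controlled by $\norm{\nabla^2J_i(\theta^\star)}\le\LJ$ from Lemma~\ref{lem:pg-bounds}(iii) and $\norm{\nabla J_i(\theta_i^+(\theta^\star))}\le\GJ$ from Lemma~\ref{lem:pg-bounds}(ii). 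Averaging over $i$ gives $\norm{\nabla F(\theta^\star)}\le\alpha\LJ\GJ$.

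For the structural claims, suppose every agent has the same Hessian $A:=\nabla^2J_i(\theta^\star)$. Then the residual becomes $\alpha A\cdot\frac1n\sum_i\nabla J_i(\theta_i^+(\theta^\star))=0$ by the fixed-point condition. The case $n=1$ is the special case in which this holds trivially. Hence any nonzero residual requires curvature heterogeneity.

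The nonvanishing part is the real content. Here I would use the explicit two-bandit instance rather than attempt a general genericity proof. Take scalar $\theta$, three arms, and a log-linear policy $\pi(a;\theta)\propto e^{\theta\phi_a}$ with fixed features, say $\phi=(-1,0,1)$. Give the two agents reward vectors $r^{(1)},r^{(2)}$, so that each $J_i(\theta)=\sum_a\pi(a;\theta)r^{(i)}_a$ is explicit, with $J_i'=\mathrm{Cov}_\pi(\phi,r^{(i)})$ and with $J_i''$ also explicit. I would expand in $\alpha\to0$. Then $\theta_i^+=\theta+\alpha J_i'(\theta)$, and the fixed point $\theta^\star(\alpha)\to\theta_0$, where $\theta_0$ is a root of $J_1'+J_2'$; a nondegenerate root exists by choosing rewards so that $J_1'+J_2'$ changes sign. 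At $\alpha=0$, $J_2'(\theta_0)=-J_1'(\theta_0)$, so
\[
\nabla F(\theta^\star)/\alpha\to\tfrac12\big(J_1''(\theta_0)-J_2''(\theta_0)\big)J_1'(\theta_0).
\]
This limit is nonzero precisely when $J_1'(\theta_0)\ne0$ and the curvatures differ. I would choose the rewards so that this holds and evaluate the limit numerically. The stated constant $0.157$ comes from the particular instance in \supp; with my choice I would simply verify a nonzero value.

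The ``outside a measure-zero set'' claim follows by an analyticity argument. The limit above is a real-analytic function of the reward and feature parameters and is not identically zero, as the example shows. Its zero set therefore has measure zero.

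The main obstacle is keeping this bandit expansion rigorous. I would justify the continuity of $\theta^\star(\alpha)$ via the implicit function theorem, which needs $(J_1'+J_2')'(\theta_0)\ne0$. I would then check that the corrections from evaluating at $\theta_i^+$ rather than $\theta_0$ are $O(\alpha)$ relative to the leading term, which follows from the Lipschitz bounds in Lemma~\ref{lem:pg-bounds}. Finally, matching $\norm{\nabla F}^2=\Theta(\alpha^2)$ against $\tfrac92\bhat_{\mathrm{FO}}^2$ from Corollary~\ref{cor:fo} shows that the orders agree.
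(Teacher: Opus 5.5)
Your identity, the bound $\alpha\LJ\GJ$, the equal-curvature factorization (with $n=1$ as its trivial case) and the limiting formula $\tfrac12\big(J_1''(\theta_0)-J_2''(\theta_0)\big)J_1'(\theta_0)$ all coincide with the paper's proof. Your implicit-function-theorem step for $\theta^\star(\alpha)\to\theta_0$ is more careful than the paper, which only reports numerically computed fixed points. Two things differ.

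\emph{The witness.} With $\phi=(-1,0,1)$, the obvious rewards $r^{(1)}=(1,0,0)$, $r^{(2)}=(0,0,1)$ fail. The reflection $\theta\mapsto-\theta$ swaps $J_1$ and $J_2$, so $\theta^\star=0$ is a fixed point for every $\alpha$, $J_1''(0)=J_2''(0)$, and the residual there is zero. The paper breaks the symmetry through the features instead: it takes $\phi=(0,1,3)$ with those same rewards, giving $\theta_0=-\tfrac13\ln 2$. The constant $0.157$ belongs to that instance, so your plan proves the qualitative claim but not the stated number. With your features, $r^{(2)}=(0,2,0)$ works, with limit $\approx0.017$. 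It also makes $\theta_0$ a nondegenerate maximizer of $J_1+J_2$, hence an attracting fixed point of the first-order dynamics for small $\alpha$. The paper's witness does not give this: its $\theta_0$ is a minimizer, so the fixed point is repelling and does not back the closing ``converges to'' sentence.

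\emph{Genericity.} The paper fixes the $\Hs_i$ and notes that once some $\Hs_i\neq\Hs_j$, the degenerate $(v_i)$ form a proper subspace of $\{\sum_iv_i=0\}$ (take $v_i=-v_j=w$ with $(\Hs_i-\Hs_j)w\neq0$). This is elementary and dimension-free, but it treats the $v_i$ as free. Your analyticity argument is genericity over actual problem instances, but as stated it has a hole. The limit is analytic only along a local branch $\theta_0(p)$ of nondegenerate roots. That branch lives on an open parameter set that may be disconnected and may carry several branches. So one nonvanishing example gives a null zero set only on its own component. The conclusion also covers only the two-bandit family, and only for instance-dependent small $\alpha$. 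Either state the claim locally around your witness, or use the paper's linear-algebra formulation.
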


The floor is what dropping curvature costs, mirroring the FO-MAML phenomenon in the centralized setting~\cite{aistatsmaml,sgmrl,moreau}, where a first-order approximation is admissible only when the adaptation step is small. The exact variant removes the floor but must estimate $\hat H$. Choosing between them is the curvature bottleneck we study empirically in Section~\ref{sec:experiments}; in the moderate-adaptation regime (Section~\ref{sec:regimes}) the first-order variant is the default choice, and our tabular experiments show what it costs: it plateaus below the exact variant, consistent with the analysis.

\section{Step-size regimes}\label{sec:regimes}

The analysis exposes three distinct roles of $\alpha$ and $\beta$.

\paragraph{(R1) $\alpha\to0$: recovering non-personalized FRL}
\begin{proposition}\label{prop:alpha0}
Let $f(\theta):=\frac1n\sum_iJ_i(\theta)$. Under Assumptions~\ref{as:reward}--\ref{as:policy}, for all $\theta$,
$\norm{\nabla F(\theta)-\nabla f(\theta)}\le 2\alpha\LJ\GJ$ and $\big|F(\theta)-f(\theta)\big|\le\alpha\GJ^2$. Moreover at $\alpha=0$ the estimator~\eqref{eq:estimator} collapses to $\hat g_{\mathrm{out}}$, so \textsc{Per-FedAvg-PG} coincides with \textsc{FedAvg} applied to policy gradient, and \eqref{eq:mainbound} holds with $\GF=\GJ$, $\LF=\LJ$ and $\bhat=0$.
\end{proposition}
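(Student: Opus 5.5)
The proof splits into three independent claims: the gradient gap, the value gap, and the collapse at $\alpha=0$. Each follows from Lemma~\ref{lem:pg-bounds} and the chain-rule formula~\eqref{eq:metagrad}. I will argue per agent and then average over $i$, using the triangle inequality for the gradient and value bounds.

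\emph{Gradient gap.} By~\eqref{eq:metagrad},
\[
\nabla F_i(\theta)-\nabla J_i(\theta)=\big[\nabla J_i(\theta_i^+)-\nabla J_i(\theta)\big]+\alpha\nabla^2J_i(\theta)\nabla J_i(\theta_i^+).
\]
The first bracket is at most $\LJ\norm{\theta_i^+-\theta}=\alpha\LJ\norm{\nabla J_i(\theta)}\le\alpha\LJ\GJ$, by the Lipschitz property in Lemma~\ref{lem:pg-bounds}(iii) together with (ii). The second term is at most $\alpha\norm{\nabla^2J_i(\theta)}\,\norm{\nabla J_i(\theta_i^+)}\le\alpha\LJ\GJ$, again by (ii) and (iii). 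Adding the two and averaging over $i$ gives the bound $2\alpha\LJ\GJ$.

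\emph{Value gap.} By the mean value theorem along the segment from $\theta$ to $\theta_i^+$,
\[
|J_i(\theta_i^+)-J_i(\theta)|\le\sup\norm{\nabla J_i}\cdot\norm{\theta_i^+-\theta}\le\GJ\cdot\alpha\GJ=\alpha\GJ^2 .
\]
Averaging over $i$ gives $|F(\theta)-f(\theta)|\le\alpha\GJ^2$. This sup-norm route is cruder than a Taylor expansion (which would give $\alpha\norm{\nabla J_i}^2$ plus a remainder), but it is exactly the stated constant, so no sharper argument is needed.

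\emph{Collapse at $\alpha=0$.} Setting $\alpha=0$ gives $\tilde\theta=\theta$, and the factor $I+\alpha\hat H$ becomes $I$. Hence~\eqref{eq:estimator} reduces to $\hat g_{\mathrm{out}}$, the average of $g_i(\xi;\theta)$ over on-policy trajectories. The inner and Hessian batches are then unused, so Algorithm~\ref{alg:main} is FedAvg with local stochastic policy-gradient steps on the $J_i$.

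It remains to check that the constants in~\eqref{eq:mainbound} specialize correctly. The constants come from Lemma~\ref{lem:meta-reg} and Lemma~\ref{lem:estimator}:
\begin{itemize}
\item $\GF=(1+0)\GJ=\GJ$;
\item $\LF=\LJ+0=\LJ$;
\item $\bhat=0$, because the factor $\alpha$ appears in its numerator. Equivalently, $\hat g_{\mathrm{out}}$ is unbiased for $\nabla J_i(\theta)=\nabla F_i(\theta)$.
\item $\shat^2$ reduces to $3\GJ^2/m_{\mathrm{out}}$.
\end{itemize}
The proof of Theorem~\ref{thm:main} used only these properties and the standing hypotheses (with $\alpha>0$ used nowhere essential). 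Rerunning it at $\alpha=0$ therefore gives~\eqref{eq:mainbound} with these constants.

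The only potential obstacle is the formal requirement $\alpha>0$ in Lemma~\ref{lem:meta-reg} and Theorem~\ref{thm:main}. I expect to dispose of it by noting that each of those proofs is valid verbatim at $\alpha=0$, or, more simply, by applying Lemma~\ref{lem:pg-bounds} directly to $f$.
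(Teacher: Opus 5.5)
Your proposal is correct and follows the paper's proof essentially step for step. It uses the same two-term split of $\nabla F_i-\nabla J_i$ into a displacement term and a curvature-prefactor term, each bounded by $\alpha\LJ\GJ$ via Lemma~\ref{lem:pg-bounds}(ii)--(iii), then the same mean-value bound for the value gap, and the same direct substitution of $\alpha=0$ into the estimator and into the constants $\GF,\LF,\bhat,\shat^2$. Your explicit check that the $\alpha>0$ hypothesis of Lemma~\ref{lem:meta-reg} and Theorem~\ref{thm:main} is inessential is a small point of care that the paper leaves implicit.
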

Objective, algorithm and guarantee therefore degenerate \emph{jointly} and continuously to non-personalized federated policy gradient~\cite{fednpg,fastfedpg}: the bias floor disappears, condition (iii) of Corollary~\ref{cor:rates} becomes vacuous, and trajectory complexity improves to $\mathcal O(\varepsilon^{-2})$ because the inner batch need not grow. Personalization is thus a continuous design parameter rather than a discrete choice of mechanism, and $\alpha$ is that parameter. As the two regimes below make explicit, its cost appears in the smoothness constant, in the required inner-batch size, and in the Hessian-free variant's bias.

\paragraph{(R2) Moderate adaptation, $\alpha\le\frac1{2\LJ}$.} Here $1+\alpha\LJ\le\frac32$, so $\GF\le\frac32\GJ$ and $\LF \le\tfrac94\LJ+\alpha\rJ\GJ\le\frac94\LJ+\frac{\rJ\GJ}{2\LJ}$; all curvature maps $I+\alpha\nabla^2J_i(\theta)$ are uniformly well-conditioned (singular values in $[\frac12,\frac32]$), and Remark~\ref{rem:interp} converts meta-stationarity into near-stationarity of each agent's \emph{adapted} policy. This is the recommended operating regime: personalization is nontrivial, all constants are $\Theta(1)$ multiples of the base constants, and both the exact and the stochastic variants converge to genuine $\varepsilon$-FOSPs. A secondary transition occurs at $\alpha^\dagger:=\LJ/(\rJ\GJ)$: below it meta-smoothness is dominated by the base smoothness ($\LF=\Theta(\LJ)$); above it the third-order term $\alpha\rJ\GJ$ dominates, and through $\beta\le\frac1{6\LF}$ the round complexity $K=\mathcal O(\Delta\LF\GF\varepsilon^{-3/2})$ grows \emph{linearly in $\alpha$}. Stronger personalization therefore incurs a proportional increase in communication.

\paragraph{(R3) Outer step size and local-step budget.} Bound~\eqref{eq:mainbound} is governed by three products: $\beta K\tau$ (optimization), $\beta\tau$ (drift), and $\beta/n$ (noise). Local work therefore helps only up to $\beta\tau\lesssim\sqrt\varepsilon/(\LF\GF)$, beyond which the drift term $\tfrac32\LF^2\GF^2\beta^2\tau^2$ dominates; balancing the two yields the $\tau=\Theta(\varepsilon^{-1/2})$, $K=\Theta(\varepsilon^{-3/2})$ frontier. At the other extreme $\tau=1$ removes drift entirely, leaving conditions (i) and (iv) of Corollary~\ref{cor:rates} binding and giving $K=\mathcal O(\LF\Delta/\varepsilon+\LF\Delta\shat^2/(n\varepsilon^2))$, a $1/n$ variance reduction obtained by communicating at every step; the two regimes cross where $\LF\beta\shat^2/n$ equals the drift term. The bias floor $\frac92\bhat^2=\Theta(\alpha^2\LJ^2\GJ^2/m_{\mathrm{in}})$ is invariant to $(\beta,\tau,K,n)$ and is governed by the inner batch alone, and since the required $m_{\mathrm{in}}$ shrinks quadratically as $\alpha$ decreases, weak personalization also demands few additional trajectories.

\section{Experimental evaluation}\label{sec:experiments}
We evaluate on two task families. Tabular gridworlds admit exact evaluation of $F$ and $\nabla F$ by dynamic programming, so claims about the objective, the estimator and the dependence on $\alpha$ are tested without sampling noise. A neural continuous-navigation policy (one hidden layer, $d=360$) shares parameters across states, which a tabular per-state policy does not, and so can encode a transferable skill; it is used to test the curvature bottleneck and few-shot personalization to unseen agents. Code and configurations are available at \url{https://github.com/AliBeikmohammadi/Per-FedAvg-PG}.

\paragraph{Setup.} The tabular family has $n=8$ agents on a $5\times5$ grid ($\gamma=0.9$, horizon $15$) differing only in goal cell. The neural family places goals on a shared arc of a circle (a common ``navigate toward the region'' structure); we train on six goals and hold out three unseen, in-distribution goals, evaluating returns by Monte Carlo. Adaptation uses a single inner step ($\nu=1$) throughout. 
All curves average $10$ seeds; shaded bands are $\pm1$ standard deviation, except in Figure~\ref{fig:neural} (left), where the outcome is bimodal and we report medians with per-seed points. Full environment specifications, hyperparameters and metrics are in \suppexp.

\begin{figure}[t]
\centering
\includegraphics[width=0.5\linewidth]{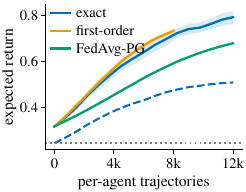}\hfill
\includegraphics[width=0.5\linewidth]{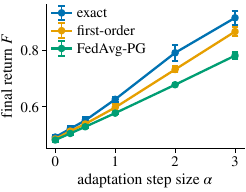}
\caption{\textbf{Tabular gridworld, exact evaluation.} \emph{Left:} post-adaptation value $F$ vs.\ per-agent trajectories; dashed is the exact variant \emph{before} adaptation, dotted a random policy. \emph{Right:} final $F$ vs.\ $\alpha$; at $\alpha=0$ all three coincide, as Proposition~\ref{prop:alpha0} requires.}
\Description{Two line charts. Left: expected return against per-agent trajectories from 0 to 12k, with three solid curves rising and flattening (exact highest, first-order just below, FedAvg-PG lowest) a dashed curve for the exact variant before adaptation well below them, and a flat dotted line near the bottom for a random policy; shaded bands show seed variability. Right: final return against adaptation step size from 0 to 3, where the three curves start together at the left edge and fan apart as the step size grows, exact highest and FedAvg-PG lowest.}
\label{fig:tab}
\end{figure}

\paragraph{Personalization within federation (Figure~\ref{fig:tab}, left).} All methods learn a common initialization, evaluated by the exact post-adaptation objective; non-personalized \textsc{FedAvg-PG} forms a single policy-gradient estimate from $m_{\mathrm{in}}+m_{\mathrm h}+m_{\mathrm{out}}$ trajectories, so it is given the same per-step sampling budget as the exact variant. The exact variant reaches $F=0.79$ and the first-order variant $0.73$, both above non-personalized \textsc{FedAvg-PG} ($0.68$) and far above a uniform-random policy ($0.24$). The gap between the initialization \emph{before} its adaptation step ($f=0.51$) and \emph{after} it ($0.79$) is precisely the value contributed by the single local policy-gradient step that Theorem~\ref{thm:main} analyzes. The exact-versus-first-order gap of $0.06$ is consistent with the bias floor of Corollary~\ref{cor:fo} and Proposition~\ref{prop:fofix}. At matched communication rounds (Figure~\ref{fig:e6} in \suppexp) the Hessian-free variant plateaus below the exact one rather than catching up. On the trajectory axis the two are not directly comparable, since the curvature batch makes each exact step cost $m_{\mathrm{in}}+m_{\mathrm h}+m_{\mathrm{out}}=30$ trajectories against $20$ for the first-order variant; at the matched budget of $8000$ per-agent trajectories the first-order variant is marginally ahead ($0.73$ against $0.71$), and the exact variant overtakes it only once allowed the extra samples its curvature estimate requires. The bias floor is therefore a statement about the limit of each method, not about sample efficiency at a fixed budget, which is the same trade-off the curvature bottleneck exhibits in the neural setting.

\paragraph{The adaptation step size $\alpha$ (Figure~\ref{fig:tab}, right).} Sweeping $\alpha$ confirms its role as a continuous personalization parameter. At $\alpha=0$ the three methods coincide ($0.49/0.49/0.48$): 
adaptation is inert and \textsc{Per-FedAvg-PG} reduces to federated policy optimization, exactly as Proposition~\ref{prop:alpha0} predicts. As $\alpha$ grows the personalized objective pulls ahead ($0.92/0.87/0.78$ at $\alpha=3$), the benefit increasing with adaptation strength. Because the tabular evaluation is exact, these numbers are properties of the objective, not of an estimator. Plotted against communication rounds rather than trajectories, the same comparison confirms that this ordering holds throughout training and not only at the horizon (Figure~\ref{fig:e6} in \suppexp).

\begin{figure}[t]
\centering
\includegraphics[width=0.5\linewidth]{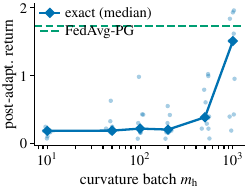}\hfill
\includegraphics[width=0.5\linewidth]{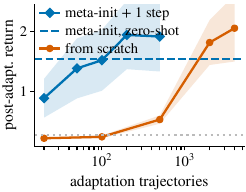}
\caption{\textbf{Neural navigation, Monte Carlo evaluation.} \emph{Left:} curvature bottleneck: return of the \emph{exact} variant vs.\ curvature batch $m_{\mathrm h}$ (per-seed points, median line); destabilized until $m_{\mathrm h}\approx1000$, consistent with the $\alpha^2\LJ^2/m_{\mathrm h}$ term of Lemma~\ref{lem:estimator}. \emph{Right:} few-shot return on \emph{held-out} agents vs.\ adaptation trajectories; success rates in Figure~\ref{fig:success} of \suppexp.}
\Description{Two charts. Left: post-adaptation return against curvature batch on a logarithmic axis from 10 to 1000, with scattered per-seed points and a median line that stays flat and low across the smaller batches and turns sharply upward only at the largest, still ending below a horizontal dashed line marking the curvature-free FedAvg-PG reference. Right: post-adaptation return against adaptation trajectories on a logarithmic axis, with a solid meta-initialization curve dipping at the smallest batch then rising above a horizontal dashed line marking its zero-shot level, and a from-scratch curve far below it that only climbs at the largest budgets.}
\label{fig:neural}
\end{figure}

\paragraph{The curvature-estimation bottleneck (Figure~\ref{fig:neural}, left).} Varying $m_{\mathrm h}$ with all else fixed, the exact variant is destabilized whenever the curvature batch is modest: the median post-adaptation return stays near $0.2$ for all $m_{\mathrm h}\le200$, with no seed reaching the \textsc{FedAvg-PG} level. It recovers only at a large curvature batch: median $0.39$ at $m_{\mathrm h}=500$ and $1.51$ at $m_{\mathrm h}=1000$, where the seeds separate cleanly into $6$ that train ($\ge1.5$, of which $3$ match the curvature-free reference of $1.73$) and $4$ that do not ($\le0.95$), so the outcome is bimodal and we report medians with per-seed points. Reliable exact training thus demands a Hessian batch roughly two orders of magnitude larger than the inner and outer batches ($m_{\mathrm h}\approx1000$ against $m_{\mathrm{in}}=m_{\mathrm{out}}=10$). This is the empirical case for the first-order variant: it removes the quantity that must be estimated, and incurs in exchange the bias characterized in Corollary~\ref{cor:fo} and Proposition~\ref{prop:fofix}.

\paragraph{Few-shot personalization to unseen agents (Figure~\ref{fig:neural}, right).} Our main experiment tests the deployment claim directly: adapting the learned initialization to an agent that never participated in training. We meta-train the \emph{first-order} variant (the variant that trains reliably at these batch sizes) on the six training goals and measure post-adaptation performance on the three held-out goals as a function of the adaptation budget, against \emph{from-scratch} training of each held-out agent (independent learning, no shared initialization) and its large-budget \emph{specialist} limit. The meta-initialization already generalizes \emph{zero-shot}, with return $1.53$ versus $0.27$ for a random policy. Because the analyzed objective adapts with the \emph{deterministic} gradient, we adapt with one policy-gradient step and vary its batch: once the step is estimated from enough trajectories to approximate that gradient it improves the initialization further (return $1.94$ at a $200$-trajectory step), whereas a $20$-trajectory step is too noisy to help, a finite-sample effect the deterministic-adaptation objective does not model, and a concrete instance of the inner-loop bias of Lemma~\ref{lem:estimator}(ii). At every matched budget the meta-initialization dominates from-scratch training by a wide margin (return $1.52$ vs.\ $0.25$ at $100$ trajectories), and from-scratch needs roughly $2000$ trajectories (about an order of magnitude more) to reach what the meta-initialization attains zero-shot. 
In interpretable terms (Figure~\ref{fig:success} in \suppexp) the meta-initialization reaches the goal in $75\%$ of held-out episodes zero-shot and up to $86\%$ after one adequately estimated step, against $15$--$37\%$ for from-scratch across the same range of budgets (the large-budget specialist reaches ${\sim}90\%$); at a $100$-trajectory budget the gap is $80\%$ versus $18\%$.
Figure~\ref{fig:e7} in \suppexp\ shows what these rates correspond to behaviorally, at the level of individual rollouts on an unseen agent.

\paragraph{Training cost.} The trajectory counts above are \emph{deployment-time} adaptation samples. The shared initialization itself consumed about $15{,}000$ trajectories per training agent ($90{,}000$ in total), a one-time cost amortized over every agent that later adapts from it. 
The comparison therefore concerns the marginal cost of onboarding a new agent, not total sample cost, and is favorable only when enough agents adapt for the amortization to pay.

These experiments isolate the mechanism rather than establish scale, and \suppexp\ states their scope and limitations in full.

\section{Conclusion}\label{sec:conclusion}

We formulated personalized FRL with a MAML-style initialization that each agent adapts by a single local policy-gradient step, and showed that \textsc{Per-FedAvg-PG} reaches an $\varepsilon$-first-order stationary point of the personalized meta-objective in $\mathcal O(\varepsilon^{-3/2})$ rounds with $\tau=\Theta(\varepsilon^{-1/2})$ local steps. The bounded-gradient and bounded-heterogeneity conditions the supervised theory imposes hold automatically in the policy-gradient setting, with explicit constants, which makes the federated analysis tractable. Two findings shape the practical picture. The exact meta-gradient attains the fast rate, but estimating its inner-loop Hessian is the binding constraint, which motivates the first-order variant; we bound its bias and show that its fixed points carry a residual of order $\alpha$. The step size $\alpha$ acts as a continuous personalization parameter whose magnitude carries a corresponding cost in smoothness, bias, and communication.

Three caveats bound these conclusions. The guarantee is first-order stationarity of a nonconcave objective rather than global optimality; the automatic constants are uniform worst-case quantities, establishing that no extra assumption is needed rather than that the rates are tight; and the experiments isolate the mechanism at small scale rather than demonstrating it at scale. Client sampling with a control-variate correction, the debiased stochastic-adaptation objective of SG-MRL~\cite{sgmrl}, variance-reduced inner estimators, and a matched-budget comparison of personalization mechanisms are the natural extensions, and \supplim\ treats the limitations and the extensions in full.



\ifappendix\else\balance\fi
\bibliographystyle{plain}
\bibliography{refs}

\ifappendix
\clearpage
\onecolumn
\appendix

\section{Algorithm and notation}\label{app:alg}

\subsection{One communication round}\label{app:round}
Figure~\ref{fig:method} depicts the structure of a single round of Algorithm~\ref{alg:main}. The server holds only the shared initialization $\theta^k$ and never sees a trajectory: it broadcasts $\theta^k$, each agent runs $\tau$ local meta-policy-gradient steps against its own MDP using freshly sampled batches, and the server averages the returned parameters. The averaged sequence $\bar\theta^{k,t}$ analyzed in Section~\ref{sec:analysis} is materialized only at round boundaries, where it coincides with $\theta^k$; between boundaries it is a virtual object, since the agents are apart.

\begin{figure}[ht]\centering
\begin{tikzpicture}[font=\footnotesize,>={Stealth[round]},
  server/.style={draw=srv,thick,rounded corners,fill=srv!8,align=center,inner sep=3.5pt,minimum width=90mm},
  agent/.style={draw=agt!80!black,thick,rounded corners,fill=agt!12,align=center,inner sep=3pt,text width=22mm},
  note/.style={draw=ggray,dashed,rounded corners,fill=ggray!6,align=left,inner sep=3.5pt,text width=90mm}]
  \node[server](S){\textbf{Server}\;\;$\theta^{k+1}=\tfrac1n\sum_{i=1}^n\theta_i^{k,\tau}$};
  \node[agent,below left=11mm and 4mm of S](A1){Agent $1$\\ MDP $\mathcal M_1$\\ $\tau$ meta-PG};
  \node[agent,right=6mm of A1](A2){Agent $2$\\ MDP $\mathcal M_2$\\ $\tau$ meta-PG};
  \node[right=4mm of A2](dots){$\cdots$};
  \node[agent,right=4mm of dots](An){Agent $n$\\ MDP $\mathcal M_n$\\ $\tau$ meta-PG};
  \foreach \a in {A1,A2,An}{\draw[->,srv,thick] ($(S.south)+(-10mm,0)$) -- ($(\a.north)+(-3mm,0)$);}
  \foreach \a in {A1,A2,An}{\draw[->,agt!80!black,dashed] ($(\a.north)+(3mm,0)$) -- ($(S.south)+(10mm,0)$);}
  \node[srv,font=\scriptsize] at ($(S.south)+(-19mm,-4mm)$){broadcast $\theta^k$};
  \node[agt!70!black,font=\scriptsize] at ($(S.south)+(20mm,-4mm)$){return $\theta_i^{k,\tau}$};
  \node[note,below=7mm of A2](N){\textbf{local meta-PG step} (fresh batches):
  $\theta\leftarrow\theta+\beta\,\widehat{\nabla F}_i(\theta)$, \; $F_i(\theta)=J_i\!\big(\theta+\alpha\nabla J_i(\theta)\big)$};
  \draw[->,ggray] (A2.south) -- (N.north);
\end{tikzpicture}
\caption{One communication round of \textsc{Per-FedAvg-PG} (Algorithm~\ref{alg:main}). Only parameters cross the link; trajectories stay local.}
\Description{A schematic of one communication round. A wide server box at the top states the averaging rule over the n returned parameter vectors. Solid arrows fan downward from the server to a row of agent boxes labeled agent 1, agent 2 and agent n, each holding its own MDP and performing tau local meta-policy-gradient steps, and dashed arrows return from each agent back to the server. A dashed note box below the agents gives the local update rule and the per-agent objective.}
\label{fig:method}
\end{figure}

\subsection{Notation and constants}\label{app:notation}

For convenience we repeat the quantities defined in the main text. The trajectory law is \eqref{eq:trajlaw}, the policy gradient and its score-function estimator are \eqref{eq:pg}, and the Hessian representation is \eqref{eq:hess}, where
\[
\nu_i(\xi;\theta)=\sum_{h=0}^{H}\log\pi(a_h|s_h;\theta)\,R_i^h(\xi),
\qquad g_i=\nabla\nu_i,
\qquad \nabla_\theta\log q_i(\xi;\theta)=\sum_{h=0}^H\nabla_\theta\log\pi(a_h|s_h;\theta),
\]
the last identity because $\mu_i$ and $P_i$ do not depend on $\theta$. The base constants are those of \eqref{eq:constants},
\[
\bar R=\tfrac{\Rmax}{1-\gamma},\quad
\GJ=\tfrac{\gmax \Rmax}{(1-\gamma)^2},\quad
\LJ=\tfrac{\Rmax((H+1)\gmax^2+M)}{(1-\gamma)^2},\quad
\bar M=\tfrac{M\Rmax}{(1-\gamma)^2},\quad
\bar\Lambda=\tfrac{\Lambda\Rmax}{(1-\gamma)^2},
\]
\[
\rJ=(H+1)\gmax(\LJ+\bar M)+(H+1)M\GJ+\bar\Lambda,
\]
and the meta-constants of Lemma~\ref{lem:meta-reg} are $\GF=(1+\alpha\LJ)\GJ$ and $\LF=(1+\alpha\LJ)^2\LJ+\alpha\rJ\GJ$. The adapted parameter is $\theta_i^{+}(\theta)=\theta+\alpha\nabla J_i(\theta)$, the meta-gradient is \eqref{eq:metagrad}, and the estimator is \eqref{eq:estimator}, built from $\hat g_{\mathrm{in}},\hat H$ (independent batches of sizes $m_{\mathrm{in}},m_{\mathrm h}$ from $q_i(\cdot;\theta)$), $\tilde\theta=\theta+\alpha\hat g_{\mathrm{in}}$, and $\hat g_{\mathrm{out}}$ ($m_{\mathrm{out}}$ trajectories from $q_i(\cdot;\tilde\theta)$).

We use the following elementary facts without comment: for a random vector $X$ with $\norm{X}\le C$ almost surely, $\E\norm{X-\E X}^2\le\E\norm{X}^2\le C^2$, and the empirical mean $\bar X_m$ of $m$ i.i.d.\ copies satisfies $\E\norm{\bar X_m-\E X}^2\le C^2/m$; for any $k$ vectors, $\norm{\sum_{j\le k}x_j}^2\le k\sum_{j\le k}\norm{x_j}^2$; and $\norm{vw^\top}_F=\norm{v}\norm{w}$.

\section{Proofs}\label{app:proofs}
This appendix proves every result stated in the paper. The notation and constants are collected in Section~\ref{app:notation}. Sections~\ref{app:lem1}--\ref{app:regimes} then give the proofs in the order the results appear. Throughout, $\norm{\cdot}$ is the Euclidean norm for vectors and the spectral norm for matrices, $\norm{\cdot}_F$ is the Frobenius norm, and we use $\norm{A}\le\norm{A}_F$ freely.

\subsection[Proof of the policy-gradient regularity lemma]{Proof of Lemma~\ref{lem:pg-bounds} (uniform policy-gradient regularity)}\label{app:lem1}

\begin{proof}
Fix $i\in[n]$ throughout and abbreviate, for a trajectory $\xi=(s_0,a_0,\dots,s_H,a_H)$,
\[
\psi_h(\theta):=\nabla_\theta\log\pi(a_h|s_h;\theta)\in\R^d,
\qquad
\Psi_h(\theta):=\nabla^2_\theta\log\pi(a_h|s_h;\theta)\in\R^{d\times d},
\]
so that Assumption~\ref{as:policy}(a)--(b) read $\norm{\psi_h(\theta)}\le\gmax$ and $\norm{\Psi_h(\theta)}_F\le M$ for every $h$, every $\xi$ and every $\theta$. Write
\[
z_i(\xi;\theta):=\nabla_\theta\log q_i(\xi;\theta)=\sum_{h=0}^H\psi_h(\theta),
\]
the trajectory score, where the second equality is obtained by taking $\log$ in \eqref{eq:trajlaw} and differentiating: the factors $\mu_i(s_0)$ and
$P_i(s_{h+1}|s_h,a_h)$ do not depend on $\theta$ and therefore drop out. In this notation the reward-to-go policy gradient of \eqref{eq:pg} is $g_i(\xi;\theta)=\sum_{h=0}^HR_i^h(\xi)\,\psi_h(\theta)$, and the quantity $\nu_i$ of the main text satisfies $\nabla\nu_i=g_i$ and $\nabla^2\nu_i(\xi;\theta)=\sum_{h=0}^HR_i^h(\xi)\,\Psi_h(\theta)$, since $R_i^h(\xi)$ is a function of the trajectory alone and carries no $\theta$-dependence.

We use the following elementary facts.
\begin{enumerate}
\item[(E1)] $\displaystyle\sum_{t=h}^{H}\gamma^t\le\sum_{t=h}^{\infty}\gamma^t=\frac{\gamma^h}{1-\gamma}$ and $\displaystyle\sum_{h=0}^{H}\gamma^h\le\frac{1}{1-\gamma}$, both because $\gamma\in[0,1)$.
\item[(E2)] $\norm{vw^{\!\top}}_F=\norm{v}\,\norm{w}$ for $v,w\in\R^d$: an outer product has the single nonzero singular value $\norm{v}\norm{w}$.
\item[(E3)] $\norm{A}\le\norm{A}_F$ for every matrix $A$, since the largest singular value is at most the square root of the sum of squared singular values.
\item[(E4)] $\norm{\E[X]}\le\E\norm{X}$ for any integrable random vector or matrix $X$ and any norm (Jensen's inequality; here all expectations are finite sums).
\item[(E5)] Each $\psi_h$ is $M$-Lipschitz in $\theta$: by (E3), $\norm{\nabla_\theta\psi_h(\theta)}=\norm{\Psi_h(\theta)}\le\norm{\Psi_h(\theta)}_F\le M$ for all $\theta$, so the mean value inequality gives $\norm{\psi_h(\theta_1)-\psi_h(\theta_0)}\le M\norm{\theta_1-\theta_0}$.
\end{enumerate}

\medskip
\noindent\emph{Step 1: statement (i), and a summed reward bound.}
By the triangle inequality, Assumption~\ref{as:reward} and (E1),
\[
\big|R_i^h(\xi)\big|
=\Big|\sum_{t=h}^{H}\gamma^t r_i(s_t,a_t)\Big|
\le\sum_{t=h}^{H}\gamma^t\big|r_i(s_t,a_t)\big|
\;\overset{\text{Asm.~\ref{as:reward}}}{\le}\;
\Rmax\sum_{t=h}^{H}\gamma^t
\;\overset{\text{(E1)}}{\le}\;
\frac{\Rmax\gamma^h}{1-\gamma}.
\]
Taking $h=0$ gives $|R_i(\xi)|=|R_i^0(\xi)|\le\Rmax/(1-\gamma)=\bar R$ for every $\xi$, hence $|J_i(\theta)|=\big|\E[R_i^0(\xi)]\big|\le\E\big|R_i^0(\xi)\big|\le\bar R$ by (E4). This proves (i).

Summing the displayed bound over $h$ and using (E1) once more yields the quantity that recurs in every remaining step:
\begin{equation}\label{eq:SR}
S_R(\xi):=\sum_{h=0}^{H}\big|R_i^h(\xi)\big|
\le\frac{\Rmax}{1-\gamma}\sum_{h=0}^{H}\gamma^h
\le\frac{\Rmax}{(1-\gamma)^2}
\qquad\text{for every }\xi .
\end{equation}
With \eqref{eq:SR} the constants of \eqref{eq:constants} take the compact form $\GJ=\gmax\cdot\Rmax/(1-\gamma)^2$, $\bar M=M\cdot\Rmax/(1-\gamma)^2$ and $\bar\Lambda=\Lambda\cdot\Rmax/(1-\gamma)^2$; that is, each is the relevant policy constant times the same factor $\Rmax/(1-\gamma)^2$.

\medskip
\noindent\emph{Step 2: statement (ii).}
Directly from $g_i(\xi;\theta)=\sum_{h}R_i^h(\xi)\psi_h(\theta)$, the triangle inequality, Assumption~\ref{as:policy}(a) and \eqref{eq:SR},
\[
\norm{g_i(\xi;\theta)}
\le\sum_{h=0}^{H}\big|R_i^h(\xi)\big|\,\norm{\psi_h(\theta)}
\;\overset{\text{Asm.~\ref{as:policy}(a)}}{\le}\;
\gmax\sum_{h=0}^{H}\big|R_i^h(\xi)\big|
=\gmax\,S_R(\xi)
\;\overset{\eqref{eq:SR}}{\le}\;
\frac{\gmax\Rmax}{(1-\gamma)^2}=\GJ ,
\]
uniformly in $\xi$ and $\theta$. Since $\nabla J_i(\theta)=\E_{\xi\sim q_i(\cdot;\theta)}[g_i(\xi;\theta)]$ by \eqref{eq:pg}, (E4) gives $\norm{\nabla J_i(\theta)}\le\E\norm{g_i(\xi;\theta)}\le\GJ$. This proves (ii). The same computation bounds the score, which we record for later:
\begin{equation}\label{eq:scorebound}
\norm{z_i(\xi;\theta)}\le\sum_{h=0}^{H}\norm{\psi_h(\theta)}\le(H+1)\gmax .
\end{equation}

\medskip
\noindent\emph{Step 3: derivation of the Hessian representation \eqref{eq:hess}.}
We derive \eqref{eq:hess} rather than quote it, so that the constant in (iii) can be read off directly. Because $\mathcal S_i$ and $\mathcal A_i$ are finite and the horizon is $H$, there are finitely many trajectories, so $\nabla J_i(\theta)=\sum_\xi q_i(\xi;\theta)\,g_i(\xi;\theta)$ is a \emph{finite} sum of products of continuously differentiable functions and may be differentiated term by term. Differentiating the $j$-th coordinate with respect to $\theta_k$ and using the log-derivative identity $\partial_k q_i=q_i\,\partial_k\log q_i=q_i\,(z_i)_k$,
\[
\partial_k\big(\nabla J_i(\theta)\big)_j
=\sum_\xi\Big[\underbrace{\partial_kq_i(\xi;\theta)}_{=\,q_i(z_i)_k}\,\big(g_i\big)_j
+q_i(\xi;\theta)\,\partial_k\big(g_i\big)_j\Big]
=\E_{\xi\sim q_i(\cdot;\theta)}\Big[\big(g_i\big)_j\big(z_i\big)_k+\big(\nabla_\theta g_i\big)_{jk}\Big].
\]
In matrix form, and using $\nabla_\theta g_i=\sum_{h}R_i^h\,\Psi_h=\nabla^2\nu_i$ (again because $R_i^h$ is $\theta$-free), this is exactly \eqref{eq:hess}:
\begin{equation}\label{eq:hessexplicit}
\nabla^2J_i(\theta)=\E_{\xi\sim q_i(\cdot;\theta)}\big[u_i(\xi;\theta)\big],
\qquad
u_i(\xi;\theta)=\underbrace{g_i(\xi;\theta)\,z_i(\xi;\theta)^{\!\top}}_{\text{rank one}}
+\underbrace{\sum_{h=0}^{H}R_i^h(\xi)\,\Psi_h(\theta)}_{=\,\nabla^2\nu_i(\xi;\theta)} ,
\end{equation}
the reward-to-go form of the stochastic second-order differential of~\cite{shen}. Note that the summand $u_i$ need not be symmetric, whereas its expectation $\nabla^2J_i(\theta)$ is; we only ever use $\norm{\cdot}$ as the operator norm, which is defined for arbitrary matrices, so this causes no difficulty.

\medskip
\noindent\emph{Step 4: statement (iii).}
We bound the two terms of \eqref{eq:hessexplicit} separately. For the rank-one term, (E2) turns the Frobenius norm into a product of vector norms, which Step 2 and \eqref{eq:scorebound} bound:
\[
\norm{g_i\,z_i^{\!\top}}_F
\overset{\text{(E2)}}{=}\norm{g_i}\,\norm{z_i}
\le\GJ\cdot(H+1)\gmax .
\]
For the second term, the triangle inequality, Assumption~\ref{as:policy}(b) and \eqref{eq:SR} give
\[
\Big\lVert\sum_{h=0}^{H}R_i^h\,\Psi_h\Big\rVert_F
\le\sum_{h=0}^{H}\big|R_i^h\big|\,\norm{\Psi_h}_F
\;\overset{\text{Asm.~\ref{as:policy}(b)}}{\le}\;
M\,S_R(\xi)
\;\overset{\eqref{eq:SR}}{\le}\;
\frac{M\Rmax}{(1-\gamma)^2}=\bar M .
\]
Adding the two, and substituting $\GJ=\gmax\Rmax/(1-\gamma)^2$ and $\bar M=M\Rmax/(1-\gamma)^2$ to combine them over the common denominator,
\[
\norm{u_i(\xi;\theta)}_F
\le(H+1)\gmax\GJ+\bar M
=\frac{(H+1)\gmax^2\Rmax}{(1-\gamma)^2}+\frac{M\Rmax}{(1-\gamma)^2}
=\frac{\Rmax\big((H+1)\gmax^2+M\big)}{(1-\gamma)^2}=\LJ ,
\]
uniformly in $\xi$ and $\theta$. Consequently, by \eqref{eq:hessexplicit}, (E3) and (E4),
\begin{equation}\label{eq:hessnorm}
\norm{\nabla^2J_i(\theta)}
\overset{\text{(E3)}}{\le}\norm{\E[u_i]}_F
\overset{\text{(E4)}}{\le}\E\norm{u_i}_F\le\LJ
\qquad\text{for every }\theta .
\end{equation}

\medskip
\noindent\emph{Step 5: $\nabla J_i$ is $\LJ$-Lipschitz.}
Fix $\theta_0,\theta_1$ and put $\delta:=\theta_1-\theta_0$, $\theta_s:=\theta_0+s\delta$. The map $s\mapsto\nabla J_i(\theta_s)$ is continuously differentiable with derivative $\nabla^2J_i(\theta_s)\delta$, so by the fundamental theorem of calculus and \eqref{eq:hessnorm},
\[
\norm{\nabla J_i(\theta_1)-\nabla J_i(\theta_0)}
=\Big\lVert\int_0^1\nabla^2J_i(\theta_s)\,\delta\,ds\Big\rVert
\le\int_0^1\norm{\nabla^2J_i(\theta_s)}\,\norm{\delta}\,ds
\le\LJ\norm{\delta},
\]
which completes (iii).

\medskip
\noindent\emph{Step 6: statement (iv).}
Keep $\delta,\theta_s$ as in Step 5. We bound the \emph{difference} $\nabla^2J_i(\theta_1)-\nabla^2J_i(\theta_0)$ directly rather than differentiating $\nabla^2J_i(\theta_s)$ in $s$. This is deliberate: differentiating would require a third derivative of $\log\pi$, which Assumption~\ref{as:policy} does not provide, since (c) grants only Lipschitz continuity of $\nabla^2\log\pi$ and not its differentiability. The argument below uses (c) exactly as stated and obtains the same constant.

Writing $\nabla^2J_i(\theta)=\sum_\xi q_i(\xi;\theta)u_i(\xi;\theta)$ and inserting the mixed term $\sum_\xi q_i(\xi;\theta_0)u_i(\xi;\theta_1)$,
\begin{equation}\label{eq:AB}
\nabla^2J_i(\theta_1)-\nabla^2J_i(\theta_0)
=\underbrace{\sum_\xi\big[q_i(\xi;\theta_1)-q_i(\xi;\theta_0)\big]u_i(\xi;\theta_1)}_{=:A\ \text{(distribution shift)}}
\;+\;\underbrace{\sum_\xi q_i(\xi;\theta_0)\big[u_i(\xi;\theta_1)-u_i(\xi;\theta_0)\big]}_{=:B\ \text{(curvature change)}} .
\end{equation}

\emph{Bounding $A$.} The trajectory weights move by at most $(H+1)\gmax\norm{\delta}$ in $\ell_1$ distance, equivalently by half that in total variation. Indeed, $s\mapsto q_i(\xi;\theta_s)$ is continuously differentiable with $\frac{d}{ds}q_i(\xi;\theta_s)=q_i(\xi;\theta_s)\inner{z_i(\xi;\theta_s)}{\delta}$ by the log-derivative identity, so
\[
\sum_\xi\big|q_i(\xi;\theta_1)-q_i(\xi;\theta_0)\big|
=\sum_\xi\Big|\int_0^1 q_i(\xi;\theta_s)\inner{z_i(\xi;\theta_s)}{\delta}\,ds\Big|
\le\int_0^1\underbrace{\sum_\xi q_i(\xi;\theta_s)\big|\inner{z_i(\xi;\theta_s)}{\delta}\big|}_{\le\,(H+1)\gmax\norm{\delta}\ \text{by Cauchy--Schwarz and }\eqref{eq:scorebound}}\!\!ds
\le(H+1)\gmax\norm{\delta},
\]
where we used $\sum_\xi q_i(\xi;\theta_s)=1$. Combining with $\norm{u_i}_F\le\LJ$ from Step 4,
\begin{equation}\label{eq:Abound}
\norm{A}_F\le\LJ\sum_\xi\big|q_i(\xi;\theta_1)-q_i(\xi;\theta_0)\big|\le(H+1)\gmax\LJ\norm{\delta}.
\end{equation}

\emph{Bounding $B$.} It suffices to show that $\theta\mapsto u_i(\xi;\theta)$ is Lipschitz uniformly in $\xi$, since $\sum_\xi q_i(\xi;\theta_0)=1$. Using \eqref{eq:hessexplicit} and splitting the rank-one part with the identity
$v_1w_1^{\!\top}-v_0w_0^{\!\top}=(v_1-v_0)w_1^{\!\top}+v_0(w_1-w_0)^{\!\top}$,
\[
u_i(\xi;\theta_1)-u_i(\xi;\theta_0)
=\big(g_i^{(1)}-g_i^{(0)}\big)\big(z_i^{(1)}\big)^{\!\top}
+g_i^{(0)}\big(z_i^{(1)}-z_i^{(0)}\big)^{\!\top}
+\sum_{h=0}^{H}R_i^h\big(\Psi_h(\theta_1)-\Psi_h(\theta_0)\big),
\]
with the shorthand $g_i^{(j)}:=g_i(\xi;\theta_j)$ and $z_i^{(j)}:=z_i(\xi;\theta_j)$. The three terms are bounded as follows.
\begin{itemize}
\item $\norm{g_i^{(1)}-g_i^{(0)}}\le\sum_h|R_i^h|\,\norm{\psi_h(\theta_1)-\psi_h(\theta_0)}
      \overset{\text{(E5)}}{\le}M\norm{\delta}S_R(\xi)\overset{\eqref{eq:SR}}{\le}\bar M\norm{\delta}$, so by (E2) and \eqref{eq:scorebound} the first term has Frobenius norm at most $\bar M(H+1)\gmax\norm{\delta}$.
\item $\norm{z_i^{(1)}-z_i^{(0)}}\le\sum_{h=0}^{H}\norm{\psi_h(\theta_1)-\psi_h(\theta_0)} \overset{\text{(E5)}}{\le}(H+1)M\norm{\delta}$, so by (E2) and Step 2 the second term has Frobenius norm at most $\GJ(H+1)M\norm{\delta}$.
\item $\big\lVert\sum_hR_i^h(\Psi_h(\theta_1)-\Psi_h(\theta_0))\big\rVert_F \le\sum_h|R_i^h|\,\norm{\Psi_h(\theta_1)-\Psi_h(\theta_0)}_F \overset{\text{Asm.~\ref{as:policy}(c)}}{\le}\Lambda\norm{\delta}S_R(\xi) \overset{\eqref{eq:SR}}{\le}\bar\Lambda\norm{\delta}$.
\end{itemize}
Adding the three and averaging over $q_i(\cdot;\theta_0)$,
\begin{equation}\label{eq:Bbound}
\norm{B}_F\le\big[(H+1)\gmax\bar M+(H+1)M\GJ+\bar\Lambda\big]\norm{\delta}.
\end{equation}

\emph{Conclusion.} Inserting \eqref{eq:Abound} and \eqref{eq:Bbound} into \eqref{eq:AB} and using
(E3),
\[
\norm{\nabla^2J_i(\theta_1)-\nabla^2J_i(\theta_0)}
\le\norm{A}_F+\norm{B}_F
\le\underbrace{\big[(H+1)\gmax(\LJ+\bar M)+(H+1)M\GJ+\bar\Lambda\big]}_{=\,\rJ\ \text{by }\eqref{eq:constants}}\norm{\theta_1-\theta_0},
\]
where the two $(H+1)\gmax$ contributions, one from $A$ and one from $B$, have been collected into the factor $(H+1)\gmax(\LJ+\bar M)$. This is (iv).
\end{proof}

\begin{remark}[Where each assumption is used]\label{rem:whereused} The three parts of Assumption~\ref{as:policy} enter cumulatively rather than one per statement. Part (i) uses Assumption~\ref{as:reward} alone. Part (a) is first needed in (ii) and reappears in (iii) and (iv), through the factor $(H+1)\gmax^2$ of $\LJ$ and the factor $(H+1)\gmax$ of $\rJ$; part (b) is first needed in (iii) and reappears in (iv), through $\bar M$ and $(H+1)M\GJ$; part (c) is confined to (iv), entering only through $\bar\Lambda$. It is therefore (c) alone, and hence the constant $\rJ$ alone, that is required only for the meta-smoothness constant $\LF$ and the sharpened bias bound of Proposition~\ref{prop:sharpbias}. Every bound is \emph{pathwise}, i.e.\ uniform over $\xi$ as well as $\theta$, which is what makes the drift bound of Lemma~\ref{lem:drift} deterministic rather than in expectation.
\end{remark}

\subsection[Proof of the meta-regularity lemma]{Proof of Lemma~\ref{lem:meta-reg} (regularity of the meta-objective)}\label{app:lem2}

\begin{proof}
Fix $i\in[n]$ and $\alpha>0$ throughout, and abbreviate the \emph{adaptation map} and the
\emph{adaptation Jacobian}
\[
T_i(\theta):=\theta_i^+(\theta)=\theta+\alpha\nabla J_i(\theta),
\qquad
\mathcal J_i(\theta):=I+\alpha\nabla^2J_i(\theta),
\]
so that $F_i=J_i\circ T_i$.

\medskip
\noindent\emph{Step 0: differentiability, symmetry, and the chain rule \eqref{eq:metagrad}.}
By Assumption~\ref{as:policy}, $\theta\mapsto\log\pi(a|s;\theta)$ is twice continuously differentiable, hence so is each trajectory weight $q_i(\cdot;\theta)$ of \eqref{eq:trajlaw}, being a finite product of such factors. As $\mathcal S_i,\mathcal A_i$ are finite and the horizon is $H$, the sum $J_i(\theta)=\sum_\xi q_i(\xi;\theta)R_i(\xi)$ is finite, so $J_i\in C^2(\R^d)$. Two consequences are used below. First, $\nabla^2J_i(\theta)$ is \emph{symmetric} for every $\theta$, by Schwarz's theorem. (Note that the per-trajectory matrix $u_i(\xi;\theta)$ of \eqref{eq:hessexplicit} is \emph{not} symmetric; only its expectation is.) Second, $T_i$ is continuously differentiable with Jacobian
\begin{equation}\label{eq:jacobian}
DT_i(\theta)=I+\alpha\,\nabla^2J_i(\theta)=\mathcal J_i(\theta),
\end{equation}
since differentiating $\theta\mapsto\theta$ gives $I$ and differentiating $\theta\mapsto\nabla J_i(\theta)$ gives $\nabla^2J_i(\theta)$.

Therefore $F_i=J_i\circ T_i$ is differentiable, and the chain rule for a scalar function composed with a vector map gives
\[
\nabla F_i(\theta)=\big[DT_i(\theta)\big]^{\!\top}\,\nabla J_i\big(T_i(\theta)\big).
\]
By \eqref{eq:jacobian} and the symmetry of $\nabla^2J_i(\theta)$ we have $[DT_i(\theta)]^{\!\top}=\mathcal J_i(\theta)^{\!\top}=\mathcal J_i(\theta)$, so the transpose may be dropped and
\begin{equation}\label{eq:metagradproof}
\nabla F_i(\theta)=\mathcal J_i(\theta)\,\nabla J_i\big(T_i(\theta)\big)
=\big(I+\alpha\nabla^2J_i(\theta)\big)\nabla J_i\big(\theta_i^+(\theta)\big),
\end{equation}
which is exactly \eqref{eq:metagrad}. The symmetry is needed at exactly this point: without it the meta-gradient would carry $\nabla^2J_i(\theta)^{\!\top}$, and the estimator \eqref{eq:estimator} would have to transpose $\hat H$ accordingly.

\medskip
\noindent\emph{Step 1: spectral bounds on the adaptation Jacobian.}
Because $\nabla^2J_i(\theta)$ is symmetric with $\norm{\nabla^2J_i(\theta)}\le\LJ$ by Lemma~\ref{lem:pg-bounds}(iii), its eigenvalues $\lambda_1,\dots,\lambda_d$ are real and lie in $[-\LJ,\LJ]$. The eigenvalues of $\mathcal J_i(\theta)$ are therefore $1+\alpha\lambda_j$, and since $\mathcal J_i(\theta)$ is symmetric its operator norm is the largest absolute eigenvalue:
\begin{equation}\label{eq:specbound}
\norm{\mathcal J_i(\theta)}=\max_{j\le d}\big|1+\alpha\lambda_j\big|\le 1+\alpha\LJ,
\qquad
1+\alpha\lambda_j\in[\,1-\alpha\LJ,\;1+\alpha\LJ\,]\ \ \text{for all }j .
\end{equation}
The same bound follows from the triangle inequality alone, $\norm{I+\alpha\nabla^2J_i(\theta)}\le\norm{I}+\alpha\norm{\nabla^2J_i(\theta)}\le1+\alpha\LJ$, where $\norm{I}=1$ \emph{because $\norm{\cdot}$ is the operator norm}; the Frobenius norm would give the useless $\norm{I}_F=\sqrt d$, which is why (ii) and (iii) are stated in the operator norm while Lemma~\ref{lem:pg-bounds}(iii) supplies the dimension-free Frobenius bound. The two-sided form in
\eqref{eq:specbound} is what Remark~\ref{rem:interp} uses: when $\alpha\LJ<1$ every eigenvalue $1+\alpha\lambda_j$ is bounded below by $1-\alpha\LJ>0$, so $\mathcal J_i(\theta)$ is invertible with $\norm{\mathcal J_i(\theta)^{-1}}=\big(\min_j|1+\alpha\lambda_j|\big)^{-1}\le(1-\alpha\LJ)^{-1}$.

\medskip
\noindent\emph{Step 2: statement (i), and the bound $\Delta\le2\bar R$.}
Lemma~\ref{lem:pg-bounds}(i) states $|J_i(\theta')|\le\bar R$ for \emph{every} $\theta'\in\R^d$; applying it at the particular point $\theta'=T_i(\theta)$ gives
\[
\big|F_i(\theta)\big|=\big|J_i\big(T_i(\theta)\big)\big|\le\bar R
\qquad\text{for every }\theta\in\R^d .
\]
Averaging over $i$ and using the triangle inequality gives $|F(\theta)|\le\bar R$ for every $\theta$, hence
\[
-\bar R\;\le\;\inf_\theta F(\theta)
\qquad\text{and}\qquad
F^\star:=\sup_\theta F(\theta)\le\bar R<\infty .
\]
Both bounds are needed: finiteness of $F^\star$ makes the optimality gap $\Delta=F^\star-F(\theta^0)$ well defined, and combining the two gives the quantitative bound asserted in Theorem~\ref{thm:main},
\begin{equation}\label{eq:Deltabound}
\Delta=F^\star-F(\theta^0)\le\bar R-(-\bar R)=2\bar R=\frac{2\Rmax}{1-\gamma}.
\end{equation}
In particular $\Delta$ is bounded by an explicit problem constant, with no assumption on the initialization $\theta^0$; this is a feature of the RL setting, since in supervised personalized FL the analogous gap must be assumed finite.

\medskip
\noindent\emph{Step 3: statement (ii).}
Apply submultiplicativity of the operator norm, $\norm{Av}\le\norm{A}\norm{v}$, to \eqref{eq:metagradproof}, then \eqref{eq:specbound} and Lemma~\ref{lem:pg-bounds}(ii) evaluated at the point $T_i(\theta)$:
\[
\norm{\nabla F_i(\theta)}
\le\norm{\mathcal J_i(\theta)}\cdot\norm{\nabla J_i\big(T_i(\theta)\big)}
\;\overset{\eqref{eq:specbound}}{\le}\;(1+\alpha\LJ)\cdot\norm{\nabla J_i\big(T_i(\theta)\big)}
\;\overset{\text{Lem.~\ref{lem:pg-bounds}(ii)}}{\le}\;(1+\alpha\LJ)\GJ=\GF .
\]
Both factors are uniform in $\theta$ because Lemma~\ref{lem:pg-bounds}(ii)--(iii) hold for all parameters; in particular no bound on $\norm{\theta}$ or on the distance to a stationary point is needed. Averaging over $i$ gives $\norm{\nabla F(\theta)}\le\GF$ as well.

\medskip
\noindent\emph{Step 4: the adaptation map is $(1+\alpha\LJ)$-Lipschitz.}
For any $\theta,\theta'$, the definition of $T_i$, the triangle inequality and Lemma~\ref{lem:pg-bounds}(iii) give
\begin{equation}\label{eq:adaptlip}
\norm{T_i(\theta)-T_i(\theta')}
\;\overset{(*)}{\le}\;\norm{\theta-\theta'}+\alpha\norm{\nabla J_i(\theta)-\nabla J_i(\theta')}
\;\overset{(**)}{\le}\;(1+\alpha\LJ)\norm{\theta-\theta'},
\end{equation}
where $(*)$ is the triangle inequality applied to $T_i(\theta)-T_i(\theta')=(\theta-\theta')+\alpha(\nabla J_i(\theta)-\nabla J_i(\theta'))$ and $(**)$ uses the $\LJ$-Lipschitz continuity of $\nabla J_i$ from Lemma~\ref{lem:pg-bounds}(iii). Equivalently, \eqref{eq:adaptlip} follows from \eqref{eq:jacobian} and \eqref{eq:specbound} via the mean value inequality, $\norm{T_i(\theta)-T_i(\theta')}\le\sup_s\norm{DT_i(\theta_s)}\norm{\theta-\theta'}$. The interpretation is that a perturbation of the shared initialization is amplified by at most $1+\alpha\LJ$ before it reaches the adapted parameter, which is precisely the factor that will appear squared in $\LF$.

\medskip
\noindent\emph{Step 5: statement (iii) for each $F_i$.}
Write $v_i(\theta):=\nabla J_i(T_i(\theta))$, so that $\nabla F_i(\theta)=\mathcal J_i(\theta)v_i(\theta)$ by \eqref{eq:metagradproof}. Insert and remove the mixed term $\mathcal J_i(\theta)v_i(\theta')$:
\[
\nabla F_i(\theta)-\nabla F_i(\theta')
=\underbrace{\mathcal J_i(\theta)\big(v_i(\theta)-v_i(\theta')\big)}_{\text{adapted gradient moves}}
+\underbrace{\big(\mathcal J_i(\theta)-\mathcal J_i(\theta')\big)v_i(\theta')}_{\text{curvature moves}},
\qquad
\mathcal J_i(\theta)-\mathcal J_i(\theta')=\alpha\big(\nabla^2J_i(\theta)-\nabla^2J_i(\theta')\big),
\]
the last identity because the $I$ in $\mathcal J_i$ cancels. Hence, by the triangle inequality and submultiplicativity,
\begin{align*}
\norm{\nabla F_i(\theta)-\nabla F_i(\theta')}
&\overset{(+)}{\le}\;
\norm{\mathcal J_i(\theta)}\,\norm{v_i(\theta)-v_i(\theta')}
\;+\;\alpha\,\norm{\nabla^2J_i(\theta)-\nabla^2J_i(\theta')}\,\norm{v_i(\theta')}\\[2pt]
&\overset{(++)}{\le}\;
(1+\alpha\LJ)\cdot\LJ\norm{T_i(\theta)-T_i(\theta')}
\;+\;\alpha\cdot\rJ\norm{\theta-\theta'}\cdot\GJ\\[2pt]
&\overset{(-)}{\le}\;
(1+\alpha\LJ)\cdot\LJ\cdot(1+\alpha\LJ)\norm{\theta-\theta'}+\alpha\rJ\GJ\norm{\theta-\theta'}\\[2pt]
&=\;\Big[(1+\alpha\LJ)^2\LJ+\alpha\rJ\GJ\Big]\norm{\theta-\theta'}
\;=\;\LF\norm{\theta-\theta'},
\end{align*}
where the three steps use, respectively: $(+)$ the decomposition above with the triangle inequality and $\norm{Av}\le\norm{A}\norm{v}$; $(++)$ the bound \eqref{eq:specbound} on $\norm{\mathcal J_i(\theta)}$, then Lemma~\ref{lem:pg-bounds}(iii) applied to the \emph{pair of adapted points} $T_i(\theta),T_i(\theta')$ to obtain $\norm{v_i(\theta)-v_i(\theta')}=\norm{\nabla J_i(T_i(\theta))-\nabla J_i(T_i(\theta'))}\le\LJ\norm{T_i(\theta)-T_i(\theta')}$, then Lemma~\ref{lem:pg-bounds}(iv) for the Hessian difference and Lemma~\ref{lem:pg-bounds}(ii) at $T_i(\theta')$ for $\norm{v_i(\theta')}\le\GJ$; and $(-)$ the Lipschitz bound \eqref{eq:adaptlip} on the adaptation map. This proves that $\nabla F_i$ is $\LF$-Lipschitz. The two summands of $\LF$ are structurally distinct: $(1+\alpha\LJ)^2\LJ$ is the base smoothness of $J_i$ inflated by the adaptation map acting twice (once on the input perturbation and once through the Jacobian factor), while $\alpha\rJ\GJ$ is the genuinely new third-order contribution, absent when $\alpha=0$, coming from the movement of the curvature factor itself.

\medskip
\noindent\emph{Step 6: statement (iii) for $F$, and the resulting smoothness inequality.}
Since $F=\frac1n\sum_iF_i$ we have $\nabla F=\frac1n\sum_i\nabla F_i$, so by the triangle inequality
\[
\norm{\nabla F(\theta)-\nabla F(\theta')}
\le\frac1n\sum_{i=1}^n\norm{\nabla F_i(\theta)-\nabla F_i(\theta')}
\le\frac1n\sum_{i=1}^n\LF\norm{\theta-\theta'}=\LF\norm{\theta-\theta'} :
\]
averaging a family of $\LF$-Lipschitz maps preserves the constant, and in particular the constant does \emph{not} degrade with the number of agents $n$ or with the heterogeneity of their MDPs.

For later use we record the standard consequence of $\LF$-Lipschitz continuity of $\nabla F$. For any $\theta,p\in\R^d$ and any $\beta\in\R$, the function $s\mapsto F(\theta+s\beta p)$ is continuously differentiable, so by the fundamental theorem of calculus
\[
F(\theta+\beta p)-F(\theta)-\beta\inner{\nabla F(\theta)}{p}
=\beta\int_0^1\inner{\nabla F(\theta+s\beta p)-\nabla F(\theta)}{p}\,ds,
\]
and bounding the integrand by Cauchy--Schwarz and the Lipschitz property, $\norm{\nabla F(\theta+s\beta p)-\nabla F(\theta)}\le\LF s|\beta|\norm{p}$, together with $\int_0^1s\,ds=\tfrac12$, yields
\begin{equation}\label{eq:descentlemma}
\Big|F(\theta+\beta p)-F(\theta)-\beta\inner{\nabla F(\theta)}{p}\Big|\le\frac{\LF\beta^2}{2}\norm{p}^2 ,
\quad\text{hence}\quad
F(\theta+\beta p)\ge F(\theta)+\beta\inner{\nabla F(\theta)}{p}-\frac{\LF\beta^2}{2}\norm{p}^2 .
\end{equation}
The second form is the ascent inequality invoked in the proof of Lemma~\ref{lem:onestep}.
\end{proof}

\subsection[Proof of the estimator lemma]{Proof of Lemma~\ref{lem:estimator} (estimator properties)}\label{app:lem3}

\begin{proof}
Throughout, $\E$ denotes expectation conditional on the current iterate $\theta$, which is fixed and treated as deterministic; write $\theta^+:=\theta_i^+(\theta)=\theta+\alpha\nabla J_i(\theta)$ and $\mathcal J_i(\theta):=I+\alpha\nabla^2J_i(\theta)$, so that $\nabla F_i(\theta)=\mathcal J_i(\theta)\nabla J_i(\theta^+)$ by \eqref{eq:metagrad}. Recall the sampling protocol: $\mathcal D_{\mathrm{in}}$ and $\mathcal D_{\mathrm h}$ consist of $m_{\mathrm{in}}$ and $m_{\mathrm h}$ trajectories drawn i.i.d.\ from $q_i(\cdot;\theta)$, the two batches being mutually independent; $\tilde\theta:=\theta+\alpha\hat g_{\mathrm{in}}$ is a function of $\mathcal D_{\mathrm{in}}$ alone; and, \emph{given} $\mathcal D_{\mathrm{in}}$, the batch $\mathcal D_{\mathrm{out}}$ consists of $m_{\mathrm{out}}$ trajectories drawn i.i.d.\ from $q_i(\cdot;\tilde\theta)$.

Define the per-trajectory policy-gradient variance
\begin{equation}\label{eq:sigmadef}
\varsigma_i^2(\theta'):=\E_{\xi\sim q_i(\cdot;\theta')}\norm{g_i(\xi;\theta')-\nabla J_i(\theta')}^2
\;\overset{(\ast)}{\le}\;\E_{\xi\sim q_i(\cdot;\theta')}\norm{g_i(\xi;\theta')}^2
\;\overset{\text{Lem.~\ref{lem:pg-bounds}(ii)}}{\le}\;\GJ^2
\qquad\text{for every }\theta'\in\R^d,
\end{equation}
where $(\ast)$ is (F2) below. This quantity reappears in Proposition~\ref{prop:sharpbias}; keeping it symbolic rather than immediately replacing it by $\GJ^2$ is what makes the sharpened bias bound available later at no extra cost.

We use the following standard facts. In (F1)--(F2) the norm may be the Euclidean norm on $\R^d$ or the Frobenius norm on $\R^{d\times d}$, since both are induced by an inner product (the Hilbert--Schmidt inner product in the matrix case) and the proofs are identical.
\begin{enumerate}
\item[(F1)] \emph{(variance of an i.i.d.\ average)} If $Z_1,\dots,Z_m$ are i.i.d.\ with
      $\E Z_1=0$ and $\E\norm{Z_1}^2<\infty$, then
      $\E\lVert\frac1m\sum_{j}Z_j\rVert^2=\frac1{m^2}\sum_{j,k}\E\inner{Z_j}{Z_k}=\frac1m\E\norm{Z_1}^2$,
      because the cross terms $j\ne k$ vanish by independence and $\E Z_j=0$.
\item[(F2)] \emph{(variance $\le$ second moment)}
      $\E\norm{X-\E X}^2=\E\norm{X}^2-\norm{\E X}^2\le\E\norm{X}^2$.
\item[(F3)] \emph{(Cauchy--Schwarz)} $\E\norm{X}\le\big(\E\norm{X}^2\big)^{1/2}$.
\item[(F4)] $\big\lVert\sum_{j=1}^{k}x_j\big\rVert^2\le k\sum_{j=1}^{k}\norm{x_j}^2$, by convexity
      of $\norm{\cdot}^2$ applied to the average $\frac1k\sum_jx_j$.
\item[(F5)] \emph{(Jensen)} $\norm{\E X}\le\E\norm{X}$.
\item[(F6)] \emph{(bias--variance)} For deterministic $c$,
      $\E\norm{X-c}^2=\E\norm{X-\E X}^2+\norm{\E X-c}^2$, the cross term vanishing since
      $\E[X-\E X]=0$.
\end{enumerate}

\medskip
\noindent\emph{Step 1: statement (i), boundedness for every realization.}
The point to emphasize is that Lemma~\ref{lem:pg-bounds}(ii)--(iii) are \emph{pathwise} bounds: Assumption~\ref{as:policy}(a)--(b) hold uniformly in $(s,a,\theta)$, so $\norm{g_i(\xi;\theta')}\le\GJ$ and $\norm{u_i(\xi;\theta')}_F\le\LJ$ hold for \emph{every} trajectory $\xi$ and \emph{every} parameter $\theta'$, not merely on average. Two consequences.

First, averages inherit the bounds by the triangle inequality, for every realization of the batches:
\[
\norm{\hat g_{\mathrm{out}}}
=\Big\lVert\frac{1}{m_{\mathrm{out}}}\!\!\sum_{\xi\in\mathcal D_{\mathrm{out}}}\!\!g_i(\xi;\tilde\theta)\Big\rVert
\le\frac{1}{m_{\mathrm{out}}}\!\!\sum_{\xi\in\mathcal D_{\mathrm{out}}}\!\!\norm{g_i(\xi;\tilde\theta)}\le\GJ,
\qquad
\norm{\hat H}_F\le\frac{1}{m_{\mathrm h}}\!\sum_{\xi\in\mathcal D_{\mathrm h}}\!\norm{u_i(\xi;\theta)}_F\le\LJ .
\]
Note that the first bound is applied at the \emph{random} parameter $\tilde\theta$; this is legitimate precisely because Lemma~\ref{lem:pg-bounds}(ii) is uniform in the parameter, so no measurability or conditioning argument is needed.

Second, by submultiplicativity, the triangle inequality with $\norm{I}=1$ (operator norm), and
$\norm{\hat H}\le\norm{\hat H}_F$,
\[
\norm{\widehat{\nabla F}_i(\theta)}=\norm{(I+\alpha\hat H)\hat g_{\mathrm{out}}}
\le\big(1+\alpha\norm{\hat H}\big)\norm{\hat g_{\mathrm{out}}}
\le(1+\alpha\LJ)\GJ=\GF .
\]
Since the trajectory space is finite and the bounds above hold for every $\xi$, this inequality holds \emph{surely}, i.e.\ for every realization of $(\mathcal D_{\mathrm{in}},\mathcal D_{\mathrm h},\mathcal D_{\mathrm{out}})$ and not merely outside a null set; we retain the phrase ``almost surely'' only because it is the weaker and more standard formulation. This proves (i).

\medskip
\noindent\emph{Step 2: unbiasedness of the curvature and outer-gradient estimates.}
The curvature batch is an i.i.d.\ average of $u_i(\xi;\theta)$ with $\xi\sim q_i(\cdot;\theta)$, so by \eqref{eq:hess},
\begin{equation}\label{eq:Hunbiased}
\E\big[\hat H\big]=\E_{\xi\sim q_i(\cdot;\theta)}\big[u_i(\xi;\theta)\big]=\nabla^2J_i(\theta),
\end{equation}
and, since $\mathcal D_{\mathrm h}$ is independent of $\mathcal D_{\mathrm{in}}$, also $\E[\hat H\mid\mathcal D_{\mathrm{in}}]=\nabla^2J_i(\theta)$.

For the outer batch, condition on $\mathcal D_{\mathrm{in}}$. Then $\tilde\theta$ is determined, and $\mathcal D_{\mathrm{out}}$ is i.i.d.\ from $q_i(\cdot;\tilde\theta)$, so applying \eqref{eq:pg}
\emph{at the parameter value $\tilde\theta$} gives
\begin{equation}\label{eq:goutunbiased}
\E\big[\hat g_{\mathrm{out}}\,\big|\,\mathcal D_{\mathrm{in}}\big]
=\E_{\xi\sim q_i(\cdot;\tilde\theta)}\big[g_i(\xi;\tilde\theta)\big]=\nabla J_i(\tilde\theta).
\end{equation}
This is the role of the on-policy resampling of Section~\ref{sec:algo}: because $\mathcal D_{\mathrm{out}}$ is drawn under the \emph{adapted} policy, $\hat g_{\mathrm{out}}$ is
conditionally unbiased for $\nabla J_i(\tilde\theta)$ with no importance weights. Had $\mathcal D_{\mathrm{out}}$ been drawn under $q_i(\cdot;\theta)$ instead, \eqref{eq:goutunbiased} would fail and a likelihood-ratio correction, with its attendant variance, would be unavoidable.

\medskip
\noindent\emph{Step 3: factorization of the mean, and the role of independence of $\mathcal D_{\mathrm h}$.}
Conditionally on $\mathcal D_{\mathrm{in}}$, the batches $\mathcal D_{\mathrm h}$ and $\mathcal D_{\mathrm{out}}$ are independent: the conditional law of $\mathcal D_{\mathrm h}$ is $q_i(\cdot;\theta)^{\otimes m_{\mathrm h}}$ regardless of $\mathcal D_{\mathrm{in}}$ and $\mathcal D_{\mathrm{out}}$, while the conditional law of $\mathcal D_{\mathrm{out}}$ depends on $\mathcal D_{\mathrm{in}}$ only. Hence $\hat H$ and $\hat g_{\mathrm{out}}$ are conditionally independent given $\mathcal D_{\mathrm{in}}$, and the expectation of their product factors entrywise: for each coordinate $j$,
\[
\E\Big[\big((I+\alpha\hat H)\hat g_{\mathrm{out}}\big)_j\,\Big|\,\mathcal D_{\mathrm{in}}\Big]
=\sum_{k=1}^{d}\E\big[(I+\alpha\hat H)_{jk}\,\big|\,\mathcal D_{\mathrm{in}}\big]\;
\E\big[(\hat g_{\mathrm{out}})_k\,\big|\,\mathcal D_{\mathrm{in}}\big],
\]
so that, using \eqref{eq:Hunbiased} and \eqref{eq:goutunbiased},
\[
\E\big[\widehat{\nabla F}_i(\theta)\,\big|\,\mathcal D_{\mathrm{in}}\big]
=\big(I+\alpha\,\E[\hat H\mid\mathcal D_{\mathrm{in}}]\big)\,\E\big[\hat g_{\mathrm{out}}\mid\mathcal D_{\mathrm{in}}\big]
=\mathcal J_i(\theta)\,\nabla J_i(\tilde\theta).
\]
Taking expectations over $\mathcal D_{\mathrm{in}}$ (tower rule) and pulling out the matrix $\mathcal J_i(\theta)$, which is deterministic given $\theta$, yields
\begin{equation}\label{eq:factorization}
\E\big[\widehat{\nabla F}_i(\theta)\big]=\mathcal J_i(\theta)\,\E_{\mathcal D_{\mathrm{in}}}\big[\nabla J_i(\tilde\theta)\big].
\end{equation}
Two observations. First, \eqref{eq:factorization} is exactly the claim that \emph{inner-loop sampling is the only source of bias}: the curvature and outer batches contribute variance but no bias, since both were replaced by their exact means. Second, independence of $\mathcal D_{\mathrm h}$ is indispensable here. Had $\hat H$ been built from $\mathcal D_{\mathrm{in}}$ or $\mathcal D_{\mathrm{out}}$, the product would not factor and $\E[\widehat{\nabla F}_i]$ would carry an additional term $\alpha\,\mathrm{Cov}(\hat H,\hat g_{\mathrm{out}})$, which does not vanish as any batch size grows and would destroy the bias bound.

\medskip
\noindent\emph{Step 4: statement (ii).}
By \eqref{eq:metagrad}, $\nabla F_i(\theta)=\mathcal J_i(\theta)\nabla J_i(\theta^+)$ carries the \emph{same} matrix factor $\mathcal J_i(\theta)$ as \eqref{eq:factorization}. Subtracting, the factor comes out on the left:
\begin{equation}\label{eq:biasfactored}
\E\big[\widehat{\nabla F}_i(\theta)\big]-\nabla F_i(\theta)
=\mathcal J_i(\theta)\Big(\E_{\mathcal D_{\mathrm{in}}}\big[\nabla J_i(\tilde\theta)\big]-\nabla J_i(\theta^+)\Big).
\end{equation}
It remains to bound the two factors of \eqref{eq:biasfactored} separately.

\emph{The matrix factor.} By \eqref{eq:specbound},
$\norm{\mathcal J_i(\theta)}\le1+\alpha\LJ$.

\emph{The vector factor.} Since $\nabla J_i(\theta^+)$ is deterministic (it depends on $\theta$ only, not on any batch), it may be moved inside the expectation, after which (F5) and the $\LJ$-Lipschitz continuity of $\nabla J_i$ from Lemma~\ref{lem:pg-bounds}(iii) give
\[
\big\lVert\E\big[\nabla J_i(\tilde\theta)\big]-\nabla J_i(\theta^+)\big\rVert
=\big\lVert\E\big[\nabla J_i(\tilde\theta)-\nabla J_i(\theta^+)\big]\big\rVert
\;\overset{\text{(F5)}}{\le}\;\E\big\lVert\nabla J_i(\tilde\theta)-\nabla J_i(\theta^+)\big\rVert
\;\overset{\text{Lem.~\ref{lem:pg-bounds}(iii)}}{\le}\;\LJ\,\E\norm{\tilde\theta-\theta^+} .
\]
The displacement of the adapted parameter is exactly $\alpha$ times the inner-gradient error,
\begin{equation}\label{eq:displacement}
\tilde\theta-\theta^+=\big(\theta+\alpha\hat g_{\mathrm{in}}\big)-\big(\theta+\alpha\nabla J_i(\theta)\big)
=\alpha\big(\hat g_{\mathrm{in}}-\nabla J_i(\theta)\big),
\end{equation}
and that error is an i.i.d.\ average of the zero-mean vectors $g_i(\xi;\theta)-\nabla J_i(\theta)$, so by (F1) and \eqref{eq:sigmadef},
\begin{equation}\label{eq:innervar}
\E\norm{\hat g_{\mathrm{in}}-\nabla J_i(\theta)}^2
\;\overset{\text{(F1)}}{=}\;\frac{1}{m_{\mathrm{in}}}\,\varsigma_i^2(\theta)
\;\overset{\eqref{eq:sigmadef}}{\le}\;\frac{\GJ^2}{m_{\mathrm{in}}} .
\end{equation}
Combining \eqref{eq:displacement}, (F3) and \eqref{eq:innervar},
\[
\E\norm{\tilde\theta-\theta^+}=\alpha\,\E\norm{\hat g_{\mathrm{in}}-\nabla J_i(\theta)}
\;\overset{\text{(F3)}}{\le}\;\alpha\Big(\E\norm{\hat g_{\mathrm{in}}-\nabla J_i(\theta)}^2\Big)^{1/2}
\;\overset{\eqref{eq:innervar}}{\le}\;\frac{\alpha\GJ}{\sqrt{m_{\mathrm{in}}}} .
\]
Substituting both factors into \eqref{eq:biasfactored},
\[
\big\lVert\E\widehat{\nabla F}_i(\theta)-\nabla F_i(\theta)\big\rVert
\le(1+\alpha\LJ)\cdot\LJ\cdot\frac{\alpha\GJ}{\sqrt{m_{\mathrm{in}}}}
=\frac{(1+\alpha\LJ)\,\alpha\,\LJ\,\GJ}{\sqrt{m_{\mathrm{in}}}}=\bhat,
\]
which proves (ii). The $\sqrt{m_{\mathrm{in}}}$, rather than $m_{\mathrm{in}}$, enters through the single use of (F3): we bounded $\E\norm{\tilde\theta-\theta^+}$, a first moment, by the square root of a second moment. Proposition~\ref{prop:sharpbias} recovers the missing factor by avoiding (F3) altogether.

\medskip
\noindent\emph{Step 5: statement (iii).}
Insert and remove the two intermediate quantities $\mathcal J_i(\theta)\hat g_{\mathrm{out}}$ and $\mathcal J_i(\theta)\nabla J_i(\tilde\theta)$ to obtain the decomposition
\begin{equation}\label{eq:T123}
\widehat{\nabla F}_i-\nabla F_i
=\underbrace{\alpha\big(\hat H-\nabla^2J_i(\theta)\big)\hat g_{\mathrm{out}}}_{T_1\ \text{(curvature)}}
+\underbrace{\mathcal J_i(\theta)\big(\hat g_{\mathrm{out}}-\nabla J_i(\tilde\theta)\big)}_{T_2\ \text{(outer gradient)}}
+\underbrace{\mathcal J_i(\theta)\big(\nabla J_i(\tilde\theta)-\nabla J_i(\theta^+)\big)}_{T_3\ \text{(inner gradient)}} .
\end{equation}
That \eqref{eq:T123} is an identity is verified as follows, since the decomposition is central to the argument. Expanding $T_1$ and collecting the two occurrences of $\hat g_{\mathrm{out}}$,
\[
T_1+T_2
=\alpha\hat H\hat g_{\mathrm{out}}-\alpha\nabla^2J_i(\theta)\hat g_{\mathrm{out}}
+\big(I+\alpha\nabla^2J_i(\theta)\big)\hat g_{\mathrm{out}}-\mathcal J_i(\theta)\nabla J_i(\tilde\theta)
=(I+\alpha\hat H)\hat g_{\mathrm{out}}-\mathcal J_i(\theta)\nabla J_i(\tilde\theta),
\]
and adding $T_3=\mathcal J_i(\theta)\nabla J_i(\tilde\theta)-\mathcal J_i(\theta)\nabla J_i(\theta^+)$ cancels the middle term, leaving $(I+\alpha\hat H)\hat g_{\mathrm{out}}-\mathcal J_i(\theta)\nabla J_i(\theta^+) =\widehat{\nabla F}_i-\nabla F_i$ as claimed. The three terms are \emph{not} independent ($T_2$ and $T_3$ both involve $\tilde\theta$), so we do not attempt to cancel cross terms and instead apply (F4) with $k=3$:
\begin{equation}\label{eq:F4applied}
\E\norm{\widehat{\nabla F}_i-\nabla F_i}^2\le3\Big(\E\norm{T_1}^2+\E\norm{T_2}^2+\E\norm{T_3}^2\Big).
\end{equation}

\emph{Bounding $T_1$.} By submultiplicativity, $\norm{A}\le\norm{A}_F$, and $\norm{\hat g_{\mathrm{out}}}\le\GJ$
from Step 1, we have surely $\norm{T_1}\le\alpha\norm{\hat H-\nabla^2J_i(\theta)}_F\norm{\hat g_{\mathrm{out}}}\le\alpha\GJ\norm{\hat H-\nabla^2J_i(\theta)}_F$. Since $\hat H$ is an i.i.d.\ average of matrices with mean $\nabla^2J_i(\theta)$ by \eqref{eq:Hunbiased}, (F1), (F2) and Lemma~\ref{lem:pg-bounds}(iii) give
\[
\E\norm{\hat H-\nabla^2J_i(\theta)}_F^2
\overset{\text{(F1)}}{=}\frac{1}{m_{\mathrm h}}\E\norm{u_i(\xi;\theta)-\nabla^2J_i(\theta)}_F^2
\overset{\text{(F2)}}{\le}\frac{1}{m_{\mathrm h}}\E\norm{u_i(\xi;\theta)}_F^2
\le\frac{\LJ^2}{m_{\mathrm h}},
\qquad\text{so}\qquad
\E\norm{T_1}^2\le\frac{\alpha^2\GJ^2\LJ^2}{m_{\mathrm h}} .
\]

\emph{Bounding $T_2$.} Condition on $\mathcal D_{\mathrm{in}}$. Given $\mathcal D_{\mathrm{in}}$, the vector $\hat g_{\mathrm{out}}$ is an i.i.d.\ average with mean $\nabla J_i(\tilde\theta)$ by \eqref{eq:goutunbiased}, so (F1) and \eqref{eq:sigmadef} applied at the parameter $\tilde\theta$ give
\[
\E\Big[\norm{\hat g_{\mathrm{out}}-\nabla J_i(\tilde\theta)}^2\,\Big|\,\mathcal D_{\mathrm{in}}\Big]
=\frac{\varsigma_i^2(\tilde\theta)}{m_{\mathrm{out}}}\le\frac{\GJ^2}{m_{\mathrm{out}}} ,
\]
the last step again using that \eqref{eq:sigmadef} holds for \emph{every} parameter, hence at the random point $\tilde\theta$. Multiplying by $\norm{\mathcal J_i(\theta)}^2\le(1+\alpha\LJ)^2$ and taking expectations over $\mathcal D_{\mathrm{in}}$, $\E\norm{T_2}^2\le(1+\alpha\LJ)^2\GJ^2/m_{\mathrm{out}}$.

\emph{Bounding $T_3$.} By submultiplicativity, Lemma~\ref{lem:pg-bounds}(iii) and
\eqref{eq:displacement},
$\norm{T_3}\le(1+\alpha\LJ)\LJ\norm{\tilde\theta-\theta^+}=(1+\alpha\LJ)\LJ\alpha\norm{\hat g_{\mathrm{in}}-\nabla J_i(\theta)}$, so squaring and applying \eqref{eq:innervar} gives $\E\norm{T_3}^2\le(1+\alpha\LJ)^2\LJ^2\alpha^2\GJ^2/m_{\mathrm{in}}$. Note that here, unlike in Step 4, no square root is lost: $T_3$ enters through a \emph{second} moment, so (F3) is not needed.

\emph{Collecting.} Substituting the three bounds into \eqref{eq:F4applied} and factoring out
$3\GJ^2$,
\[
\E\norm{\widehat{\nabla F}_i(\theta)-\nabla F_i(\theta)}^2
\le3\GJ^2\bigg[\frac{\alpha^2\LJ^2}{m_{\mathrm h}}+\frac{(1+\alpha\LJ)^2}{m_{\mathrm{out}}}
+\frac{(1+\alpha\LJ)^2\alpha^2\LJ^2}{m_{\mathrm{in}}}\bigg]=\shat^2 ,
\]
which is (iii). The three summands are in one-to-one correspondence with the three batches, and each is reducible only by enlarging its own batch; in particular the curvature term is the only one an agent cannot reduce by reusing on-policy data it already collects at $\theta$.

Finally, for the centered version, apply (F6) with $X=\widehat{\nabla F}_i(\theta)$ and the deterministic $c=\nabla F_i(\theta)$:
\[
\E\norm{\widehat{\nabla F}_i-\E\widehat{\nabla F}_i}^2
=\E\norm{\widehat{\nabla F}_i-\nabla F_i}^2-\norm{\E\widehat{\nabla F}_i-\nabla F_i}^2
\le\E\norm{\widehat{\nabla F}_i-\nabla F_i}^2\le\shat^2 ,
\]
i.e.\ the conditional variance is bounded by the same $\shat^2$; equivalently, the conditional mean minimizes mean squared distance, so replacing $\nabla F_i(\theta)$ by $\E\widehat{\nabla F}_i(\theta)$ can only decrease the left-hand side. This is the form used in Step~3 of the proof of Lemma~\ref{lem:onestep}.

\medskip
\noindent\emph{Step 6: the role of statement (i) in the analysis.}
Statement (i) is a \emph{strengthening} of what the convergence analysis strictly requires, not an additional hypothesis. Its only use is to make the drift bound of Lemma~\ref{lem:drift} deterministic. Had we only the second-moment bound $\E\norm{\widehat{\nabla F}_i(\theta)}^2\le2\norm{\nabla F_i(\theta)}^2+2\E\norm{\widehat{\nabla F}_i(\theta)-\nabla F_i(\theta)}^2\le2\GF^2+2\shat^2$, which follows from (F4) with $k=2$, Lemma~\ref{lem:meta-reg}(ii) and part (iii) above, without any pathwise argument, then (F4) applied to the unrolled local recursion would still give $\E\norm{\theta_i^{k,t}-\theta^k}^2\le\beta^2t^2(2\GF^2+2\shat^2)$ and the analysis would close with $\GF^2$ replaced by $2\GF^2+2\shat^2$ in the drift term of \eqref{eq:mainbound}. We use the pathwise form because it is available at no cost in the policy-gradient setting and yields the cleaner constant; by contrast, in supervised personalized FL the corresponding bound must be imposed as an assumption on the loss.
\end{proof}

\subsection[Proof of the sharp bias bound]{Proof of Proposition~\ref{prop:sharpbias} (sharp inner-loop bias)}\label{app:prop1}

\begin{proof}
Only one link in the chain of Lemma~\ref{lem:estimator}(ii) changes, namely the passage from the factored form \eqref{eq:biasfactored} to a bound on the vector factor. Everything preceding it, in particular the factorization \eqref{eq:factorization} and its consequence
\begin{equation}\label{eq:biasfactored2}
\E\big[\widehat{\nabla F}_i(\theta)\big]-\nabla F_i(\theta)
=\mathcal J_i(\theta)\Big(\E_{\mathcal D_{\mathrm{in}}}\big[\nabla J_i(\tilde\theta)\big]-\nabla J_i(\theta^+)\Big),
\qquad \mathcal J_i(\theta)=I+\alpha\nabla^2J_i(\theta),
\end{equation}
is unaffected and is used verbatim. As before $\E$ is conditional on $\theta$, $\theta^+=\theta+\alpha\nabla J_i(\theta)$ and $\tilde\theta=\theta+\alpha\hat g_{\mathrm{in}}$. Unlike Lemma~\ref{lem:estimator}(ii), the argument below uses Assumption~\ref{as:policy}(c), through the constant $\rJ$ of Lemma~\ref{lem:pg-bounds}(iv); this is the only additional requirement the improvement imposes.

\medskip
\noindent\emph{Step 1: unbiasedness of the adapted parameter, and insertion of the linear term.}
Since $\mathcal D_{\mathrm{in}}$ consists of trajectories drawn i.i.d.\ from $q_i(\cdot;\theta)$, the inner estimate is unbiased, $\E[\hat g_{\mathrm{in}}]=\nabla J_i(\theta)$ by \eqref{eq:pg}, and therefore
\begin{equation}\label{eq:thetatildeunbiased}
\E\big[\tilde\theta\big]=\theta+\alpha\,\E\big[\hat g_{\mathrm{in}}\big]=\theta+\alpha\nabla J_i(\theta)=\theta^+,
\qquad\text{equivalently}\qquad
\E\big[\tilde\theta-\theta^+\big]=0 .
\end{equation}
The matrix $\nabla^2J_i(\theta^+)$ is \emph{deterministic}: $\theta^+$ is a function of $\theta$ alone and carries no dependence on any batch. Consequently the map $\tilde\theta\mapsto\nabla^2J_i(\theta^+)(\tilde\theta-\theta^+)$ is \emph{linear} in the zero-mean random vector $\tilde\theta-\theta^+$, so its expectation vanishes:
\begin{equation}\label{eq:linearvanishes}
\E\Big[\nabla^2J_i(\theta^+)\big(\tilde\theta-\theta^+\big)\Big]
=\nabla^2J_i(\theta^+)\;\E\big[\tilde\theta-\theta^+\big]
\overset{\eqref{eq:thetatildeunbiased}}{=}0 .
\end{equation}
We may therefore subtract this term inside the expectation without changing its value:
\begin{equation}\label{eq:remainderidentity}
\E\big[\nabla J_i(\tilde\theta)\big]-\nabla J_i(\theta^+)
=\E\Big[\underbrace{\nabla J_i(\tilde\theta)-\nabla J_i(\theta^+)-\nabla^2J_i(\theta^+)\big(\tilde\theta-\theta^+\big)}_{=:\,r(\tilde\theta),\ \text{the second-order Taylor remainder}}\Big].
\end{equation}
This identity is the mechanism behind the proposition. Lemma~\ref{lem:estimator}(ii) bounded the left-hand side of \eqref{eq:remainderidentity} by controlling $\nabla J_i(\tilde\theta)-\nabla J_i(\theta^+)$, a quantity of size $\Theta(\norm{\tilde\theta-\theta^+})$. Identity \eqref{eq:remainderidentity} shows that the part of that difference which is \emph{linear} in $\tilde\theta-\theta^+$ averages out, leaving only the remainder $r(\tilde\theta)$, of size $\mathcal O(\norm{\tilde\theta-\theta^+}^2)$. Since $\norm{\tilde\theta-\theta^+}$ has size $m_{\mathrm{in}}^{-1/2}$, replacing a first power by a second is precisely the promised $\sqrt{m_{\mathrm{in}}}$ gain.

\medskip
\noindent\emph{Step 2: quadratic bound on the remainder, from the Lipschitz Hessian.}
We claim that for all $x,y\in\R^d$,
\begin{equation}\label{eq:taylorquad}
\big\lVert\nabla J_i(y)-\nabla J_i(x)-\nabla^2J_i(x)(y-x)\big\rVert\le\frac{\rJ}{2}\norm{y-x}^2 .
\end{equation}
To see this, fix $x,y$, put $\delta:=y-x$ and consider $\varphi(s):=\nabla J_i(x+s\delta)$ for $s\in[0,1]$. Since $J_i\in C^2(\R^d)$ (Step~0 of the proof of Lemma~\ref{lem:meta-reg}), $\varphi$ is continuously differentiable with $\varphi'(s)=\nabla^2J_i(x+s\delta)\,\delta$, so the fundamental theorem of calculus gives
\[
\nabla J_i(y)-\nabla J_i(x)=\varphi(1)-\varphi(0)=\int_0^1\nabla^2J_i(x+s\delta)\,\delta\,ds .
\]
Writing the subtracted term as $\nabla^2J_i(x)\delta=\int_0^1\nabla^2J_i(x)\,\delta\,ds$ (the integrand is constant in $s$) and combining the two integrals,
\[
\nabla J_i(y)-\nabla J_i(x)-\nabla^2J_i(x)(y-x)
=\int_0^1\Big(\nabla^2J_i(x+s\delta)-\nabla^2J_i(x)\Big)\delta\,ds .
\]
Now bound the integrand. By submultiplicativity, Lemma~\ref{lem:pg-bounds}(iv) applied to the pair $x+s\delta$ and $x$, and $\norm{(x+s\delta)-x}=s\norm{\delta}$,
\[
\Big\lVert\Big(\nabla^2J_i(x+s\delta)-\nabla^2J_i(x)\Big)\delta\Big\rVert
\le\big\lVert\nabla^2J_i(x+s\delta)-\nabla^2J_i(x)\big\rVert\,\norm{\delta}
\;\overset{\text{Lem.~\ref{lem:pg-bounds}(iv)}}{\le}\;\rJ\,s\norm{\delta}\cdot\norm{\delta}
=\rJ\,s\,\norm{\delta}^2 .
\]
Integrating and using $\int_0^1s\,ds=\tfrac12$ yields \eqref{eq:taylorquad}. The factor $\tfrac12$ therefore comes from the integration in $s$, not from any additional estimate.

\medskip
\noindent\emph{Step 3: bounding the remainder in expectation.}
We now bound the right-hand side of \eqref{eq:remainderidentity}. Each step is labeled, since the combination is what caused difficulty:
\begin{align*}
\big\lVert\E\big[\nabla J_i(\tilde\theta)\big]-\nabla J_i(\theta^+)\big\rVert
&\;\overset{\eqref{eq:remainderidentity}}{=}\;\big\lVert\E\big[r(\tilde\theta)\big]\big\rVert\\[2pt]
&\;\overset{(a)}{\le}\;\E\big\lVert r(\tilde\theta)\big\rVert\\[2pt]
&\;\overset{(b)}{\le}\;\frac{\rJ}{2}\,\E\big\lVert\tilde\theta-\theta^+\big\rVert^2\\[2pt]
&\;\overset{(c)}{=}\;\frac{\rJ}{2}\,\alpha^2\,\E\big\lVert\hat g_{\mathrm{in}}-\nabla J_i(\theta)\big\rVert^2\\[2pt]
&\;\overset{(d)}{=}\;\frac{\rJ}{2}\,\alpha^2\,\frac{\varsigma_i^2(\theta)}{m_{\mathrm{in}}} ,
\end{align*}
where:
\begin{itemize}
\item[$(a)$] is Jensen's inequality (F5), $\norm{\E X}\le\E\norm{X}$, applied to the random vector $X=r(\tilde\theta)$;
\item[$(b)$] applies the \emph{pathwise} bound \eqref{eq:taylorquad} with $x=\theta^+$ and $y=\tilde\theta$ (legitimate for every realization of $\mathcal D_{\mathrm{in}}$, because \eqref{eq:taylorquad} holds for \emph{all} $x,y\in\R^d$ and not merely on average), and then takes expectations, monotonicity of $\E$ preserving the inequality;
\item[$(c)$] substitutes the exact displacement identity \eqref{eq:displacement}, $\tilde\theta-\theta^+=\alpha(\hat g_{\mathrm{in}}-\nabla J_i(\theta))$, and pulls the deterministic scalar $\alpha^2$ out of the expectation;
\item[$(d)$] is the variance identity \eqref{eq:innervar} for an i.i.d.\ average of the zero-mean vectors $g_i(\xi;\theta)-\nabla J_i(\theta)$, i.e.\ fact (F1) together with the definition \eqref{eq:sigmadef} of $\varsigma_i^2$.
\end{itemize}
Observe that no square root is taken anywhere, in contrast with Lemma~\ref{lem:estimator}(ii), which invoked (F3) to pass from $\E\norm{\tilde\theta-\theta^+}$ to $(\E\norm{\tilde\theta-\theta^+}^2)^{1/2}$. That single application of Cauchy--Schwarz is what cost the $\sqrt{m_{\mathrm{in}}}$, and Step~1 has made it unnecessary.

\medskip
\noindent\emph{Step 4: conclusion.}
Insert the bound of Step~3 into \eqref{eq:biasfactored2}, using submultiplicativity and $\norm{\mathcal J_i(\theta)}\le1+\alpha\LJ$ from \eqref{eq:specbound}:
\[
\big\lVert\E\widehat{\nabla F}_i(\theta)-\nabla F_i(\theta)\big\rVert
\le\norm{\mathcal J_i(\theta)}\cdot\frac{\rJ\alpha^2\varsigma_i^2(\theta)}{2m_{\mathrm{in}}}
\le\frac{(1+\alpha\LJ)\,\alpha^2\,\rJ\,\varsigma_i^2(\theta)}{2\,m_{\mathrm{in}}}=\bhat_\star ,
\]
and the final inequality of the statement is $\varsigma_i^2(\theta)\le\GJ^2$ from \eqref{eq:sigmadef}, itself a consequence of Lemma~\ref{lem:pg-bounds}(ii). Note that $\bhat_\star$ is stated in terms of $\varsigma_i^2(\theta)$ rather than $\GJ^2$ because the former can be far smaller in practice: it is the true per-trajectory policy-gradient variance at the current iterate, whereas $\GJ^2$ is a worst-case surrogate.

\medskip
\noindent\emph{Step 5: comparison of the two bias bounds.}
Both $\bhat$ and $\bhat_\star$ are valid, so the bias is at most $\min\{\bhat,\bhat_\star\}$. Their ratio is
\[
\frac{\bhat_\star}{\bhat}
=\frac{(1+\alpha\LJ)\alpha^2\rJ\varsigma_i^2/(2m_{\mathrm{in}})}{(1+\alpha\LJ)\alpha\LJ\GJ/\sqrt{m_{\mathrm{in}}}}
=\frac{\alpha\,\rJ\,\varsigma_i^2}{2\,\LJ\,\GJ\,\sqrt{m_{\mathrm{in}}}} ,
\]
so $\bhat_\star<\bhat$ precisely when
\[
m_{\mathrm{in}}>m_0:=\Big(\frac{\alpha\rJ\varsigma_i^2(\theta)}{2\LJ\GJ}\Big)^{2}
\qquad\Big(\le\Big(\frac{\alpha\rJ\GJ}{2\LJ}\Big)^{2}\ \text{using }\varsigma_i^2\le\GJ^2\Big).
\]
The threshold $m_0$ depends only on problem constants and on $\alpha$, never on $\varepsilon$; since Corollary~\ref{cor:rates} drives $m_{\mathrm{in}}\to\infty$ as $\varepsilon\to0$, the sharpened bound governs for all sufficiently small $\varepsilon$, which is the regime the complexity statement concerns. For small $m_{\mathrm{in}}$ the crude bound $\bhat$ is the tighter of the two, and the honest reading of the pair is that the bias decays as $m_{\mathrm{in}}^{-1/2}$ initially and as $m_{\mathrm{in}}^{-1}$ eventually.

\medskip
\noindent\emph{Step 6: the improved complexity.}
The bias condition (iii) of Corollary~\ref{cor:rates} exists only to make the bias term of \eqref{eq:mainbound} at most $\varepsilon/4$, i.e.\ to enforce $\tfrac92\bhat^2\le\varepsilon/4$, equivalently $\bhat\le\sqrt{\varepsilon/18}=\sqrt\varepsilon/(3\sqrt2)$. Substituting $\bhat_\star$ for $\bhat$ and solving for the batch size, this becomes
\begin{equation}\label{eq:iiistar}
\text{(iii$^\star$)}\qquad
m_{\mathrm{in}}\;\ge\;\frac{3}{\sqrt2}\cdot\frac{(1+\alpha\LJ)\,\alpha^2\,\rJ\,\GJ^2}{\sqrt\varepsilon}
\;=\;\Theta\big(\varepsilon^{-1/2}\big),
\end{equation}
in place of the $m_{\mathrm{in}}\ge18(1+\alpha\LJ)^2\alpha^2\LJ^2\GJ^2/\varepsilon=\Theta(\varepsilon^{-1})$ required by (iii). Conditions (i), (ii) and the round count are untouched, since none of them involves $m_{\mathrm{in}}$; hence $K=\mathcal O(\Delta\LF\GF\varepsilon^{-3/2})$ and $\tau=\Theta(\varepsilon^{-1/2})$ are unchanged, and so is the total number of local steps $K\tau=\mathcal O(\varepsilon^{-2})$.

Condition (iv) deserves an explicit check, because it involves $\shat^2$, which \emph{grows} when $m_{\mathrm{in}}$ shrinks. It does not bind: by Lemma~\ref{lem:estimator}(iii),
\[
\shat^2=3\GJ^2\Big[\tfrac{\alpha^2\LJ^2}{m_{\mathrm h}}+\tfrac{(1+\alpha\LJ)^2}{m_{\mathrm{out}}}
+\tfrac{(1+\alpha\LJ)^2\alpha^2\LJ^2}{m_{\mathrm{in}}}\Big]
\le3\GJ^2\Big[\alpha^2\LJ^2+(1+\alpha\LJ)^2+(1+\alpha\LJ)^2\alpha^2\LJ^2\Big]
\]
for every $m_{\mathrm{in}},m_{\mathrm h},m_{\mathrm{out}}\ge1$, so $\shat^2=\Theta(1)$ in $\varepsilon$ irrespective of how $m_{\mathrm{in}}$ is chosen. Condition (iv) therefore reduces, as in the proof of Corollary~\ref{cor:rates}, to the same $\varepsilon$-free requirement $\shat^2\le\tfrac{\sqrt6}{4}n\GF$, and is met under the same mild condition.

Finally, the per-agent trajectory count. Each local step consumes $m_{\mathrm{in}}+m_{\mathrm h}+m_{\mathrm{out}}$ trajectories; with $m_{\mathrm h},m_{\mathrm{out}}=\Theta(1)$ and $m_{\mathrm{in}}=\Theta(\varepsilon^{-1/2})$ from \eqref{eq:iiistar}, this is $\Theta(\varepsilon^{-1/2})$. Multiplying by the $K\tau=\mathcal O(\varepsilon^{-2})$ local steps,
\[
K\tau\big(m_{\mathrm{in}}+m_{\mathrm h}+m_{\mathrm{out}}\big)
=\mathcal O\big(\varepsilon^{-2}\big)\cdot\Theta\big(\varepsilon^{-1/2}\big)
=\mathcal O\big(\varepsilon^{-5/2}\big),
\]
against $\mathcal O(\varepsilon^{-2})\cdot\Theta(\varepsilon^{-1})=\mathcal O(\varepsilon^{-3})$ under (iii). This is the claimed improvement.
\end{proof}

\begin{remark}[The rate $m_{\mathrm{in}}^{-1}$ is not improvable by further expansion]\label{rem:biastight}
One might hope to iterate the device of Step~1 of the proof of Proposition~\ref{prop:sharpbias} and subtract higher-order Taylor terms. This does not remove the obstruction. The next term in the expansion of $\E[\nabla J_i(\tilde\theta)]$ about $\theta^+$ is the quadratic form $\tfrac12\,\E\big[\nabla^3J_i(\theta^+)[\tilde\theta-\theta^+,\tilde\theta-\theta^+]\big] =\tfrac{\alpha^2}{2m_{\mathrm{in}}}\,\nabla^3J_i(\theta^+)\big[\Sigma_i(\theta)\big]$, where $\Sigma_i(\theta)$ is the covariance of $g_i(\xi;\theta)$; being quadratic rather than linear in the zero-mean vector $\tilde\theta-\theta^+$, its expectation does \emph{not} vanish, and it is of order $m_{\mathrm{in}}^{-1}$ exactly. Hence the inner-loop bias is genuinely $\Theta(m_{\mathrm{in}}^{-1})$ and not $o(m_{\mathrm{in}}^{-1})$: Proposition~\ref{prop:sharpbias} is order-optimal among bounds of this type. Removing the bias altogether requires changing the objective, as in the debiased stochastic-adaptation formulation of~\cite{sgmrl}, rather than sharpening the analysis. We state this informally because making it rigorous would require third-order differentiability of $J_i$, which Assumption~\ref{as:policy} does not provide.
\end{remark}

\subsection[Proof of the drift bound]{Proof of Lemma~\ref{lem:drift} (deterministic drift bound)}\label{app:lem4}

\begin{proof}
Fix a round index $k$ throughout and abbreviate the per-agent \emph{displacement from the round's
starting point},
\[
d_i^{\,t}:=\theta_i^{k,t}-\theta^k\in\R^d,
\qquad i\in[n],\quad 0\le t\le\tau,
\]
so that, since $\bar\theta^{k,t}=\frac1n\sum_{i=1}^n\theta_i^{k,t}$ and every agent is initialized at
$\theta_i^{k,0}=\theta^k$,
\begin{equation}\label{eq:driftrecentre}
\bar\theta^{k,t}-\theta^k=\frac1n\sum_{i=1}^n\big(\theta_i^{k,t}-\theta^k\big)=\bar d^{\,t},
\qquad
\theta_i^{k,t}-\bar\theta^{k,t}=d_i^{\,t}-\bar d^{\,t},
\end{equation}
where $\bar d^{\,t}:=\frac1n\sum_{j=1}^nd_j^{\,t}$. Thus client drift is exactly the deviation of the displacements from their own average, and the common offset $\theta^k$ cancels. Note that the lemma makes no probabilistic claim beyond a norm bound: neither unbiasedness (Lemma~\ref{lem:estimator}(ii)) nor independence across agents is used anywhere below, so drift control is entirely decoupled from the stochastic structure of the estimator.

\medskip
\noindent\emph{Step 1: unrolling the local recursion.}
We claim that for every $i\in[n]$ and every $0\le t\le\tau$,
\begin{equation}\label{eq:unrolled}
\theta_i^{k,t}=\theta^k+\beta\sum_{l=0}^{t-1}\widehat{\nabla F}_i\big(\theta_i^{k,l}\big),
\end{equation}
a sum of \emph{exactly $t$ terms}, with the convention that an empty sum is zero. The proof is by induction on $t$. For $t=0$ the sum is empty and \eqref{eq:unrolled} reduces to $\theta_i^{k,0}=\theta^k$, which is the initialization in Algorithm~\ref{alg:main}. Assuming \eqref{eq:unrolled} at some $t\le\tau-1$, the local update gives
\[
\theta_i^{k,t+1}=\theta_i^{k,t}+\beta\,\widehat{\nabla F}_i\big(\theta_i^{k,t}\big)
=\theta^k+\beta\sum_{l=0}^{t-1}\widehat{\nabla F}_i\big(\theta_i^{k,l}\big)+\beta\,\widehat{\nabla F}_i\big(\theta_i^{k,t}\big)
=\theta^k+\beta\sum_{l=0}^{t}\widehat{\nabla F}_i\big(\theta_i^{k,l}\big),
\]
which is \eqref{eq:unrolled} at $t+1$, with $t+1$ terms. The count matters: an off-by-one here would produce $\beta(t+1)\GF$ below and, at $t=0$, the false conclusion that agents have drifted apart before taking any local step.

\medskip
\noindent\emph{Step 2: linear growth of the displacements.}
Every increment in \eqref{eq:unrolled} is bounded by $\GF$: by Lemma~\ref{lem:estimator}(i),
\[
\big\lVert\widehat{\nabla F}_i(\theta')\big\rVert\le\GF
\qquad\text{for every }\theta'\in\R^d\text{ and every realization of the batches.}
\]
The uniformity in $\theta'$ is what permits substituting the \emph{random} iterate $\theta'=\theta_i^{k,l}$: no measurability or conditioning argument is needed, because the bound holds pointwise on the parameter space rather than merely at deterministic points. Hence, by \eqref{eq:unrolled} and the triangle inequality,
\begin{equation}\label{eq:displbound}
\norm{d_i^{\,t}}=\big\lVert\theta_i^{k,t}-\theta^k\big\rVert
\le\beta\sum_{l=0}^{t-1}\big\lVert\widehat{\nabla F}_i\big(\theta_i^{k,l}\big)\big\rVert
\le\beta\,t\,\GF
\qquad\text{for every }i\in[n],\ 0\le t\le\tau .
\end{equation}
Averaging \eqref{eq:displbound} over $i$ and using \eqref{eq:driftrecentre} with Jensen's inequality gives the same bound for the virtual average, $\norm{\bar\theta^{k,t}-\theta^k}=\norm{\bar d^{\,t}}\le\beta t\GF$. Since $\beta\GF$ is the maximum distance a single local step can move an agent, \eqref{eq:displbound} states only that $t$ steps move it at most $t$ times as far; no cancellation is claimed or needed.

\medskip
\noindent\emph{Step 3: statement (i), and a per-agent drift bound.}
Bound \eqref{eq:displbound} is exactly assertion (i). Combining it with \eqref{eq:driftrecentre} and the triangle inequality also gives the per-agent drift bound
\begin{equation}\label{eq:peragentdrift}
\big\lVert\theta_i^{k,t}-\bar\theta^{k,t}\big\rVert=\norm{d_i^{\,t}-\bar d^{\,t}}
\le\norm{d_i^{\,t}}+\norm{\bar d^{\,t}}\le2\beta t\,\GF
\qquad\text{for every }i\in[n],
\end{equation}
which we record for interpretation but do not use: no agent is ever more than $2\beta t\GF$ from the virtual average. In the degenerate case $n=1$ we have $\theta_1^{k,t}=\bar\theta^{k,t}$ and the drift is identically zero, as it must be; the sharper constant $2(1-\tfrac1n)\beta t\GF$ making this visible follows by writing $d_i^{\,t}-\bar d^{\,t}=(1-\tfrac1n)d_i^{\,t}-\tfrac1n\sum_{j\ne i}d_j^{\,t}$ and bounding each group separately.

\medskip
\noindent\emph{Step 4: statement (ii), via the empirical variance identity.}
Squaring \eqref{eq:peragentdrift} termwise would yield $4\beta^2t^2\GF^2$, which over-counts by a factor of four: $d_i^{\,t}-\bar d^{\,t}$ is a deviation from the average of the very vectors being bounded, so the two contributions in \eqref{eq:peragentdrift} cannot both be extremal for all $i$ simultaneously. The exact accounting is the empirical bias--variance identity. Expanding the square and using $\frac1n\sum_i\inner{d_i^{\,t}}{\bar d^{\,t}}=\inner{\bar d^{\,t}}{\bar d^{\,t}}=\norm{\bar d^{\,t}}^2$,
\[
\frac1n\sum_{i=1}^n\norm{d_i^{\,t}-\bar d^{\,t}}^2
=\frac1n\sum_{i=1}^n\Big[\norm{d_i^{\,t}}^2-2\inner{d_i^{\,t}}{\bar d^{\,t}}+\norm{\bar d^{\,t}}^2\Big]
=\frac1n\sum_{i=1}^n\norm{d_i^{\,t}}^2-\norm{\bar d^{\,t}}^2 .
\]
Discarding the non-positive term $-\norm{\bar d^{\,t}}^2$ and inserting \eqref{eq:displbound},
\begin{equation}\label{eq:sharpdrift}
\frac1n\sum_{i=1}^n\big\lVert\theta_i^{k,t}-\bar\theta^{k,t}\big\rVert^2
\;\le\;\frac1n\sum_{i=1}^n\norm{d_i^{\,t}}^2
\;\le\;\beta^2t^2\GF^2 ,
\end{equation}
which is (ii). The bound is tight under the stated hypotheses, so no further improvement is available from this argument: with $n=2$, $d_1^{\,t}=\beta t\GF e$ and $d_2^{\,t}=-\beta t\GF e$ for a unit vector $e$ (realizable when the two agents' meta-gradient estimates point in opposite directions at every local step, each of maximal norm $\GF$), one has $\bar d^{\,t}=0$ and the left-hand side of \eqref{eq:sharpdrift} equals $\beta^2t^2\GF^2$ exactly.

\medskip
\noindent\emph{Step 5: sure validity of the bounds, and its role in the analysis.}
Every inequality above is a deterministic consequence of $\norm{\widehat{\nabla F}_i(\cdot)}\le\GF$, which Lemma~\ref{lem:estimator}(i) establishes for \emph{every} realization of the batches rather than in expectation or outside a null set. Hence \eqref{eq:peragentdrift} and \eqref{eq:sharpdrift} hold surely, and in particular they may be inserted inside a conditional expectation, as is done in \eqref{eq:e-bound} in the proof of Lemma~\ref{lem:onestep}, without any further argument.

This is the step at which the present analysis and the supervised one diverge. In Per-FedAvg~\cite{perfedavg} the analogous drift bound requires both a uniform gradient bound and a gradient-dissimilarity bound, because only the \emph{mean} local direction is controlled there and the deviation of an individual agent's realized direction from that mean must be accounted for separately. Here Lemma~\ref{lem:estimator}(i) controls every realized direction, so \eqref{eq:displbound} follows from the triangle inequality alone and no heterogeneity quantity ever appears. Should one wish to dispense with the pathwise bound, a second-moment version remains available at the cost of a worse constant: by (F4), \eqref{eq:unrolled} gives $\E\norm{d_i^{\,t}}^2\le t\beta^2\sum_{l<t}\E\lVert\widehat{\nabla F}_i(\theta_i^{k,l})\rVert^2 \le\beta^2t^2\big(2\GF^2+2\shat^2\big)$, using $\E\lVert\widehat{\nabla F}_i\rVert^2\le2\norm{\nabla F_i}^2+2\E\lVert\widehat{\nabla F}_i-\nabla F_i\rVert^2 \le2\GF^2+2\shat^2$ from Lemma~\ref{lem:meta-reg}(ii) and Lemma~\ref{lem:estimator}(iii); the analysis then closes with $\GF^2$ replaced by $2\GF^2+2\shat^2$ throughout.

\medskip
\noindent\emph{Step 6: interpretation.}
Bound \eqref{eq:sharpdrift} is the formal content of the statement that local computation is free only up to a point. Drift enters \eqref{eq:mainbound} through $\beta^2t^2$ with $t<\tau$, so it is governed by the product $\beta\tau$ and not by $\beta$ and $\tau$ separately: doubling the number of local steps while halving the outer step size leaves the drift term unchanged. It also vanishes at $t=0$, i.e.\ immediately after each synchronization, and grows monotonically until the next one, which is why the analysis telescopes over $(k,t)$ jointly rather than treating rounds in isolation.
\end{proof}

\subsection[Proof of the one-step progress lemma]{Proof of Lemma~\ref{lem:onestep} (one-step progress)}\label{app:lem5}

\begin{proof}
Fix a round $k$ and a local step $0\le t\le\tau-1$, and abbreviate
\[
\bar\theta:=\bar\theta^{k,t},
\qquad
\theta_i:=\theta_i^{k,t},
\qquad
a:=\nabla F(\bar\theta),
\qquad
P:=\frac1n\sum_{i=1}^n\widehat{\nabla F}_i(\theta_i),
\]
so that the averaged iterate advances as
\begin{equation}\label{eq:avgupdate}
\bar\theta^{k,t+1}
=\frac1n\sum_{i=1}^n\theta_i^{k,t+1}
=\frac1n\sum_{i=1}^n\Big(\theta_i+\beta\,\widehat{\nabla F}_i(\theta_i)\Big)
=\bar\theta+\beta P .
\end{equation}
Identity \eqref{eq:avgupdate} is what makes the virtual sequence tractable: although no agent computes $P$ and the server never materializes $\bar\theta^{k,t}$ for $0<t<\tau$, the average \emph{behaves} as if a single ascent step of size $\beta$ had been taken along the averaged direction $P$.

\medskip
\noindent\emph{Step 0: measurability of the iterates and independence of the batches.}
Recall that $\mathcal F^{k,t}$ is generated by all randomness up to and including the computation of $\{\theta_i^{k,t}\}_{i\in[n]}$. Hence $\bar\theta$, $\theta_1,\dots,\theta_n$ and $a$ are $\mathcal F^{k,t}$-measurable, i.e.\ deterministic under $\E_{k,t}[\cdot]=\E[\cdot\mid\mathcal F^{k,t}]$, whereas the batches $\big(\mathcal D_{\mathrm{in}}^{i},\mathcal D_{\mathrm h}^{i},\mathcal D_{\mathrm{out}}^{i}\big)_{i\in[n]}$ drawn at local step $t$ are fresh: they are independent of $\mathcal F^{k,t}$ and, by the sampling protocol of Algorithm~\ref{alg:main}, independent \emph{across} agents. Two consequences are used below and are worth isolating, since the whole proof rests on them.

\emph{(M1) Conditional application of Lemma~\ref{lem:estimator}.} For each $i$, the estimator $\widehat{\nabla F}_i(\theta_i)$ is formed at the $\mathcal F^{k,t}$-measurable point $\theta_i$ from batches independent of $\mathcal F^{k,t}$. Lemma~\ref{lem:estimator} was proved conditionally on a fixed evaluation point, so it applies verbatim under $\E_{k,t}$: writing
\begin{equation}\label{eq:bi}
b_i:=\E_{k,t}\big[\widehat{\nabla F}_i(\theta_i)\big]-\nabla F_i(\theta_i),
\qquad\text{Lemma~\ref{lem:estimator}(ii) gives }\norm{b_i}\le\bhat ,
\end{equation}
and Lemma~\ref{lem:estimator}(iii) gives $\E_{k,t}\lVert\widehat{\nabla F}_i(\theta_i)-\E_{k,t}\widehat{\nabla F}_i(\theta_i)\rVert^2\le\shat^2$, for every $i$. Both bounds are uniform in the evaluation point, which is essential: $\theta_i$ is random, so a bound holding only at deterministic points would be unusable here.

\emph{(M2) Conditional independence across agents.} Given $\mathcal F^{k,t}$, the random vectors $\widehat{\nabla F}_1(\theta_1),\dots,\widehat{\nabla F}_n(\theta_n)$ are mutually independent, since each is a function of that agent's own fresh batches alone and the evaluation points $\theta_i$ have been fixed by the conditioning. Note that the $\theta_i$ are \emph{not} independent of one another (they share the history $\theta^k$), which is precisely why the conditioning is needed before independence can be invoked.

\medskip
\noindent\emph{Step 1: the smoothness inequality.}
By Lemma~\ref{lem:meta-reg}(iii) the map $\nabla F$ is $\LF$-Lipschitz, so the ascent form \eqref{eq:descentlemma} of the descent lemma, applied with the point $\bar\theta$, direction $P$ and step $\beta$, gives the pathwise inequality
\begin{equation}\label{eq:ascent}
F\big(\bar\theta+\beta P\big)\ge F(\bar\theta)+\beta\inner{a}{P}-\frac{\LF\beta^2}{2}\norm{P}^2 .
\end{equation}
This holds for every realization of the batches, so we may take $\E_{k,t}$ of both sides; by \eqref{eq:avgupdate} the left-hand side is $F(\bar\theta^{k,t+1})$. It remains to bound $\E_{k,t}\inner{a}{P}$ from below and $\E_{k,t}\norm{P}^2$ from above, which are Steps~3 and~4.

\medskip
\noindent\emph{Step 2: the conditional mean of $P$, and its two error terms.}
Averaging \eqref{eq:bi} over $i$ and inserting $\nabla F(\bar\theta)=\frac1n\sum_i\nabla F_i(\bar\theta)$,
\begin{equation}\label{eq:meanP}
\E_{k,t}P=\frac1n\sum_{i=1}^n\Big(\nabla F_i(\theta_i)+b_i\Big)=a+e+\bar b,
\qquad
e:=\frac1n\sum_{i=1}^n\Big(\nabla F_i(\theta_i)-\nabla F_i(\bar\theta)\Big),
\quad
\bar b:=\frac1n\sum_{i=1}^nb_i .
\end{equation}
The decomposition \eqref{eq:meanP} isolates the two ways in which the averaged direction fails to be the meta-gradient at the averaged point. The term $e$ is a \emph{drift} error: each agent evaluates its own meta-gradient at its own iterate $\theta_i$ rather than at $\bar\theta$. The term $\bar b$ is a \emph{bias} error, inherited from inner-loop sampling. They are controlled by different mechanisms and behave differently: $e$ vanishes at $t=0$ and can be made small by shrinking $\beta\tau$, whereas $\bar b$ is invariant to $(\beta,\tau,K,n)$ and yields only to a larger inner batch.

For the drift error, Jensen's inequality (F4 with the uniform average, or equivalently convexity of $\norm{\cdot}^2$), the $\LF$-Lipschitz continuity of each $\nabla F_i$ from Lemma~\ref{lem:meta-reg}(iii), and the sharpened drift bound \eqref{eq:sharpdrift} give
\begin{equation}\label{eq:e-bound}
\norm{e}^2
\;\overset{\text{(Jensen)}}{\le}\;\frac1n\sum_{i=1}^n\norm{\nabla F_i(\theta_i)-\nabla F_i(\bar\theta)}^2
\;\overset{\text{Lem.~\ref{lem:meta-reg}(iii)}}{\le}\;\LF^2\cdot\frac1n\sum_{i=1}^n\norm{\theta_i-\bar\theta}^2
\;\overset{\eqref{eq:sharpdrift}}{\le}\;\LF^2\beta^2t^2\GF^2 .
\end{equation}
Bound \eqref{eq:e-bound} is where the sharp form of Lemma~\ref{lem:drift} is used: the drift enters only through the \emph{averaged squared} deviation $\frac1n\sum_i\norm{\theta_i-\bar\theta}^2$, exactly the quantity that Lemma~\ref{lem:drift}(ii) bounds by $\beta^2t^2\GF^2$ rather than by the four-times-larger square of the per-agent bound. For the bias error, the triangle inequality and \eqref{eq:bi} give
\begin{equation}\label{eq:barb-bound}
\norm{\bar b}\le\frac1n\sum_{i=1}^n\norm{b_i}\le\bhat .
\end{equation}
Both \eqref{eq:e-bound} and \eqref{eq:barb-bound} hold surely, since \eqref{eq:sharpdrift} does.

\medskip
\noindent\emph{Step 3: lower bound on the inner product.}
Using \eqref{eq:meanP} and that $a$ is $\mathcal F^{k,t}$-measurable,
\[
\E_{k,t}\inner{a}{P}=\inner{a}{\E_{k,t}P}=\inner{a}{a+e+\bar b}=\norm{a}^2+\inner{a}{e+\bar b}.
\]
The cross term is the only place where the errors could destroy progress, and it is handled by Young's inequality. For any $u,v$ and any $\lambda>0$, $|\inner{u}{v}|\le\frac{\lambda}{2}\norm{u}^2+\frac{1}{2\lambda}\norm{v}^2$; taking $\lambda=\tfrac12$, $u=a$, $v=e+\bar b$ gives
\begin{equation}\label{eq:young}
\inner{a}{e+\bar b}\ge-\big|\inner{a}{e+\bar b}\big|\ge-\tfrac14\norm{a}^2-\norm{e+\bar b}^2 .
\end{equation}
The choice $\lambda=\tfrac12$ is deliberate: it forfeits only a quarter of the ascent term $\norm{a}^2$, retaining enough to absorb the second-moment cost incurred in Step~4 and still retain a positive multiple of $\norm{a}^2$. Combining \eqref{eq:young} with $\norm{e+\bar b}^2\le2\norm{e}^2+2\norm{\bar b}^2$ (fact (F4) with $k=2$) and
\eqref{eq:barb-bound},
\begin{equation}\label{eq:inner}
\E_{k,t}\inner{a}{P}\ \ge\ \tfrac34\norm{a}^2-2\norm{e}^2-2\bhat^2 .
\end{equation}

\medskip
\noindent\emph{Step 4: upper bound on the second moment, and the dependence on $n$.}
Decompose $P$ into its conditional mean and fluctuation and apply the bias--variance identity (F6) conditionally:
\begin{equation}\label{eq:secmomsplit}
\E_{k,t}\norm{P}^2=\norm{\E_{k,t}P}^2+\E_{k,t}\norm{P-\E_{k,t}P}^2 .
\end{equation}
For the fluctuation, write $X_i:=\widehat{\nabla F}_i(\theta_i)-\E_{k,t}\widehat{\nabla F}_i(\theta_i)$, so that $P-\E_{k,t}P=\frac1n\sum_iX_i$ with $\E_{k,t}X_i=0$. By (M2) the $X_i$ are conditionally independent, hence conditionally uncorrelated, so the cross terms vanish:
\[
\E_{k,t}\Big\lVert\frac1n\sum_{i=1}^nX_i\Big\rVert^2
=\frac{1}{n^2}\sum_{i=1}^n\sum_{j=1}^n\E_{k,t}\inner{X_i}{X_j}
=\frac{1}{n^2}\sum_{i=1}^n\E_{k,t}\norm{X_i}^2
\;\overset{\text{(M1)}}{\le}\;\frac{1}{n^2}\cdot n\,\shat^2=\frac{\shat^2}{n} .
\]
\begin{equation}\label{eq:varP}
\text{That is,}\qquad \E_{k,t}\norm{P-\E_{k,t}P}^2\le\frac{\shat^2}{n} .
\end{equation}
This is the \emph{only} step in the entire analysis at which the population size $n$ improves a bound, and it does so for the familiar reason that averaging $n$ independent zero-mean vectors divides the variance by $n$. Note carefully that no such cancellation is available for the drift term $e$ or the bias term $\bar b$: the $b_i$ are not zero-mean and need not cancel across agents (hence \eqref{eq:barb-bound} carries no $1/n$), and the drift deviations are deterministic given $\mathcal F^{k,t}$. This asymmetry is exactly what Remark~\ref{rem:compare} refers to in declining to claim a linear speedup at the $\tau=\Theta(\varepsilon^{-1/2})$ frontier.

For the mean part of \eqref{eq:secmomsplit}, \eqref{eq:meanP}, fact (F4) with $k=3$ and \eqref{eq:barb-bound} give $\norm{\E_{k,t}P}^2=\norm{a+e+\bar b}^2\le3\norm{a}^2+3\norm{e}^2+3\bhat^2$. Substituting this and \eqref{eq:varP} into \eqref{eq:secmomsplit},
\begin{equation}\label{eq:secmom}
\E_{k,t}\norm{P}^2\ \le\ 3\norm{a}^2+3\norm{e}^2+3\bhat^2+\frac{\shat^2}{n} .
\end{equation}

\medskip
\noindent\emph{Step 5: assembly, and the role of the step-size condition.} Take $\E_{k,t}$ in \eqref{eq:ascent} and insert \eqref{eq:inner} and \eqref{eq:secmom}. Collecting the coefficient of $\norm{a}^2$ and of $\norm{e}^2+\bhat^2$ separately,
\begin{align}
\E_{k,t}F\big(\bar\theta^{k,t+1}\big)
&\ \ge\ F(\bar\theta)+\beta\Big(\tfrac34\norm{a}^2-2\norm{e}^2-2\bhat^2\Big)
-\frac{\LF\beta^2}{2}\Big(3\norm{a}^2+3\norm{e}^2+3\bhat^2+\frac{\shat^2}{n}\Big)\notag\\
&\ =\ F(\bar\theta)
+\beta\underbrace{\Big(\tfrac34-\tfrac{3\LF\beta}{2}\Big)}_{=:c_1}\norm{a}^2
-\beta\underbrace{\Big(2+\tfrac{3\LF\beta}{2}\Big)}_{=:c_2}\big(\norm{e}^2+\bhat^2\big)
-\frac{\LF\beta^2\shat^2}{2n} .\label{eq:assembled}
\end{align}
The step-size condition $\beta\le\frac{1}{6\LF}$ enters here and nowhere else in this proof. It gives $\LF\beta\le\tfrac16$, hence $\tfrac{3\LF\beta}{2}\le\tfrac14$, and therefore
\[
c_1=\tfrac34-\tfrac{3\LF\beta}{2}\ \ge\ \tfrac34-\tfrac14=\tfrac12>0,
\qquad
c_2=2+\tfrac{3\LF\beta}{2}\ \le\ 2+\tfrac14=\tfrac94 .
\]
The first inequality is the substantive one: it guarantees that the ascent coefficient $c_1$ remains bounded away from zero, so that each local step makes progress proportional to $\norm{\nabla F(\bar\theta^{k,t})}^2$; without an upper bound on $\beta$ the curvature penalty $\tfrac{3\LF\beta}{2}$ could exceed $\tfrac34$ and $c_1$ could turn negative, at which point \eqref{eq:assembled} would assert nothing. The constant $6$ is not canonical (any $\beta\le c/\LF$ with $c<\tfrac12$ yields $c_1>0$) and is chosen because it yields the simple values $c_1\ge\tfrac12$ and $c_2\le\tfrac94$.

Substituting $c_1\ge\tfrac12$, $c_2\le\tfrac94$ into \eqref{eq:assembled}, and then
\eqref{eq:e-bound} for $\norm{e}^2$,
\[
\E_{k,t}F\big(\bar\theta^{k,t+1}\big)
\ \ge\ F\big(\bar\theta^{k,t}\big)+\frac{\beta}{2}\norm{\nabla F(\bar\theta^{k,t})}^2
-\frac{9\beta}{4}\Big(\LF^2\beta^2t^2\GF^2+\bhat^2\Big)-\frac{\LF\beta^2\shat^2}{2n},
\]
which is the assertion of the lemma. Note that both substitutions go in the correct direction, since $\norm{e}^2+\bhat^2\ge0$ and the term carries a minus sign.

\medskip
\noindent\emph{Step 6: interpretation of the three error terms.}
The inequality just proved shows that a single local step gains $\tfrac{\beta}{2}\norm{\nabla F}^2$ and loses three quantities, each traceable to a distinct mechanism and each with a distinct remedy.
\begin{itemize}
\item \emph{Drift}, $\tfrac94\beta\LF^2\beta^2t^2\GF^2$: quadratic in the local-step index $t$, so it is zero immediately after synchronization and grows until the next one. It is governed by the product $\beta t$, hence by $\beta\tau$, and is the term that forbids taking $\tau$ arbitrarily large.
\item \emph{Bias}, $\tfrac94\beta\bhat^2$: independent of $t$, of $\beta$ beyond the common factor, of $K$ and of $n$. Only $m_{\mathrm{in}}$ reduces it, which is why it becomes the bias floor of Corollary~\ref{cor:fo} when curvature is dropped and $\bhat$ ceases to decay in $m_{\mathrm{in}}$ at all.
\item \emph{Variance}, $\tfrac{\LF\beta^2\shat^2}{2n}$: the only term carrying $1/n$, by Step~4, and the only one quadratic in $\beta$, so it is suppressed by taking $\beta$ small.
\end{itemize}
Dividing by $\beta$, the three losses scale as $\beta^2t^2$, $1$ and $\beta$ respectively; the competition between the first two, together with the optimization term recovered by telescoping in Theorem~\ref{thm:main}, is what fixes the frontier $\tau=\Theta(\varepsilon^{-1/2})$, $K=\Theta(\varepsilon^{-3/2})$.
\end{proof}

\subsection[Proof of the main theorem]{Proof of Theorem~\ref{thm:main} (convergence of \textsc{Per-FedAvg-PG})}\label{app:thm}

\begin{proof}
Write $\mathcal E_{k,t}:=\E\norm{\nabla F(\bar\theta^{k,t})}^2$ for the quantity to be bounded, where $\E$ is the total expectation over all randomness of the run. Throughout, $\alpha>0$ and $\beta\le\frac1{6\LF}$ are fixed, so Lemma~\ref{lem:onestep} is available at every $(k,t)$.

\medskip
\noindent\emph{Step 0: the virtual sequence is a single trajectory.}
The sum in \eqref{eq:mainbound} runs over the doubly indexed family $\{\bar\theta^{k,t}\}_{0\le k<K,\,0\le t<\tau}$, and the telescoping below requires that consecutive rounds be glued together. We verify this. By definition $\bar\theta^{k,t}=\frac1n\sum_i\theta_i^{k,t}$; at the end of round $k$ the server aggregates, and at the start of round $k+1$ it broadcasts, so
\begin{equation}\label{eq:glue}
\bar\theta^{k,\tau}=\frac1n\sum_{i=1}^n\theta_i^{k,\tau}\overset{\text{(aggregation)}}{=}\theta^{k+1} \overset{\text{(re-initialization)}}{=}\frac1n\sum_{i=1}^n\theta_i^{k+1,0}=\bar\theta^{k+1,0},
\end{equation}
the middle two equalities being lines~10 and~4 of Algorithm~\ref{alg:main}: every agent is re-initialized at the common point $\theta^{k+1}$, so the average of the re-initialized parameters is $\theta^{k+1}$ itself. Consequently the concatenated sequence
\begin{equation}\label{eq:concat}
\bar\theta^{0,0},\ \bar\theta^{0,1},\ \dots,\ \bar\theta^{0,\tau-1},\ \bar\theta^{1,0},\ \bar\theta^{1,1},\ \dots,\ \bar\theta^{K-1,\tau-1},\ \bar\theta^{K-1,\tau}
\end{equation}
is a single chain of $K\tau+1$ points in which each element is obtained from its predecessor by one application of \eqref{eq:avgupdate}, with $\bar\theta^{0,0}=\theta^0$ and $\bar\theta^{K-1,\tau}=\theta^{K}$. Note that \eqref{eq:glue} is a statement about the \emph{virtual} average and not about anything the server holds: the intermediate points $\bar\theta^{k,t}$ with $0<t<\tau$ are never formed by any party, since the agents are apart during a round. What matters for the analysis is only that they exist as a mathematical object and that \eqref{eq:concat} has no gaps at the round boundaries; were the aggregation rule anything other than the plain average (a weighted average with round-dependent weights, say), \eqref{eq:glue} would fail and the potential terms would not cancel.

\medskip
\noindent\emph{Step 1: from conditional to unconditional one-step progress.}
Fix $(k,t)$ with $0\le k<K$ and $0\le t<\tau$. Lemma~\ref{lem:onestep} asserts the conditional inequality
\[
\E_{k,t}F\big(\bar\theta^{k,t+1}\big)\ \ge\ F\big(\bar\theta^{k,t}\big)+\frac{\beta}{2}\norm{\nabla F(\bar\theta^{k,t})}^2
-\frac{9\beta}{4}\Big(\LF^2\beta^2t^2\GF^2+\bhat^2\Big)-\frac{\LF\beta^2\shat^2}{2n}.
\]
Both sides are $\mathcal F^{k,t}$-measurable random variables, and the right-hand side is integrable, being bounded: $|F|\le\bar R$ by Lemma~\ref{lem:meta-reg}(i) and $\norm{\nabla F}\le\GF$ by Lemma~\ref{lem:meta-reg}(ii), so all expectations below are finite and no integrability caveat is needed. Taking total expectations and using the tower property
$\E\big[\E_{k,t}F(\bar\theta^{k,t+1})\big]=\E F(\bar\theta^{k,t+1})$,
\[
\E F\big(\bar\theta^{k,t+1}\big)\ \ge\ \E F\big(\bar\theta^{k,t}\big)+\frac{\beta}{2}\mathcal E_{k,t}
-\frac{9\beta}{4}\LF^2\beta^2t^2\GF^2-\frac{9\beta}{4}\bhat^2-\frac{\LF\beta^2\shat^2}{2n}.
\]
Rearranging so that the quantity of interest stands alone on the left,
\begin{equation}\label{eq:onestep-rearranged}
\frac{\beta}{2}\,\mathcal E_{k,t}
\ \le\ \underbrace{\E F\big(\bar\theta^{k,t+1}\big)-\E F\big(\bar\theta^{k,t}\big)}_{\text{potential increment}}
\ +\ \underbrace{\tfrac94\LF^2\beta^3\GF^2\,t^2}_{\text{drift}}
\ +\ \underbrace{\tfrac94\beta\,\bhat^2}_{\text{bias}}
\ +\ \underbrace{\tfrac{\LF\beta^2\shat^2}{2n}}_{\text{variance}} ,
\end{equation}
where $\tfrac94\beta\cdot\LF^2\beta^2\GF^2t^2=\tfrac94\LF^2\beta^3\GF^2t^2$. Only the first term on the right depends on $(k,t)$ through the iterates; the remaining three are deterministic, and only the drift term depends on the indices at all, through $t^2$.

\medskip
\noindent\emph{Step 2: summation and telescoping.}
Sum \eqref{eq:onestep-rearranged} over $t=0,\dots,\tau-1$ and $k=0,\dots,K-1$, a total of $K\tau$ inequalities:
\begin{equation}\label{eq:summed}
\frac{\beta}{2}\sum_{k=0}^{K-1}\sum_{t=0}^{\tau-1}\mathcal E_{k,t}
\ \le\ \underbrace{\sum_{k=0}^{K-1}\sum_{t=0}^{\tau-1}\Big(\E F\big(\bar\theta^{k,t+1}\big)-\E F\big(\bar\theta^{k,t}\big)\Big)}_{=:\,\Sigma_{\mathrm{pot}}}
\ +\ \tfrac94\LF^2\beta^3\GF^2\underbrace{\sum_{k=0}^{K-1}\sum_{t=0}^{\tau-1}t^2}_{=:\,\Sigma_{t^2}}
\ +\ K\tau\Big(\tfrac94\beta\bhat^2+\tfrac{\LF\beta^2\shat^2}{2n}\Big),
\end{equation}
the last group because the bias and variance terms are constant in $(k,t)$ and there are $K\tau$ summands. We evaluate $\Sigma_{\mathrm{pot}}$ and $\Sigma_{t^2}$ in turn.

\emph{The potential sum.} By Step~0 the increments in $\Sigma_{\mathrm{pot}}$ are consecutive differences along the single chain \eqref{eq:concat}. Explicitly, the inner sum over $t$ telescopes within round $k$ to
\[
\sum_{t=0}^{\tau-1}\Big(\E F\big(\bar\theta^{k,t+1}\big)-\E F\big(\bar\theta^{k,t}\big)\Big)
=\E F\big(\bar\theta^{k,\tau}\big)-\E F\big(\bar\theta^{k,0}\big),
\]
and summing this over $k$ telescopes again, because \eqref{eq:glue} identifies the terminal point of round $k$ with the initial point of round $k+1$, so that $\E F(\bar\theta^{k,\tau})=\E F(\bar\theta^{k+1,0})$ and consecutive contributions cancel:
\begin{equation}\label{eq:telescope}
\Sigma_{\mathrm{pot}}
=\sum_{k=0}^{K-1}\Big(\E F\big(\bar\theta^{k,\tau}\big)-\E F\big(\bar\theta^{k,0}\big)\Big)
\overset{\eqref{eq:glue}}{=}\E F\big(\bar\theta^{K-1,\tau}\big)-F\big(\bar\theta^{0,0}\big)
=\E F\big(\theta^{K}\big)-F\big(\theta^{0}\big).
\end{equation}
Thus the entire accumulated potential collapses to the difference between the endpoints of the run, regardless of $K$ and $\tau$; had the telescoping been available only \emph{within} rounds, the right-hand side would instead have carried $K$ separate increments and the bound would have degraded by a factor of $K$. Finally, since $F^\star=\sup_\theta F(\theta)$ dominates $F(\theta^K)$ pointwise, hence in expectation,
\begin{equation}\label{eq:potbound}
\Sigma_{\mathrm{pot}}\ \le\ F^\star-F\big(\theta^0\big)=\Delta,
\end{equation}
and $\Delta$ is finite by Lemma~\ref{lem:meta-reg}(i); indeed $\Delta\le2\bar R$ by \eqref{eq:Deltabound}. This is the sole point at which boundedness of the meta-objective is used, and it is what removes any need to assume a finite optimality gap.

\emph{The drift sum.} The summand $t^2$ does not depend on $k$, so
\[
\Sigma_{t^2}=K\sum_{t=0}^{\tau-1}t^2=K\cdot\frac{(\tau-1)\tau(2\tau-1)}{6}\ \le\ K\,\frac{\tau^3}{3},
\]
using the closed form $\sum_{t=0}^{\tau-1}t^2=\frac{(\tau-1)\tau(2\tau-1)}{6}$ and the crude but sufficient estimate $(\tau-1)\tau(2\tau-1)\le\tau\cdot\tau\cdot2\tau=2\tau^3$. (Alternatively, $\sum_{t=0}^{\tau-1}t^2\le\int_0^{\tau}s^2\,ds=\tau^3/3$ by monotonicity of $s\mapsto s^2$.) The cubic growth in $\tau$, against the linear growth of the $K\tau$ prefactor on the left, is what will make the drift term scale as $\tau^2$ after normalization, and hence what caps the number of useful local steps.

\medskip
\noindent\emph{Step 3: normalizing.}
Insert \eqref{eq:potbound} and the bound on $\Sigma_{t^2}$ into \eqref{eq:summed} and multiply both sides by $\frac{2}{\beta K\tau}>0$, which preserves the inequality:
\[
\frac{1}{K\tau}\sum_{k=0}^{K-1}\sum_{t=0}^{\tau-1}\mathcal E_{k,t}
\ \le\ \frac{2\Delta}{\beta K\tau}
\ +\ \frac{2}{\beta K\tau}\cdot\tfrac94\LF^2\beta^3\GF^2\cdot\frac{K\tau^3}{3}
\ +\ \frac{2}{\beta K\tau}\cdot K\tau\Big(\tfrac94\beta\bhat^2+\tfrac{\LF\beta^2\shat^2}{2n}\Big).
\]
The three terms simplify as follows. For the drift term,
\[
\frac{2}{\beta K\tau}\cdot\frac{9\LF^2\beta^3\GF^2}{4}\cdot\frac{K\tau^3}{3}
=\frac{2\cdot9}{4\cdot3}\,\LF^2\GF^2\,\beta^{3-1}\,\tau^{3-1}
=\frac{3}{2}\,\LF^2\GF^2\,\beta^2\tau^2 .
\]
For the bias term, $\frac{2}{\beta K\tau}\cdot K\tau\cdot\tfrac94\beta\bhat^2=\tfrac92\bhat^2$, the factors $K\tau$ and $\beta$ cancelling entirely, which is the algebraic reason the bias survives as a floor immune to $K$, $\tau$ and $\beta$. For the variance term,
$\frac{2}{\beta K\tau}\cdot K\tau\cdot\frac{\LF\beta^2\shat^2}{2n}=\frac{\LF\beta\shat^2}{n}$. Collecting,
\[
\frac{1}{K\tau}\sum_{k=0}^{K-1}\sum_{t=0}^{\tau-1}\E\norm{\nabla F(\bar\theta^{k,t})}^2
\ \le\ \frac{2\Delta}{\beta K\tau}+\frac32\,\LF^2\GF^2\,\beta^2\tau^2+\frac92\,\bhat^2+\frac{\LF\beta\shat^2}{n},
\]
which is \eqref{eq:mainbound}.

\medskip
\noindent\emph{Step 4: the randomized output.}
Let $\hat\theta:=\bar\theta^{\kappa,\iota}$ where the pair $(\kappa,\iota)$ is drawn uniformly from $\{0,\dots,K-1\}\times\{0,\dots,\tau-1\}$, independently of everything else. Conditioning on $(\kappa,\iota)$ and using that each of the $K\tau$ values is taken with probability $1/(K\tau)$,
\[
\E\norm{\nabla F(\hat\theta)}^2
=\sum_{k=0}^{K-1}\sum_{t=0}^{\tau-1}\Pr\big[(\kappa,\iota)=(k,t)\big]\,\E\norm{\nabla F(\bar\theta^{k,t})}^2
=\frac{1}{K\tau}\sum_{k=0}^{K-1}\sum_{t=0}^{\tau-1}\mathcal E_{k,t},
\]
so the bound just established applies to $\E\norm{\nabla F(\hat\theta)}^2$ verbatim. Hence
\eqref{eq:mainbound} is not merely a statement about an average over the run but certifies a single returned point in the sense of Definition~\ref{def:fosp}. Two practical remarks follow. First, the index pair may be drawn \emph{in advance}, since the bound does not require $(\kappa,\iota)$ to be chosen adaptively; the server then needs one extra aggregation, at the designated local step, to materialize $\hat\theta$. Second, with $\tau=1$ every $\bar\theta^{k,0}=\theta^k$ is already held by the server, so no extra communication is required at all.

\medskip
\noindent\emph{Step 5: scope of the guarantee.}
Since $F$ is in general nonconcave, \eqref{eq:mainbound} certifies approximate first-order stationarity and nothing stronger; it does not locate a global maximizer, nor exclude saddle points of $F$. Note also that all four terms are non-negative and that the bound is monotone in the natural directions: it improves as $K$ grows, as $n$ grows, and as $m_{\mathrm{in}}$ grows through $\bhat$, but is \emph{not} monotone in $\tau$ or $\beta$, each of which appears both in a numerator and in a denominator. That non-monotonicity is precisely the trade-off Corollary~\ref{cor:rates} resolves: raising $\tau$ reduces the optimization term $2\Delta/(\beta K\tau)$ but inflates the drift term $\tfrac32\LF^2\GF^2\beta^2\tau^2$, and the two are balanced at $\beta\tau=\Theta(\sqrt\varepsilon/(\LF\GF))$.
\end{proof}

\subsection[Proof of the rate corollary]{Proof of Corollary~\ref{cor:rates} (rates)}\label{app:cor1}

\begin{proof}
Write $\mathcal E:=\frac{1}{K\tau}\sum_{k,t}\mathcal E_{k,t} =\frac{1}{K\tau}\sum_{k<K}\sum_{t<\tau}\E\norm{\nabla F(\bar\theta^{k,t})}^2$ for the left-hand side of \eqref{eq:mainbound}. The proof has two parts: first, that conditions (i)--(iv) together with the stated round count force $\mathcal E\le\varepsilon$; second, that the particular instantiation of $(\beta,\tau,m_{\mathrm{in}},m_{\mathrm h},m_{\mathrm{out}})$ given in the statement satisfies those conditions and yields the advertised complexities.

\medskip
\noindent\emph{Step 1: allocation of the error budget.}
Condition (i), $\beta\le\frac1{6\LF}$, is the hypothesis of Theorem~\ref{thm:main}, so \eqref{eq:mainbound} is available:
\[
\mathcal E\ \le\ \underbrace{\frac{2\Delta}{\beta K\tau}}_{(\mathrm A)\ \text{optimization}}
+\underbrace{\tfrac32\LF^2\GF^2\beta^2\tau^2}_{(\mathrm B)\ \text{drift}}
+\underbrace{\tfrac92\bhat^2}_{(\mathrm C)\ \text{bias}}
+\underbrace{\tfrac{\LF\beta\shat^2}{n}}_{(\mathrm D)\ \text{variance}} .
\]
We show that each of the four terms is at most $\varepsilon/4$, whence $\mathcal E\le\varepsilon$. Allocating the budget in four equal parts is a choice, not a necessity (any allocation summing to $\varepsilon$ would do, and an unequal one would trade the constants in (ii)--(iv) against one another), but it is the choice that keeps every constant explicit and none of the four requirements artificially severe.

\emph{Term (B), from condition (ii).} Squaring $\beta\tau\le\frac{\sqrt\varepsilon}{\sqrt6\LF\GF}$
gives $\beta^2\tau^2\le\frac{\varepsilon}{6\LF^2\GF^2}$, hence
\[
\tfrac32\LF^2\GF^2\beta^2\tau^2\ \le\ \tfrac32\LF^2\GF^2\cdot\frac{\varepsilon}{6\LF^2\GF^2}
=\frac{\varepsilon}{4} .
\]
Note that (ii) constrains only the \emph{product} $\beta\tau$, consistent with the observation after \eqref{eq:sharpdrift} that drift depends on $\beta$ and $\tau$ only through their product.

\emph{Term (C), from condition (iii).} By Lemma~\ref{lem:estimator}(ii),
$\bhat^2=(1+\alpha\LJ)^2\alpha^2\LJ^2\GJ^2/m_{\mathrm{in}}$, so
$m_{\mathrm{in}}\ge18(1+\alpha\LJ)^2\alpha^2\LJ^2\GJ^2/\varepsilon$ gives
\[
\tfrac92\bhat^2=\frac{9(1+\alpha\LJ)^2\alpha^2\LJ^2\GJ^2}{2\,m_{\mathrm{in}}}
\ \le\ \frac92\cdot\frac{(1+\alpha\LJ)^2\alpha^2\LJ^2\GJ^2\,\varepsilon}{18(1+\alpha\LJ)^2\alpha^2\LJ^2\GJ^2}
=\frac{\varepsilon}{4} .
\]
Equivalently, (iii) is exactly the requirement $\bhat\le\sqrt{\varepsilon}/(3\sqrt2)$; the constant $18$ is $9/2$ divided by $1/4$. Observe that $m_{\mathrm{in}}$ appears in no other condition except through $\shat^2$ in (iv), and that (iii) is the \emph{only} condition involving the adaptation step size $\alpha$: the required inner batch grows quadratically in $\alpha$, so stronger personalization is statistically more expensive, as noted in regime (R3) of Section~\ref{sec:regimes}.

\emph{Term (D), from condition (iv).} Immediate: $\beta\le\frac{n\varepsilon}{4\LF\shat^2}$ rearranges
to $\frac{\LF\beta\shat^2}{n}\le\frac{\varepsilon}{4}$.

\emph{Term (A), from the round count.} With $K\ge\frac{8\Delta}{\beta\tau\varepsilon}$ we obtain $\beta K\tau\ge\frac{8\Delta}{\varepsilon}$, hence $\frac{2\Delta}{\beta K\tau}\le\frac{2\Delta\varepsilon}{8\Delta}=\frac{\varepsilon}{4}$. If $\Delta=0$ then $\theta^0$ already maximizes $F$ and the claim is trivial, so we may assume $\Delta>0$ and the displayed quotient is well defined.

Summing the four quarters gives $\mathcal E\le\varepsilon$, which is the first assertion.

\medskip
\noindent\emph{Step 2: the instantiation satisfies (ii) and (iii).}
Take $m_{\mathrm h},m_{\mathrm{out}}=\Theta(1)$, $m_{\mathrm{in}}=\Theta(\varepsilon^{-1})$ chosen to meet (iii), $\tau=\lceil\varepsilon^{-1/2}\rceil$, and
\begin{equation}\label{eq:betachoice}
\beta:=\frac{\sqrt\varepsilon}{\sqrt6\,\LF\GF\,\tau} .
\end{equation}
Then $\beta\tau=\frac{\sqrt\varepsilon}{\sqrt6\LF\GF}$, so (ii) holds \emph{with equality}: the drift budget is saturated, which is what makes this choice extremal rather than merely feasible. Condition (iii) holds by the choice of $m_{\mathrm{in}}$. Since $\varepsilon^{-1/2}\le\tau\le\varepsilon^{-1/2}+1$ and we assume throughout that $\varepsilon\le1$ (otherwise the conclusion is vacuous, as $\norm{\nabla F}^2\le\GF^2$ always and one may simply rescale), we record for later the two-sided estimate
\begin{equation}\label{eq:taubounds}
\varepsilon^{-1/2}\ \le\ \tau\ \le\ 2\varepsilon^{-1/2},
\qquad\text{hence}\qquad
\tau=\Theta\big(\varepsilon^{-1/2}\big),
\end{equation}
the upper bound because $1\le\varepsilon^{-1/2}$ when $\varepsilon\le1$. Combining \eqref{eq:betachoice} with the lower bound in \eqref{eq:taubounds},
\begin{equation}\label{eq:betasize}
\beta=\frac{\sqrt\varepsilon}{\sqrt6\LF\GF\tau}\ \le\ \frac{\sqrt\varepsilon\cdot\sqrt\varepsilon}{\sqrt6\LF\GF}
=\frac{\varepsilon}{\sqrt6\,\LF\GF},
\qquad\text{and similarly}\qquad
\beta\ \ge\ \frac{\varepsilon}{2\sqrt6\,\LF\GF},
\end{equation}
so $\beta=\Theta\big(\varepsilon/(\LF\GF)\big)$.

\medskip
\noindent\emph{Step 3: the instantiation satisfies (i).}
By \eqref{eq:betasize} it suffices that $\frac{\varepsilon}{\sqrt6\LF\GF}\le\frac{1}{6\LF}$. Cancelling $\LF>0$ from both sides and rearranging,
\[
\frac{\varepsilon}{\sqrt6\,\GF}\le\frac16
\iff
\varepsilon\le\frac{\sqrt6\,\GF}{6}=\frac{\GF}{\sqrt6},
\]
which is the side condition stated in the corollary. This is a restriction only in the trivial regime: by Lemma~\ref{lem:meta-reg}(ii) we have $\norm{\nabla F(\theta)}^2\le\GF^2$ for every $\theta$, so any $\varepsilon$ of order $\GF^2$ or larger is met by \emph{every} point and the guarantee is empty. The condition $\varepsilon\le\GF/\sqrt6$ therefore excludes only accuracies that were never interesting.

\medskip
\noindent\emph{Step 4: the instantiation satisfies (iv), and the cost of its failure.}
Condition (iv) reads $\beta\le\frac{n\varepsilon}{4\LF\shat^2}$. Substituting \eqref{eq:betachoice} and cross-multiplying by the positive quantities $\sqrt6\LF\GF\tau$ and $4\LF\shat^2$,
\[
\frac{\sqrt\varepsilon}{\sqrt6\LF\GF\tau}\le\frac{n\varepsilon}{4\LF\shat^2}
\iff
4\,\shat^2\,\sqrt\varepsilon\ \le\ \sqrt6\,n\,\GF\,\tau\,\varepsilon
\iff
\shat^2\ \le\ \frac{\sqrt6}{4}\,n\,\GF\,\tau\sqrt{\varepsilon} .
\]
By \eqref{eq:taubounds}, $\tau\sqrt\varepsilon\ge1$, so it suffices that \begin{equation}\label{eq:sigmacond} \shat^2\ \le\ \frac{\sqrt6}{4}\,n\,\GF ,
\end{equation}
which is the second side condition stated in the corollary. Two observations make \eqref{eq:sigmacond} mild. First, it is $\varepsilon$-free: by Lemma~\ref{lem:estimator}(iii),
\[
\shat^2\le3\GJ^2\Big[\alpha^2\LJ^2+(1+\alpha\LJ)^2+(1+\alpha\LJ)^2\alpha^2\LJ^2\Big]
\qquad\text{for all }m_{\mathrm{in}},m_{\mathrm h},m_{\mathrm{out}}\ge1,
\]
so $\shat^2$ is bounded by a problem constant regardless of the batch sizes and cannot grow as $\varepsilon\to0$. Second, \eqref{eq:sigmacond} is satisfied for all sufficiently large $n$, and for any fixed $n$ it is satisfied once $m_{\mathrm h}$ and $m_{\mathrm{out}}$ are large enough, since $\shat^2\to3\GJ^2(1+\alpha\LJ)^2\alpha^2\LJ^2/m_{\mathrm{in}}$ as $m_{\mathrm h},m_{\mathrm{out}}\to\infty$.

Suppose nonetheless that \eqref{eq:sigmacond} fails. One may then replace \eqref{eq:betachoice} by $\beta':=\min\big\{\beta,\ \frac{n\varepsilon}{4\LF\shat^2}\big\}$, which satisfies (i), (ii) and (iv) simultaneously, since decreasing $\beta$ preserves (i) and (ii). The cost is confined to the round count, which by Step~5 below is inversely proportional to $\beta$: writing $K'=8\Delta/(\beta'\tau\varepsilon)$ and using
$\frac{1}{\min\{x,y\}}\le\frac1x+\frac1y$,
\[
K'\ \le\ \frac{8\Delta}{\tau\varepsilon}\Big(\frac{1}{\beta}+\frac{4\LF\shat^2}{n\varepsilon}\Big)
\ =\ \underbrace{\frac{8\sqrt6\,\Delta\LF\GF}{\varepsilon^{3/2}}}_{\text{as in Step~5}}
+\ \frac{32\,\Delta\LF\shat^2}{n\,\tau\,\varepsilon^{2}}
\ \overset{\eqref{eq:taubounds}}{\le}\ \frac{8\sqrt6\,\Delta\LF\GF}{\varepsilon^{3/2}}+\frac{32\,\Delta\LF\shat^2}{n\,\varepsilon^{3/2}},
\]
i.e.\ $K'=\mathcal O\big(\Delta\LF(\GF+\shat^2/n)\varepsilon^{-3/2}\big)$. The exponent $\varepsilon^{-3/2}$ is therefore unaffected and only the prefactor changes, growing by an additive $\shat^2/n$; we state \eqref{eq:sigmacond} explicitly rather than absorbing this into the $\mathcal O(\cdot)$ so that the dependence is visible.

\medskip
\noindent\emph{Step 5: the resulting complexities.}
Take $K$ at its smallest admissible value, $K=\lceil 8\Delta/(\beta\tau\varepsilon)\rceil$. Using $\beta\tau=\frac{\sqrt\varepsilon}{\sqrt6\LF\GF}$ from Step~2,
\begin{equation}\label{eq:Kvalue}
\frac{8\Delta}{\beta\tau\varepsilon}
=\frac{8\Delta}{\varepsilon}\cdot\frac{\sqrt6\,\LF\GF}{\sqrt\varepsilon}
=\frac{8\sqrt6\,\Delta\,\LF\,\GF}{\varepsilon^{3/2}}
=\mathcal O\big(\Delta\LF\GF\,\varepsilon^{-3/2}\big),
\end{equation}
so $K=\mathcal O(\Delta\LF\GF\varepsilon^{-3/2})$ communication rounds suffice, with $\tau=\Theta(\varepsilon^{-1/2})$ local steps per round by \eqref{eq:taubounds}. Multiplying, the total number of local meta-gradient steps taken by each agent is
\begin{equation}\label{eq:Ktau}
K\tau=\mathcal O\big(\varepsilon^{-3/2}\big)\cdot\Theta\big(\varepsilon^{-1/2}\big)=\mathcal O\big(\varepsilon^{-2}\big).
\end{equation}

For the trajectory complexity, each call to \textsc{MetaGrad} draws $m_{\mathrm{in}}+m_{\mathrm h}+m_{\mathrm{out}}$ trajectories, and with $m_{\mathrm{in}}=\Theta(\varepsilon^{-1})$ and $m_{\mathrm h},m_{\mathrm{out}}=\Theta(1)$ this is $\Theta(\varepsilon^{-1})$. Hence each agent consumes
\[
K\tau\big(m_{\mathrm{in}}+m_{\mathrm h}+m_{\mathrm{out}}\big)
=\mathcal O\big(\varepsilon^{-2}\big)\cdot\Theta\big(\varepsilon^{-1}\big)=\mathcal O\big(\varepsilon^{-3}\big)
\]
trajectories, which is the last assertion. By Proposition~\ref{prop:sharpbias} the inner batch may instead be taken as $m_{\mathrm{in}}=\Theta(\varepsilon^{-1/2})$, which reduces this to $\mathcal O(\varepsilon^{-5/2})$ while leaving \eqref{eq:Kvalue} and \eqref{eq:Ktau} untouched, since neither $K$ nor $\tau$ depends on $m_{\mathrm{in}}$.

\medskip
\noindent\emph{Step 6: optimality of the choice of $\tau$.}
The choice $\tau=\Theta(\varepsilon^{-1/2})$ is not arbitrary: it is the unique scaling, up to constants, that minimizes communication subject to $\mathcal E\le\varepsilon$. Treating $\tau$ as free and saturating (ii) with $\beta\tau=c\sqrt\varepsilon/(\LF\GF)$, the round count \eqref{eq:Kvalue} becomes $K=\Theta(\Delta\LF\GF\varepsilon^{-3/2})$ \emph{independently of $\tau$}, so pushing $\tau$ higher gives no further reduction in $K$; it merely forces $\beta$ down in proportion, since only the product is constrained. Pushing $\tau$ lower, on the other hand, leaves the drift budget unused and forces $K$ up. At $\tau=1$ the drift term is not merely small but exactly zero: the sum evaluated in Step~2 of the proof of Theorem~\ref{thm:main} is $\Sigma_{t^2}=K\sum_{t=0}^{0}t^2=0$, so term~(B) of \eqref{eq:mainbound} vanishes and condition (ii) is \emph{vacuous} rather than binding. (The estimate $\Sigma_{t^2}\le K\tau^3/3$ used to state \eqref{eq:mainbound} is loose precisely at $\tau=1$, and one should read off the exact value instead.) The active constraints are then (i) and (iv), and Step~1 gives $K=\mathcal O\big(\LF\Delta/\varepsilon+\LF\Delta\shat^2/(n\varepsilon^2)\big)$, as reported in regime (R3) of Section~\ref{sec:regimes}. The value $\tau=\Theta(\varepsilon^{-1/2})$ is thus the smallest number of local steps at which the drift constraint (ii), rather than the step-size ceiling (i), is the active one, and $(K,\tau)=\big(\Theta(\varepsilon^{-3/2}),\Theta(\varepsilon^{-1/2})\big)$ is the corresponding frontier. Note finally that the total local work $K\tau=\mathcal O(\varepsilon^{-2})$ in \eqref{eq:Ktau} is the same at every point of this frontier: federation reduces communication by a factor of $\tau$ at no cost in computation, up to the ceiling, which is the precise sense in which local steps are ``free''.
\end{proof}

\subsection[Proof of the Hessian-free corollary]{Proof of Corollary~\ref{cor:fo} (convergence of the Hessian-free variant)}\label{app:fo}

\begin{proof}
Throughout, $\E$ is conditional on the current iterate $\theta$, and we retain the notation of the proof of Lemma~\ref{lem:estimator}: $\theta^+=\theta+\alpha\nabla J_i(\theta)$, $\tilde\theta=\theta+\alpha\hat g_{\mathrm{in}}$ and $\mathcal J_i(\theta)=I+\alpha\nabla^2J_i(\theta)$. The first-order estimator is
\[
\widehat{\nabla F}^{\mathrm{FO}}_i(\theta):=\hat g_{\mathrm{out}},
\]
that is, \eqref{eq:estimator} with the curvature factor $(I+\alpha\hat H)$ replaced by the identity. The inner batch $\mathcal D_{\mathrm{in}}$ is still drawn, since $\tilde\theta$ is still needed to sample $\mathcal D_{\mathrm{out}}$ under the adapted policy; only the curvature batch is dispensed with, so $m_{\mathrm h}=0$.

The plan is to re-establish, for this estimator, the three properties of Lemma~\ref{lem:estimator} on which Lemma~\ref{lem:drift}, Lemma~\ref{lem:onestep} and Theorem~\ref{thm:main} depend, namely an almost sure norm bound, a bias bound and a second-moment bound. Once these are in place with constants $(\GF,\bhat_{\mathrm{FO}},\shat^2_{\mathrm{FO}})$, every subsequent step of the analysis applies verbatim, because those three properties are the \emph{only} features of the estimator that the convergence proof uses.

\medskip
\noindent\emph{Step 1: verification of the almost sure bound.}
Lemma~\ref{lem:estimator}(i) is a statement about \eqref{eq:estimator} and does not transfer automatically, so we verify it directly. By Lemma~\ref{lem:pg-bounds}(ii), applied pathwise at the random parameter $\tilde\theta$ exactly as in Step~1 of the proof of Lemma~\ref{lem:estimator},
\begin{equation}\label{eq:foas}
\big\lVert\widehat{\nabla F}^{\mathrm{FO}}_i(\theta)\big\rVert=\norm{\hat g_{\mathrm{out}}}\le\GJ\le(1+\alpha\LJ)\GJ=\GF
\end{equation}
for every realization of the batches. The bound is thus not merely inherited but \emph{improved}: the FO estimator obeys the sharper constant $\GJ$ in place of $\GF$. Since Lemma~\ref{lem:drift} uses only that the local increments are bounded by $\GF$, it carries over unchanged; a reader who wishes to exploit the improvement may replace $\GF$ by $\GJ$ throughout the drift and one-step lemmas, which tightens the drift term of \eqref{eq:mainbound} by the factor $(1+\alpha\LJ)^{-2}$. We do not do so, in order to state \eqref{eq:mainbound} once with a single pair of constants.

\medskip
\noindent\emph{Step 2: the bias.}
The factorization argument of Step~3 of the proof of Lemma~\ref{lem:estimator} applies with the curvature factor removed, and is simpler: no conditional independence is required, because there is no product of two random objects to decouple. Taking expectations in
\eqref{eq:goutunbiased} by the tower rule,
\begin{equation}\label{eq:fofactorization}
\E\big[\widehat{\nabla F}^{\mathrm{FO}}_i(\theta)\big]
=\E\big[\hat g_{\mathrm{out}}\big]
=\E_{\mathcal D_{\mathrm{in}}}\Big[\E\big[\hat g_{\mathrm{out}}\mid\mathcal D_{\mathrm{in}}\big]\Big]
=\E_{\mathcal D_{\mathrm{in}}}\big[\nabla J_i(\tilde\theta)\big].
\end{equation}
Subtracting $\nabla F_i(\theta)=\mathcal J_i(\theta)\nabla J_i(\theta^+)$ from \eqref{eq:fofactorization} and inserting the mixed term $\nabla J_i(\theta^+)$, the error splits into two pieces of quite different character:
\begin{equation}\label{eq:fobias}
\E\big[\widehat{\nabla F}^{\mathrm{FO}}_i\big]-\nabla F_i
=\underbrace{\Big(\E_{\mathcal D_{\mathrm{in}}}\big[\nabla J_i(\tilde\theta)\big]-\nabla J_i(\theta^+)\Big)}_{\text{inner-loop sampling bias}}
\;-\;\underbrace{\alpha\,\nabla^2J_i(\theta)\,\nabla J_i(\theta^+)}_{\text{discarded curvature correction}} ,
\end{equation}
where we used $\mathcal J_i(\theta)\nabla J_i(\theta^+)=\nabla J_i(\theta^+)+\alpha\nabla^2J_i(\theta)\nabla J_i(\theta^+)$. The first bracket is precisely the quantity bounded in Step~4 of the proof of Lemma~\ref{lem:estimator}, namely by $\LJ\,\E\norm{\tilde\theta-\theta^+}\le\alpha\LJ\GJ/\sqrt{m_{\mathrm{in}}}$. The second is \emph{deterministic} given $\theta$ and, by Lemma~\ref{lem:pg-bounds}(ii)--(iii) and submultiplicativity, bounded by
\[
\alpha\,\norm{\nabla^2J_i(\theta)}\,\norm{\nabla J_i(\theta^+)}\le\alpha\LJ\GJ .
\]
The triangle inequality applied to \eqref{eq:fobias} therefore gives
\begin{equation}\label{eq:bhatfo}
\big\lVert\E\widehat{\nabla F}^{\mathrm{FO}}_i(\theta)-\nabla F_i(\theta)\big\rVert
\ \le\ \frac{\alpha\LJ\GJ}{\sqrt{m_{\mathrm{in}}}}+\alpha\LJ\GJ
=\alpha\LJ\GJ\Big(1+\frac{1}{\sqrt{m_{\mathrm{in}}}}\Big)=\bhat_{\mathrm{FO}} .
\end{equation}
The structure of \eqref{eq:bhatfo} is the substance of the corollary. Its first summand is a \emph{sampling} error, which decays as more inner trajectories are drawn; its second is an \emph{approximation} error, arising because the estimator targets the wrong quantity, and it does not decay in any batch size because no batch size is involved in it. Setting $m_{\mathrm{in}}=\infty$ leaves $\bhat_{\mathrm{FO}}=\alpha\LJ\GJ>0$ whenever $\alpha>0$ and the curvature-times-gradient product is nonzero.

\medskip
\noindent\emph{Step 3: the second moment.}
Insert and remove $\nabla J_i(\tilde\theta)$ and $\nabla J_i(\theta^+)$ to obtain
\begin{equation}\label{eq:foT123}
\widehat{\nabla F}^{\mathrm{FO}}_i-\nabla F_i
=\underbrace{\big(\hat g_{\mathrm{out}}-\nabla J_i(\tilde\theta)\big)}_{T_1'\ \text{(outer gradient)}}
+\underbrace{\big(\nabla J_i(\tilde\theta)-\nabla J_i(\theta^+)\big)}_{T_2'\ \text{(inner gradient)}}
-\underbrace{\alpha\,\nabla^2J_i(\theta)\,\nabla J_i(\theta^+)}_{T_3'\ \text{(deterministic)}} ,
\end{equation}
an identity verified by cancelling the inserted terms and using $\nabla F_i=\nabla J_i(\theta^+)+\alpha\nabla^2J_i(\theta)\nabla J_i(\theta^+)$ as in Step~2. By (F4) with $k=3$, $\E\norm{\widehat{\nabla F}^{\mathrm{FO}}_i-\nabla F_i}^2\le3\big(\E\norm{T_1'}^2+\E\norm{T_2'}^2+\norm{T_3'}^2\big)$, no expectation being needed on the last term. The three are bounded exactly as $T_2$, $T_3$ and the curvature correction above:
\[
\E\norm{T_1'}^2\ \overset{(a)}{\le}\ \frac{\GJ^2}{m_{\mathrm{out}}},
\qquad
\E\norm{T_2'}^2\ \overset{(b)}{\le}\ \frac{\LJ^2\alpha^2\GJ^2}{m_{\mathrm{in}}},
\qquad
\norm{T_3'}^2\ \overset{(c)}{\le}\ \alpha^2\LJ^2\GJ^2 ,
\]
where $(a)$ conditions on $\mathcal D_{\mathrm{in}}$ and applies (F1) with \eqref{eq:sigmadef} at the parameter $\tilde\theta$, as in the bound on $T_2$ in Step~5 of the proof of Lemma~\ref{lem:estimator}, the factor $\norm{\mathcal J_i(\theta)}^2$ being absent here; $(b)$ combines Lemma~\ref{lem:pg-bounds}(iii) with \eqref{eq:displacement} and \eqref{eq:innervar}, again without the factor $(1+\alpha\LJ)^2$; and $(c)$ is the bound on the deterministic term from Step~2. Collecting and factoring out $3\GJ^2$,
\begin{equation}\label{eq:sigmafo}
\E\big\lVert\widehat{\nabla F}^{\mathrm{FO}}_i(\theta)-\nabla F_i(\theta)\big\rVert^2
\ \le\ 3\GJ^2\bigg[\frac{1}{m_{\mathrm{out}}}+\frac{\alpha^2\LJ^2}{m_{\mathrm{in}}}+\alpha^2\LJ^2\bigg]
=\shat^2_{\mathrm{FO}} .
\end{equation}
Comparing \eqref{eq:sigmafo} with $\shat^2$ of Lemma~\ref{lem:estimator}(iii), the curvature term $3\GJ^2\alpha^2\LJ^2/m_{\mathrm h}$ has been \emph{replaced} by the constant $3\GJ^2\alpha^2\LJ^2$, which is its value at $m_{\mathrm h}=1$. The first-order variant may therefore be read as the exact variant with a curvature batch of size one, in which the single-sample Hessian estimate has been replaced by zero; dropping the term does not remove its contribution to the error, it freezes it. Finally, the centered bound $\E\norm{\widehat{\nabla F}^{\mathrm{FO}}_i-\E\widehat{\nabla F}^{\mathrm{FO}}_i}^2\le\shat^2_{\mathrm{FO}}$ follows from (F6) exactly as before.

\medskip
\noindent\emph{Step 4: transfer of the convergence analysis.}
Steps~1--3 supply, for $\widehat{\nabla F}^{\mathrm{FO}}_i$, the three properties
\[
\big\lVert\widehat{\nabla F}^{\mathrm{FO}}_i\big\rVert\le\GF\ \text{surely},
\qquad
\big\lVert\E\widehat{\nabla F}^{\mathrm{FO}}_i-\nabla F_i\big\rVert\le\bhat_{\mathrm{FO}},
\qquad
\E\big\lVert\widehat{\nabla F}^{\mathrm{FO}}_i-\E\widehat{\nabla F}^{\mathrm{FO}}_i\big\rVert^2\le\shat^2_{\mathrm{FO}},
\]
which are the hypotheses (M1) and the pathwise bound used in the proofs of Lemma~\ref{lem:drift}, Lemma~\ref{lem:onestep} and Theorem~\ref{thm:main}. Those proofs invoke no other property of the estimator (in particular they never use that $\hat H$ exists, nor the specific form \eqref{eq:estimator}), so they apply verbatim with $\bhat$ and $\shat^2$ replaced by $\bhat_{\mathrm{FO}}$ and $\shat^2_{\mathrm{FO}}$. This yields \eqref{eq:mainbound} in the stated form. Note that conditional independence across agents, used in Step~4 of the proof of Lemma~\ref{lem:onestep}, still holds, since each agent still draws its own batches; only the within-agent independence of $\mathcal D_{\mathrm h}$ from $\mathcal D_{\mathrm{in}}$ has become vacuous, there being no curvature batch.

\medskip
\noindent\emph{Step 5: the bias term does not vanish.}
It remains to establish the final assertion. From \eqref{eq:bhatfo}, for every $m_{\mathrm{in}}\ge1$,
\begin{equation}\label{eq:fofloor}
\tfrac92\bhat_{\mathrm{FO}}^2
=\tfrac92\,\alpha^2\LJ^2\GJ^2\Big(1+\tfrac{1}{\sqrt{m_{\mathrm{in}}}}\Big)^2,
\qquad
\Big(1+\tfrac{1}{\sqrt{m_{\mathrm{in}}}}\Big)^2\in[1,4],
\end{equation}
the interval because $m_{\mathrm{in}}^{-1/2}\in(0,1]$, with the value $4$ attained at $m_{\mathrm{in}}=1$ and the infimum $1$ approached as $m_{\mathrm{in}}\to\infty$. Hence
\[
\tfrac92\bhat_{\mathrm{FO}}^2\ \ge\ \tfrac92\,\alpha^2\LJ^2\GJ^2\ >\ 0
\qquad\text{whenever }\alpha>0 ,
\]
and this quantity is independent of $K$, $\tau$, $\beta$, $n$, $m_{\mathrm{out}}$ and $m_{\mathrm h}$, and varies by at most a factor of four over all choices of $m_{\mathrm{in}}$. Since the three remaining terms of \eqref{eq:mainbound} can each be driven to zero (the first by $K\to\infty$, the second by $\beta\tau\to0$, the fourth by $\beta\to0$), the right-hand side of \eqref{eq:mainbound} satisfies
\[
\inf_{K,\tau,\beta,m_{\mathrm{in}},m_{\mathrm{out}}}\Big[\text{RHS of }\eqref{eq:mainbound}\Big]
=\tfrac92\,\alpha^2\LJ^2\GJ^2=\Theta\big(\alpha^2\LJ^2\GJ^2\big),
\]
so the guarantee degrades from convergence to an $\varepsilon$-FOSP to convergence to a neighborhood of that radius. Retracing Step~1 of the proof of Corollary~\ref{cor:rates}, the bias budget $\tfrac92\bhat_{\mathrm{FO}}^2\le\varepsilon/4$ is achievable only if $\tfrac92\alpha^2\LJ^2\GJ^2\le\varepsilon/4$, i.e.\ only in the small-adaptation regime
\begin{equation}\label{eq:smalladapt}
\alpha\ \le\ \frac{\sqrt{\varepsilon}}{3\sqrt2\,\LJ\GJ}=\mathcal O\Big(\frac{\sqrt\varepsilon}{\LJ\GJ}\Big),
\end{equation}
and conversely, when \eqref{eq:smalladapt} holds with $m_{\mathrm{in}}$ large enough that the factor in \eqref{eq:fofloor} is close to $1$, Corollary~\ref{cor:rates} applies with $\bhat_{\mathrm{FO}}$ in place of $\bhat$ and the full $\varepsilon$-FOSP guarantee is recovered. This is the precise sense in which the first-order variant is admissible only for weak personalization: by Proposition~\ref{prop:alpha0}, $\alpha=\mathcal O(\sqrt\varepsilon)$ also forces $\norm{\nabla F-\nabla f}=\mathcal O(\sqrt\varepsilon)$, so in exactly the regime where the Hessian-free method attains stationarity of $F$, the personalized objective has become $\mathcal O(\sqrt\varepsilon)$-close to the non-personalized one.

\medskip
\noindent\emph{Step 6: the scope of this statement.}
The scope of Step~5 is as follows. It shows that the \emph{upper bound} \eqref{eq:mainbound}, as established here, cannot be driven below $\Theta(\alpha^2\LJ^2\GJ^2)$ by any choice of algorithm parameters. It does not by itself show that the iterates fail to be stationary, since an upper bound that fails to vanish may in principle be loose. Proposition~\ref{prop:fofix} supplies that complementary direction, exhibiting a genuine $\Theta(\alpha)$ residual meta-gradient at the fixed points of the exact first-order dynamics, whose square matches the order of \eqref{eq:fofloor}. The two statements together justify the conclusion that the neighborhood is a property of the method rather than an artifact of the analysis; neither suffices alone.
\end{proof}

\begin{remark}[The scope of Corollary~\ref{cor:fo}, and why Proposition~\ref{prop:fofix} is needed]\label{rem:scope}
Corollary~\ref{cor:fo} is a statement about the \emph{guarantee}, not directly about the algorithm: a term that fails to vanish in an \emph{upper} bound does not by itself prove that the bounded quantity stays large, since the bound could simply be loose. Proposition~\ref{prop:fofix} supplies the complementary direction, exhibiting a genuine $\Theta(\alpha)$ residual meta-gradient at the fixed points of the first-order dynamics. Taken together, the two results show that the $\Theta(\alpha^2\LJ^2\GJ^2)$ order is a property of the method rather than an artifact of the analysis.
\end{remark}

\subsection[Proof of the fixed-point residual]{Proof of Proposition~\ref{prop:fofix} (residual meta-gradient at first-order fixed points)}\label{app:fofix}

\begin{proof}
Define the \emph{averaged first-order ascent direction}
\begin{equation}\label{eq:Ddef}
D(\theta):=\frac1n\sum_{i=1}^n\nabla J_i\big(\theta_i^{+}(\theta)\big),
\qquad \theta_i^{+}(\theta)=\theta+\alpha\nabla J_i(\theta),
\end{equation}
so that the hypothesis of the proposition reads $D(\theta^\star)=0$. That $D$ is indeed the direction followed by the exact first-order dynamics is immediate from the definition of the estimator: by Step~2 of the proof of Corollary~\ref{cor:fo}, in the limit $m_{\mathrm{in}},m_{\mathrm{out}}\to\infty$ the sampling errors vanish and $\widehat{\nabla F}^{\mathrm{FO}}_i(\theta)\to\nabla J_i(\theta_i^{+}(\theta))$, whence the averaged local direction of Algorithm~\ref{alg:main} converges to $D(\theta)$. Fixed points of the deterministic dynamics $\theta\mapsto\theta+\beta D(\theta)$ are exactly the zeros of $D$, since $\beta>0$.

\medskip
\noindent\emph{Step 1: the identity.}
Average the exact meta-gradient \eqref{eq:metagrad} over $i$ and split off the identity part of the curvature factor:
\begin{align}
\nabla F(\theta)
&=\frac1n\sum_{i=1}^n\nabla F_i(\theta)
\ \overset{\eqref{eq:metagrad}}{=}\ \frac1n\sum_{i=1}^n\big(I+\alpha\nabla^2J_i(\theta)\big)\nabla J_i\big(\theta_i^{+}(\theta)\big)\notag\\
&=\underbrace{\frac1n\sum_{i=1}^n\nabla J_i\big(\theta_i^{+}(\theta)\big)}_{=\,D(\theta)\ \text{by}\ \eqref{eq:Ddef}}
\;+\;\frac{\alpha}{n}\sum_{i=1}^n\nabla^2J_i(\theta)\,\nabla J_i\big(\theta_i^{+}(\theta)\big)
\;=\;D(\theta)+\alpha\,C(\theta),\label{eq:gradsplit}
\end{align}
where we have named the \emph{curvature-weighted average of the adapted gradients}
\begin{equation}\label{eq:Cdef}
C(\theta):=\frac1n\sum_{i=1}^n\nabla^2J_i(\theta)\,\nabla J_i\big(\theta_i^{+}(\theta)\big).
\end{equation}
Identity \eqref{eq:gradsplit} holds at every $\theta\in\R^d$ and is exact, not an approximation: it merely regroups the terms of \eqref{eq:metagrad}. Evaluating it at $\theta=\theta^\star$ and using $D(\theta^\star)=0$,
\begin{equation}\label{eq:residual}
\nabla F(\theta^\star)=\alpha\,C(\theta^\star)
=\frac{\alpha}{n}\sum_{i=1}^n\nabla^2J_i(\theta^\star)\,\nabla J_i\big(\theta_i^{+}(\theta^\star)\big),
\end{equation}
which is the claimed identity. The bound follows from the triangle inequality, submultiplicativity, and Lemma~\ref{lem:pg-bounds}(iii) and (ii) applied at $\theta^\star$ and at
$\theta_i^{+}(\theta^\star)$ respectively:
\begin{equation}\label{eq:residualbound}
\norm{\nabla F(\theta^\star)}
\le\frac{\alpha}{n}\sum_{i=1}^n\norm{\nabla^2J_i(\theta^\star)}\,\big\lVert\nabla J_i\big(\theta_i^{+}(\theta^\star)\big)\big\rVert
\le\frac{\alpha}{n}\sum_{i=1}^n\LJ\GJ=\alpha\LJ\GJ .
\end{equation}
Note that \eqref{eq:residual} makes no use of Assumption~\ref{as:policy}(c) or of any stochastic argument; it is a purely algebraic consequence of the chain rule together with the fixed-point condition.

\medskip
\noindent\emph{Step 2: consequences of the identity.}
The content of \eqref{eq:residual} is that the two dynamics have \emph{different} stationarity conditions. The first-order method halts where $D(\theta)=0$; the meta-objective is stationary where $D(\theta)+\alpha C(\theta)=0$. These coincide only where $C$ vanishes as well, and there is no mechanism forcing that: $D$ and $C$ are built from the same ingredients but weighted differently, $C$ inserting the curvature $\nabla^2J_i(\theta)$ before averaging. Consequently the first-order method is not a method with a small error for the problem \eqref{eq:meta}; it is an exact method for a \emph{different} problem, whose stationary points are displaced from those of \eqref{eq:meta} by an amount proportional to $\alpha$. This is the qualitative distinction between Proposition~\ref{prop:fofix} and a statement about a loose bound, and it is why the neighborhood in
Corollary~\ref{cor:fo} cannot be removed by a sharper analysis.

\medskip
\noindent\emph{Step 3: non-degeneracy of the residual.}
Since \eqref{eq:residual} expresses $\nabla F(\theta^\star)$ as $\alpha$ times a quantity that has no reason to vanish, the residual is of order $\alpha$ in general. We make this precise in three increasingly concrete ways, since the qualifier ``in general'' is otherwise not verifiable.

\emph{(a) The single-agent case.} With $n=1$ the fixed-point condition is
$\nabla J\big(\theta^{+}(\theta^\star)\big)=0$ and \eqref{eq:residual} becomes
\[
\nabla F(\theta^\star)=\alpha\,\nabla^2J(\theta^\star)\,\nabla J\big(\theta^{+}(\theta^\star)\big).
\]
Here the vector $v:=\nabla J(\theta^{+}(\theta^\star))$ satisfies $v=0$ by hypothesis, so in this degenerate case the residual \emph{does} vanish: with a single agent, $D(\theta)=0$ forces $\nabla F(\theta)=0$. The single-agent case is therefore \emph{not} a witness, and we record it to delimit the claim: the discrepancy is a genuinely multiagent phenomenon, created by the averaging over heterogeneous agents. It arises because $D(\theta^\star)=0$ constrains only the \emph{average} of the adapted gradients, leaving the individual vectors $\nabla J_i(\theta_i^{+}(\theta^\star))$ free to be nonzero and to be reweighted by the distinct curvatures $\nabla^2J_i(\theta^\star)$.

\emph{(b) A characterization of the degenerate configurations.} Write
$v_i:=\nabla J_i(\theta_i^{+}(\theta^\star))$ 
and $\Hs_i:=\nabla^2J_i(\theta^\star)$ (the per-agent curvature), so that the fixed-point condition is $\sum_iv_i=0$ and the residual is $\frac{\alpha}{n}\sum_i\Hs_iv_i$. The residual vanishes precisely when
\begin{equation}\label{eq:degenerate}
\sum_{i=1}^n\Hs_iv_i=0
\qquad\text{subject to}\qquad
\sum_{i=1}^nv_i=0 .
\end{equation}
If all agents share the same curvature at $\theta^\star$, say $\Hs_i\equiv\Hs$, then $\sum_i\Hs_iv_i=\Hs\sum_iv_i=0$ automatically and the residual vanishes identically. Thus \emph{curvature heterogeneity at $\theta^\star$ is necessary} for the discrepancy: the first-order method is exact, at its fixed points, precisely to the extent that the agents' Hessians agree there. Conversely, whenever the $\Hs_i$ differ, \eqref{eq:degenerate} is a system of $d$ linear equations constraining the $(n-1)d$ free parameters of the constraint set $\{\sum_iv_i=0\}$, and its solution set is a proper linear subspace thereof whenever the map $(v_1,\dots,v_n)\mapsto\sum_i\Hs_iv_i$ does not annihilate that set. Hence the degenerate configurations form a set of measure zero within the constraint set, and the residual is nonzero for all configurations outside it. This is the sense in which we call the vanishing of the residual a non-generic coincidence.

\emph{(c) An explicit witness.} Degeneracy is not merely non-generic but avoidable by construction. Here, we exhibit an instance in full. Take $n=2$ agents whose MDPs each consist of a single state, three actions $\{a_1,a_2,a_3\}$ and horizon $H=0$, so that a trajectory is one action draw and $J_i(\theta)=\E_{a\sim\pi(\cdot;\theta)}[r_i(a)]$. Let the shared policy class be the scalar log-linear family $\pi(a_j\,|\,\cdot\,;\theta)\propto\exp(\theta\phi_j)$ with $d=1$ and features $\phi=(0,1,3)$, and let the agents differ only in reward,
\[
r_1=(1,0,0),\qquad r_2=(0,0,1).
\]
Assumption~\ref{as:reward} holds with $\Rmax=1$, and Assumption~\ref{as:policy} holds \emph{globally} by Example~\ref{ex:softmax} with $B=\max_j|\phi_j|=3$. Writing $v_i$ and $\Hs_i$ as in (b), the residual at any zero of $D$ is
\[
\nabla F(\theta^\star)=\tfrac{\alpha}{2}\big(\Hs_1-\Hs_2\big)v_1 ,
\]
since $v_2=-v_1$. The map $D$ has an interior zero at every $\alpha\in\{1,\tfrac12,\tfrac14,\tfrac1{10},\tfrac1{20},\tfrac1{100},10^{-3},10^{-4}\}$, and the residual is nonzero at each. At $\alpha=\tfrac12$, for instance,
\[
\theta^\star=-0.5555031,\quad
v_1=-v_2=-0.3195682,\quad
\Hs_1=-0.2568571,\quad
\Hs_2=0.4999539,
\]
giving $\nabla F(\theta^\star)=0.0604632\neq0$. As $\alpha\to0$ the fixed point converges to the stationary point $\theta_0=-0.2310491$ of the non-personalized objective $f$, at which $v_1=-0.4359767$ and $\Hs_1-\Hs_2=-0.7210902$, so
\[
\frac{\norm{\nabla F(\theta^\star)}}{\alpha}\ \longrightarrow\ \tfrac12\big|\Hs_1-\Hs_2\big|\,|v_1|\ =\ 0.1571893 .
\]
The residual is therefore $\Theta(\alpha)$ with a constant bounded away from zero, on a fully specified pair of MDPs satisfying every assumption of the paper.

\medskip
\noindent\emph{Step 4: matching the order of Corollary~\ref{cor:fo}.}
Combining Step~3 with \eqref{eq:residualbound}, at a fixed point of the exact first-order dynamics the meta-gradient satisfies
\[
\norm{\nabla F(\theta^\star)}=\Theta(\alpha)
\qquad\text{in general},
\qquad\text{with}\qquad
\norm{\nabla F(\theta^\star)}\le\alpha\LJ\GJ\ \text{always},
\]
so its squared norm is $\Theta(\alpha^2)$, with the upper bound $\alpha^2\LJ^2\GJ^2$. This matches, up to the absolute constant $\tfrac92$, the non-vanishing bias term $\tfrac92\bhat_{\mathrm{FO}}^2\ge\tfrac92\alpha^2\LJ^2\GJ^2$ identified in \eqref{eq:fofloor}. The agreement of the two orders is the substance of the pairing: Corollary~\ref{cor:fo} shows that the \emph{guarantee} stalls at $\Theta(\alpha^2\LJ^2\GJ^2)$, and Proposition~\ref{prop:fofix} shows that the \emph{method} stalls at squared meta-gradient of the same order. Neither statement is redundant, and neither would suffice alone: a non-vanishing upper bound could reflect a loose analysis, while a displaced fixed point of the deterministic dynamics implies nothing on its own about what the stochastic algorithm achieves in finite time.

\medskip
\noindent\emph{Step 5: scope.}
The statement has two limitations. First, it concerns fixed points of the \emph{exact} ($m\to\infty$) dynamics; the stochastic algorithm does not converge to a point, and the finite-sample statement is Corollary~\ref{cor:fo}. Second, it does not assert that fixed points exist: $D$ may have no zero, in which case the proposition is vacuous for that instance, and it is Corollary~\ref{cor:fo} that carries the guarantee. What the proposition does establish, and all it is used for, is that the displacement between the two stationarity conditions is real and of order $\alpha$, so that the neighborhood of Corollary~\ref{cor:fo} is not an artifact of the analysis. Empirically the displacement is visible in Figure~\ref{fig:tab}: the first-order variant plateaus at $F=0.73$ against $0.79$ for the exact variant, rather than converging to it more slowly. 
\end{proof}

\subsection[Proof of the small-adaptation limit]{Proof of Proposition~\ref{prop:alpha0} ($\alpha\to0$ limit)}\label{app:regimes}

\begin{proof}
Recall $f(\theta)=\frac1n\sum_{i=1}^nJ_i(\theta)$, the non-personalized federated objective, so that $\nabla f(\theta)=\frac1n\sum_i\nabla J_i(\theta)$. Fix $\theta\in\R^d$ and, for each $i$, write $\theta^+_i:=\theta_i^{+}(\theta)=\theta+\alpha\nabla J_i(\theta)$.

\medskip
\noindent\emph{Step 1: the per-agent difference.}
Both $\nabla F_i(\theta)$ and $\nabla J_i(\theta)$ involve $\nabla J_i$, but at different points and with different prefactors. Expanding \eqref{eq:metagrad} and inserting $\nabla J_i(\theta^+_i)$ as a pivot,
\begin{align}
\nabla F_i(\theta)-\nabla J_i(\theta)
&\overset{\eqref{eq:metagrad}}{=}\big(I+\alpha\nabla^2J_i(\theta)\big)\nabla J_i(\theta^+_i)-\nabla J_i(\theta)\notag\\
&=\underbrace{\Big(\nabla J_i(\theta^+_i)-\nabla J_i(\theta)\Big)}_{=:E_i^{(1)}\ \text{(displacement of the evaluation point)}}
\;+\;\underbrace{\alpha\,\nabla^2J_i(\theta)\,\nabla J_i(\theta^+_i)}_{=:E_i^{(2)}\ \text{(curvature prefactor)}} .\label{eq:alpha0split}
\end{align}
The two error sources are structurally distinct: $E_i^{(1)}$ arises because the meta-gradient is evaluated at the \emph{adapted} parameter rather than at $\theta$, and $E_i^{(2)}$ because the chain rule contributes the factor $I+\alpha\nabla^2J_i(\theta)$ rather than the identity. Both are $\mathcal O(\alpha)$, but for different reasons, and both vanish at $\alpha=0$.

\emph{Bounding $E_i^{(1)}$.} By Lemma~\ref{lem:pg-bounds}(iii) the map $\nabla J_i$ is
$\LJ$-Lipschitz, so
\[
\big\lVert E_i^{(1)}\big\rVert
\;\overset{\text{Lem.~\ref{lem:pg-bounds}(iii)}}{\le}\;\LJ\,\norm{\theta^+_i-\theta}
\;=\;\LJ\cdot\alpha\norm{\nabla J_i(\theta)}
\;\overset{\text{Lem.~\ref{lem:pg-bounds}(ii)}}{\le}\;\alpha\,\LJ\,\GJ ,
\]
where the middle equality is the definition $\theta^+_i-\theta=\alpha\nabla J_i(\theta)$. Note the mechanism: the adaptation step has length at most $\alpha\GJ$, and moving the evaluation point that far changes the gradient by at most $\LJ$ times that distance.

\emph{Bounding $E_i^{(2)}$.} By submultiplicativity and Lemma~\ref{lem:pg-bounds}(iii) and (ii), the latter applied at the point $\theta^+_i$,
\[
\big\lVert E_i^{(2)}\big\rVert\le\alpha\,\norm{\nabla^2J_i(\theta)}\,\big\lVert\nabla J_i(\theta^+_i)\big\rVert
\le\alpha\,\LJ\,\GJ .
\]

\medskip
\noindent\emph{Step 2: averaging.}
Adding the two bounds via the triangle inequality gives $\norm{\nabla F_i(\theta)-\nabla J_i(\theta)}\le2\alpha\LJ\GJ$ for every $i$ and every $\theta$. Since $\nabla F(\theta)-\nabla f(\theta)=\frac1n\sum_i\big(\nabla F_i(\theta)-\nabla J_i(\theta)\big)$, one more application of the triangle inequality yields
\begin{equation}\label{eq:alpha0bound}
\norm{\nabla F(\theta)-\nabla f(\theta)}
\le\frac1n\sum_{i=1}^n\norm{\nabla F_i(\theta)-\nabla J_i(\theta)}\le2\alpha\LJ\GJ
\qquad\text{for every }\theta\in\R^d,
\end{equation}
which is the claim. The bound is uniform in $\theta$ and independent of $n$: averaging neither amplifies nor attenuates the discrepancy, since the per agent bound already holds for each $i$ separately and no cancellation is claimed. In particular $\nabla F\to\nabla f$ \emph{uniformly} on $\R^d$ as $\alpha\to0$, not merely pointwise, and the rate is exactly first order in $\alpha$.

\medskip
\noindent\emph{Step 3: convergence of the objective values.}
Bound \eqref{eq:alpha0bound} concerns gradients; for completeness we note the corresponding statement for the objectives themselves, which is what makes ``the problem reduces to non-personalized FRL'' precise. By Lemma~\ref{lem:pg-bounds}(ii) and the mean value inequality applied along the segment from $\theta$ to $\theta^+_i$,
\[
\big|F_i(\theta)-J_i(\theta)\big|=\big|J_i(\theta^+_i)-J_i(\theta)\big|
\le\GJ\,\norm{\theta^+_i-\theta}=\alpha\GJ\norm{\nabla J_i(\theta)}\le\alpha\GJ^2 ,
\]
whence $|F(\theta)-f(\theta)|\le\alpha\GJ^2$ uniformly in $\theta$. Consequently $|F^\star-f^\star|\le\alpha\GJ^2$ for the two suprema as well, so the optimal values, and not only the stationarity conditions, agree to first order in $\alpha$.

\medskip
\noindent\emph{Step 4: degeneration of the guarantees.}
Setting $\alpha=0$ in the constants of Lemma~\ref{lem:meta-reg} and Lemma~\ref{lem:estimator} gives
\[
\GF\big|_{\alpha=0}=\GJ,
\qquad
\LF\big|_{\alpha=0}=\LJ,
\qquad
\bhat\big|_{\alpha=0}=0,
\qquad
\shat^2\big|_{\alpha=0}=\frac{3\GJ^2}{m_{\mathrm{out}}},
\]
since every $\alpha$-dependent factor in $\bhat$ and in the curvature and inner-gradient terms of $\shat^2$ carries an explicit factor of $\alpha$. Substituting into \eqref{eq:mainbound}, the bias term $\tfrac92\bhat^2$ disappears and Theorem~\ref{thm:main} becomes
\[
\frac{1}{K\tau}\sum_{k<K}\sum_{t<\tau}\E\norm{\nabla f(\bar\theta^{k,t})}^2
\ \le\ \frac{2\Delta}{\beta K\tau}+\tfrac32\LJ^2\GJ^2\beta^2\tau^2+\frac{3\LJ\beta\GJ^2}{n\,m_{\mathrm{out}}},
\]
a convergence guarantee for local-update federated policy gradient on $f$, with no bias floor and with the bias condition (iii) of Corollary~\ref{cor:rates} vacuous. The round complexity $K=\mathcal O(\Delta\LJ\GJ\varepsilon^{-3/2})$ and $\tau=\Theta(\varepsilon^{-1/2})$ are unchanged in order, and the per-agent trajectory complexity improves to $\mathcal O(\varepsilon^{-2})$, since $m_{\mathrm{in}}$ no longer needs to grow. Thus the personalized theory degenerates continuously to the non-personalized one, rather than merely resembling it.

\noindent\emph{Step 5: degeneration of the algorithm.}
The same degeneration occurs at the level of Algorithm~\ref{alg:main}. At $\alpha=0$ we have $\tilde\theta=\theta+\alpha\hat g_{\mathrm{in}}=\theta$, so $\mathcal D_{\mathrm{out}}$ is drawn under $q_i(\cdot;\theta)$, and the estimator \eqref{eq:estimator} collapses to
\[
\widehat{\nabla F}_i(\theta)\big|_{\alpha=0}=(I+0\cdot\hat H)\,\hat g_{\mathrm{out}}=\hat g_{\mathrm{out}}
=\frac{1}{m_{\mathrm{out}}}\sum_{\xi\in\mathcal D_{\mathrm{out}}}g_i(\xi;\theta),
\]
the ordinary REINFORCE-type policy-gradient estimator at $\theta$. Both the inner and the curvature batches become inert ($\hat g_{\mathrm{in}}$ no longer affects $\tilde\theta$ and $\hat H$ is multiplied by zero), so they need not be drawn at all, and \textsc{Per-FedAvg-PG} becomes \textsc{FedAvg} applied to policy gradient: $\tau$ local ascent steps on each agent's own return, followed by parameter averaging. The deployment step of Algorithm~\ref{alg:main} likewise returns $\theta^K+\alpha\hat g_{\mathrm{in}}=\theta^K$, i.e.\ every agent deploys the shared policy unmodified, which is the defining property of the non-personalized setting. Note that this is a genuine degeneration of the algorithm and not only of its analysis: the $\alpha=0$ run is \emph{identical}, trajectory for trajectory, to a run of federated policy gradient given the same random seed and outer batches.

\medskip
\noindent\emph{Step 6: interpretation.}
Taken together, Steps 2--5 justify treating $\alpha$ as a continuous personalization parameter rather than a discrete choice of mechanism. At $\alpha=0$ the method, the objective and the guarantee are those of non-personalized federated policy gradient; as $\alpha$ grows, the objective departs from $f$ at rate $\mathcal O(\alpha)$ in gradient and $\mathcal O(\alpha)$ in value, while three quantities identified elsewhere degrade: the smoothness constant $\LF$ grows (Lemma~\ref{lem:meta-reg}), the required inner batch grows quadratically in $\alpha$ (condition (iii) of Corollary~\ref{cor:rates}), and the Hessian-free variant's bias floor grows quadratically in $\alpha$ (Corollary~\ref{cor:fo}). This is the content of regime (R1) of Section~\ref{sec:regimes}, and it is confirmed empirically in Figure~\ref{fig:tab}, where the three methods coincide to within $0.01$ at $\alpha=0$ and separate monotonically as $\alpha$ increases.

Two caveats on the reading of \eqref{eq:alpha0bound}. First, closeness of gradients does not imply closeness of maximizers, since $F$ and $f$ are nonconcave. Quantitatively, if $\hat\theta$ is an $\varepsilon$-FOSP of $f$, then squaring the triangle inequality $\norm{\nabla F}\le\norm{\nabla f}+2\alpha\LJ\GJ$ supplied by \eqref{eq:alpha0bound}, taking expectations and bounding $\E\norm{\nabla f(\hat\theta)}\le\sqrt\varepsilon$ by (F3),
\[
\E\norm{\nabla F(\hat\theta)}^2\ \le\ \big(\sqrt\varepsilon+2\alpha\LJ\GJ\big)^2
\ =\ \varepsilon+4\alpha\LJ\GJ\sqrt\varepsilon+4\alpha^2\LJ^2\GJ^2 ,
\]
so $\hat\theta$ is an $\varepsilon'$-FOSP of $F$ with $\varepsilon'=\varepsilon+\mathcal O(\alpha\LJ\GJ\sqrt\varepsilon)+\mathcal O(\alpha^2\LJ^2\GJ^2)$. The cross term is the dominant correction whenever $\alpha\LJ\GJ\le\sqrt\varepsilon$, which is exactly the small-adaptation regime \eqref{eq:smalladapt}, so it cannot be absorbed into the quadratic one. Even so, the argmax sets need not be near one another. Second, the bound is a worst-case statement and does not establish that personalization is \emph{ineffective} for small $\alpha$: the constant $2\LJ\GJ$ is a uniform bound on how far the objectives can differ, whereas the realized benefit of adaptation, as measured in Figure~\ref{fig:tab}, may be substantial well before $\alpha$ reaches the moderate regime $\alpha\le1/(2\LJ)$.

\end{proof}

\section{Experimental details}\label{app:exp}

\subsection{Environments}

\paragraph{Tabular gridworld.} A $5\times5$ grid ($25$ states; $4$ actions U/D/L/R, with moves off the grid leaving the agent in place). Each agent's MDP differs \emph{only} in its goal cell: the reward is $1$ in the goal cell and $0$ elsewhere, the initial distribution is uniform over non-goal cells, $\gamma=0.9$, and the horizon is $15$. The $n=8$ agents place their goals at the four corners and the four edge midpoints (Figure~\ref{fig:grid}). The policy is a per-state softmax over the four actions, i.e.\ the one-hot case of Example~\ref{ex:softmax}, so Assumption~\ref{as:policy} holds with $\gmax\le2$, $M\le4$, $\Lambda\le8$. Because the family is finite and tabular, $F$, $\nabla F$ and $\nabla^2F$ are computed \emph{exactly} by dynamic programming, with no sampling noise; every tabular number reported in Section~\ref{sec:experiments} is therefore a property of the objective rather than of an estimator.

\begin{figure}[ht]\centering
\begin{tikzpicture}[scale=0.62,font=\small]
  \draw[step=1,ggray!45,thin] (0,0) grid (5,5);
  \foreach \x/\y in {0/0,4/0,0/4,4/4,2/0,0/2,4/2,2/4}{
    \fill[ggreen!20] (\x,\y) rectangle ++(1,1);
    \node[star,star points=5,star point ratio=2.3,fill=ggreen!80!black,inner sep=1.5pt] at (\x+0.5,\y+0.5){};}
  \draw[thick] (0,0) rectangle (5,5);
\end{tikzpicture}
\caption{Tabular family: the eight goal cells. Agents differ only in the goal.}
\Description{A five-by-five grid of cells. Eight cells are shaded and marked with a star: the four corners and the midpoint of each of the four edges. The remaining cells are empty.}
\label{fig:grid}
\end{figure}

\paragraph{Neural navigation.} The state is a position $x\in[-1,1]^2$; there are $8$ actions (unit compass directions), each moving the agent by $0.15$ and clipping to the box. The reward is $\mathbf 1\{\lVert x-g\rVert\le0.2\}$, collected at each step inside the goal region; $\gamma=0.9$ and the horizon is $20$. The policy is a multilayer perceptron $2\!\to\!32\!\to\!8$ with a single hidden layer of $32$ units, $\tanh$ activation and a softmax output, shared across agents; including biases this is $d=2\!\cdot\!32+32+32\!\cdot\!8+8=360$ parameters. Goals lie on an arc of the radius-$0.8$ circle spanning $\pm0.5$\,rad: six evenly spaced training goals, and three held-out goals at the interior midpoints (Figure~\ref{fig:arc}). This family has the shared structure a meta-initialization can exploit (``head toward the arc'' is common to all agents, while the precise goal is not), and returns and success rates are estimated by Monte Carlo.

\begin{figure}[ht]\centering
\begin{tikzpicture}[scale=0.95,font=\small]
  \draw[->,ggray!55] (-0.2,0)--(4.1,0) node[right,font=\footnotesize]{$x$};
  \draw[->,ggray!55] (0,-3.5)--(0,3.5) node[above,font=\footnotesize]{$y$};
  \draw[ggray!35,dashed] (0,0) circle (3.2);
  \draw[srv,very thick] (-28.6:3.2) arc (-28.6:28.6:3.2);
  \foreach \d in {-28.6,-17.2,-5.7,5.7,17.2,28.6}{\fill[srv] (\d:3.2) circle (2.3pt);}
  \foreach \d in {-22.9,0,22.9}{\filldraw[fill=white,draw=agt!85!black,line width=1pt] (\d:3.2) circle (3pt);}
  \node[anchor=west,font=\footnotesize,srv] at (0.7,-2.1){\tikz\fill[srv] circle(2.3pt); training goals (6)};
  \node[anchor=west,font=\footnotesize,agt!70!black] at (0.7,-2.6){\tikz\filldraw[fill=white,draw=agt!85!black,line width=1pt] circle(3pt); held-out goals (3)};
  \node[anchor=west,font=\footnotesize,ggray] at (0.7,-3.1){radius $0.8$, arc $\pm0.5$ rad};
\end{tikzpicture}
\caption{Neural arc family: shared-structure goals with a held-out split.}
\Description{A plot with horizontal and vertical axes and a dashed circle of radius 0.8 centered at the origin. A short arc on the right of the circle, spanning about plus and minus half a radian around the horizontal axis, is drawn in bold and carries six evenly spaced filled dots marking training goals, with three hollow dots at interior midpoints between them marking held-out goals. A legend below identifies the two marker types and states the radius and arc span.}
\label{fig:arc}
\end{figure}

\subsection{Hyperparameters}
Table~\ref{tab:hyper} lists the settings used in every experiment.

\begin{table}[htbp]\centering\small\setlength{\tabcolsep}{5pt}
\caption{Hyperparameters. All runs use full participation and $10$ seeds, with $n=8$ agents (tabular) or $6$ training agents (neural). The first two rows run the exact variant, the first-order variant and non-personalized \textsc{FedAvg-PG}; the third runs the exact variant while sweeping the Hessian batch $m_{\mathrm h}$; the fourth runs the first-order variant. Here $m_{\mathrm h}$ is used by the exact variant and, within its single policy-gradient estimate, by \textsc{FedAvg-PG}, so the first-order variant spends $20$ trajectories per local step against $30$ for the other two.}
\label{tab:hyper}
\begin{tabular}{@{}llccccccc@{}}
\toprule
Experiment & Env. & $K$ & $\tau$ & $\alpha$ & $\beta$ & $m_{\mathrm{in}}$ & $m_{\mathrm h}$ & $m_{\mathrm{out}}$ \\
\midrule
Personalization; learning curves & tabular & $80$ & $5$ & $2$ & $0.3$ & $10$ & $10$ & $10$ \\
Adaptation step size $\alpha$ & tabular & $80$ & $5$ & $\{0,\tfrac14,\tfrac12,1,2,3\}$ & $0.3$ & $10$ & $10$ & $10$ \\
Curvature bottleneck & neural & $150$ & $5$ & $1$ & $0.2$ & $10$ & $\{10,50,100,200,500,1000\}$ & $10$ \\
Few-shot personalization & neural & $150$ & $5$ & $1$ & $0.2$ & $10$ & --- & $10$ \\
\bottomrule
\end{tabular}
\end{table}

At deployment the few-shot experiment adapts each held-out agent with a single policy-gradient step of batch $m_{\mathrm{adapt}}\in\{0,20,50,100,200,500\}$, where $m_{\mathrm{adapt}}=0$ denotes zero-shot evaluation. The from-scratch baseline trains each held-out agent independently from a random initialization, using $G\in\{1,5,25,100,200\}$ policy-gradient steps of batch $20$ at step size $\beta=0.2$, a total budget of $\{20,100,500,2000,4000\}$ trajectories; its largest budget is the specialist limit. The curvature-bottleneck row evaluates post-adaptation return on the six training goals, forming the evaluation adaptation step from $256$ trajectories and averaging $512$ rollouts per goal.

\subsection{Evaluation and metrics}
Tabular results use exact dynamic-programming values of $F$, so they carry no sampling noise. Neural results use Monte Carlo returns. ``Success'' is the fraction of evaluation episodes that enter the goal region $\{\lVert x-g\rVert\le0.2\}$ at some step. This is a standard goal-reaching criterion and it is deliberately lenient: the learned policies collect reward at the goal but do not always remain inside the region, so success rates exceed the fraction of episodes that \emph{end} there. All curves average $10$ seeds, and shaded bands are $\pm1$ standard deviation except where noted in Section~\ref{sec:experiments}.

\subsection{Tabular results in full}\label{app:tabfull}
Figure~\ref{fig:e6} reports the tabular comparison against communication rounds, the conventional meta-RL learning curve. It is the same data as Figure~\ref{fig:tab} (left) with the horizontal axis in rounds rather than per-agent trajectories, the two being proportional at fixed $\tau$ and batch sizes; we show both because the round axis is the one relevant to communication cost and the trajectory axis the one relevant to sample cost. The ordering of the three methods holds at every round, so the advantage reported in the main text is not an artifact of where training was stopped.

A per-state softmax shares no parameters across states, so a meta-initialization cannot encode a reusable skill and few-shot transfer to held-out agents is meaningful only under function approximation. This is why the few-shot experiment is neural, and why a per-agent specialist dominates any shared initialization in the tabular family; we therefore do not report that comparison there, as it would be uninformative rather than favorable.

\begin{figure}[ht]
\centering
\includegraphics[width=0.62\linewidth]{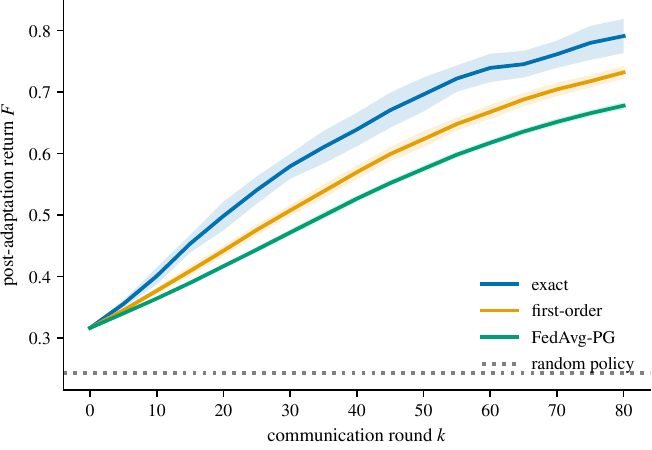}
\caption{\textbf{Learning curves} (tabular, exact evaluation). Post-adaptation return against communication round $k$. All methods improve monotonically and the personalized variants stay ahead of non-personalized \textsc{FedAvg-PG} at every round, so the advantage reported in Figure~\ref{fig:tab} is not an artifact of where training was stopped. This is the same data as Figure~\ref{fig:tab} (left) with the horizontal axis expressed in rounds rather than per-agent trajectories, the two being proportional at fixed $\tau$ and batch sizes.}
\Description{A line chart of post-adaptation return against communication round. Three curves rise monotonically from a common low starting value and never cross: exact highest, first-order just below it, and FedAvg-PG lowest throughout, with shaded seed bands around each.}
\label{fig:e6}
\end{figure}

\subsection{The curvature bottleneck}\label{app:curvfull}
The exact variant's recovery at large $m_{\mathrm h}$ is bimodal across seeds: a subset train to the \textsc{FedAvg-PG} level while the remainder stay collapsed. We therefore report the median together with per-seed points in Figure~\ref{fig:neural} rather than a mean and standard deviation, which would misrepresent a bimodal distribution. The bimodality is itself informative: it indicates that an inadequate curvature batch does not degrade training smoothly but destabilizes it, consistent with the multiplicative role of $\hat H$ in the estimator~\eqref{eq:estimator}.

\subsection{Few-shot personalization in full}\label{app:fewshotfull}
Figure~\ref{fig:success} reports success rates on held-out agents, the companion to the return curve of Figure~\ref{fig:neural} (right), and Figure~\ref{fig:e7} shows what those rates correspond to behaviorally at the level of individual rollouts on an unseen goal.

The analyzed objective adapts with the \emph{deterministic} gradient $\nabla J_i(\theta)$. A one-step update formed from very few trajectories ($20$) is too noisy to approximate it and can reduce performance, whereas $50$--$200$ trajectories recover and then improve on the initialization. This is the finite-sample counterpart of the inner-loop bias in Lemma~\ref{lem:estimator}(ii), which the deterministic-adaptation objective does not model; the debiased objective of~\cite{sgmrl} is the principled remedy.

The initialization used here consumed $K\tau n(m_{\mathrm{in}}+m_{\mathrm{out}})=90{,}000$ meta-training trajectories, that is $15{,}000$ per training agent. The deployment-time comparison in Section~\ref{sec:experiments} counts only adaptation samples and treats meta-training as a one-time cost amortized across the population of agents that later adapt from the initialization; it is favorable only when that population is large enough to absorb that cost.

\begin{figure}[ht]
\centering
\includegraphics[width=0.62\linewidth]{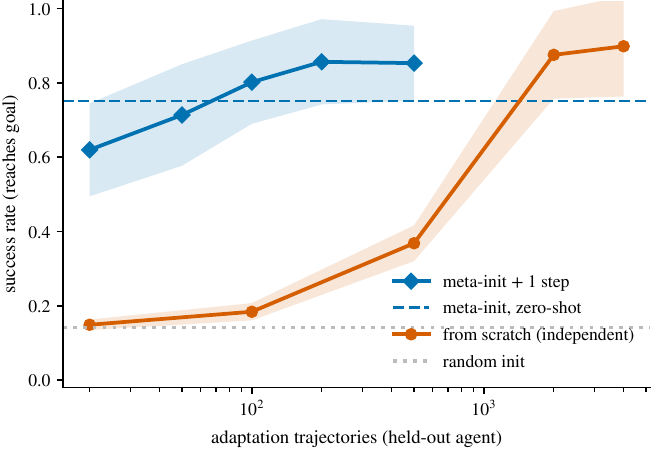}
\caption{\textbf{Success rate on held-out agents} (neural, Monte Carlo evaluation): fraction of held-out episodes that enter the goal region, against the number of trajectories used to form the single adaptation step. Companion to Figure~\ref{fig:neural} (right), which reports return on the same runs.}
\Description{A line chart of success rate on held-out agents against adaptation trajectories on a logarithmic axis. The meta-initialization curve starts around 0.62 at the smallest batch, rises through a horizontal dashed line marking the 0.75 zero-shot level, and flattens near 0.85. The from-scratch curve stays flat near 0.15, level with a dotted line for a random initialization, until the largest budgets, where it climbs steeply to about 0.9.}
\label{fig:success}
\end{figure}

\begin{figure}[ht]
\centering
\includegraphics[width=0.92\linewidth]{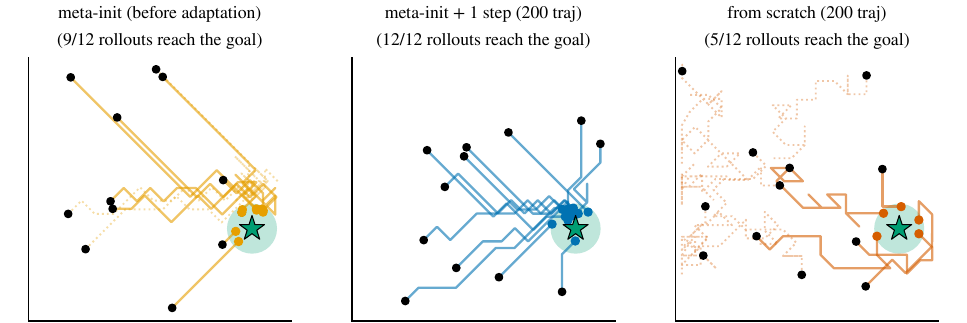}
\caption{\textbf{Held-out rollouts} (neural). Trajectories on an \emph{unseen} goal (green star, shaded goal region; black dots are episode starts), each drawn until it first reaches the goal region (dotted paths never reach; reach counts in the panel titles). Before adaptation the first-order initialization already steers toward the region and reaches it in most rollouts; after a single $200$-trajectory adaptation step every rollout reaches the goal; from-scratch training at the same budget reaches it in fewer than half.}
\Description{Three side-by-side square arenas, each showing twelve rollout paths from black starting dots toward a green star inside a shaded circular goal region on the right. Panel titles give the reach counts. Before adaptation, nine of twelve paths converge on the region and the rest wander. After one 200-trajectory adaptation step, all twelve converge in direct routes. From-scratch training at the same budget reaches the region in only five of twelve, with the remaining paths shown dotted and scattered across the arena.}
\label{fig:e7}
\end{figure}

\subsection{Reproducibility}
Every run is seeded. The released test suite asserts bitwise-identical results for repeated seeds on a fixed machine, and agreement between analytic gradients and finite differences to $5\times10^{-8}$ for the exact tabular gradients and $6\times10^{-7}$ for the neural score gradients.
All figures regenerate from committed result files, so they reproduce exactly without rerunning the stochastic experiments. Code, configurations, seeds and result files are available at \url{https://github.com/AliBeikmohammadi/Per-FedAvg-PG}.

\subsection{Scope of the empirical evidence}\label{app:scope}
The experiments show that the adaptation step contributes measurable value, that $\alpha$ behaves as a continuous personalization parameter with the predicted $\alpha\to0$ limit, that the curvature estimate is the binding practical constraint, and that a MAML initialization transfers to unseen agents in a shared-structure task family. They do not establish scale ($n\le8$ agents, $d=360$), and they include no comparison against alternative personalization mechanisms such as global--local mixing~\cite{zoapfpg,pfopg} or shared representations~\cite{xiong25}; a matched-budget comparison of mechanisms is a separate study. In the tabular family a per-agent specialist beats any shared initialization, so we do not report that comparison: with no parameters shared across states there is nothing for an initialization to transfer, and the comparison would be uninformative rather than favorable.

\section{Limitations and future work}\label{app:limits}
This section states the limitations of the analysis in full and the extensions they motivate. The scope of the empirical evidence is discussed separately in \suppexp.

\emph{(a) Stationarity, not optimality.} The guarantee is first-order stationarity of a nonconcave objective. Second-order guarantees would require combining the analysis with saddle-point escape.

\emph{(b) Full participation.} We analyze full participation throughout. Under uniform client sampling an additional heterogeneity term enters the drift bound, and that term is itself automatically bounded in terms of $\GF$ in the present setting. A control-variate correction in the spirit of~\cite{scaffold} would remove the sampling penalty using server-side controls and no extra uplink, but it modifies the aggregation rule and hence the telescoping argument of Theorem~\ref{thm:main}.

\emph{(c) Inner-loop bias.} The bias is $\mathcal O(1/m_{\mathrm{in}})$ and does not vanish at fixed batch size. Replacing the deterministic-adaptation objective by the debiased stochastic-adaptation objective of SG-MRL~\cite{sgmrl} would remove it by construction, at the cost of a different objective; variance-reduced inner estimators are the route to $\mathcal O(\varepsilon^{-2})$ trajectory complexity (Remark~\ref{rem:sharpcx}).

\emph{(d) Worst-case constants.} The automatic constants $\GJ,\LJ,\rJ$ are uniform worst-case quantities scaling as $(1-\gamma)^{-2}$ and linearly in $H$. They establish that no extra assumption is needed, not that the resulting rates are tight.

\fi

\end{document}